\let\OLDthebibliography\thebibliography
\renewcommand\thebibliography[1]{
  \OLDthebibliography{#1}
  \setlength{\parskip}{0pt}
  \setlength{\itemsep}{0pt}
}
\newcommand{\multiline}[1]{%
  \begin{tabularx}{\dimexpr\linewidth-\ALG@thistlm}[t]{@{}X@{}}
    #1
  \end{tabularx}
}
\g@addto@macro\normalsize{%
  \setlength\abovedisplayskip{2.5pt}
  \setlength\belowdisplayskip{2.5pt}
  \setlength\abovedisplayshortskip{2.5pt}
  \setlength\belowdisplayshortskip{2.5pt}
}
\renewcommand{\algorithmicensure}{\textbf{Output: }} 
\algnewcommand{\LeftComment}[1]{\(\triangleright\) \textit{#1}}
\algnewcommand{\IIf}[1]{\State\algorithmicif\ #1\ \algorithmicthen}
\algnewcommand{\EndIIf}{\unskip\ \algorithmicend\ \algorithmicif}
\algnewcommand{\IfThen}[2]{
  \State \algorithmicif\ #1\ \algorithmicthen\ #2}
\algnewcommand{\IfThenElse}[3]{
  \State \algorithmicif\ #1\ \algorithmicthen\ #2\ \algorithmicelse\ #3}
\algnewcommand{\ElseIfThen}[2]{
  \State \algorithmicelse~\algorithmicif\ #1\ \algorithmicthen\ #2}
\algnewcommand{\ElseLine}[1]{
  \State \algorithmicelse\ #1}
\newcommand{\exparm}{\texttt{ExploitArm}}
\newcommand{\update}[1]{\textcolor{black}{#1}}
\newtheorem{theorem}{Theorem}
\newtheorem{corollary}[theorem]{Corollary}
\newtheorem{lemma}{Lemma}
\newcommand{\E}{\mathbb{E}}
\newcommand{\N}{\mathbb{N}}
\newcommand{\1}[1]{\mathbbm{1}{\left\{#1\right\}}}
\renewcommand{\P}{\mathbb{P}}
\newcommand{\hatbm}[1]{\hat{\bm{#1}}}
\newcommand{\Mod}[1]{\ (\mathrm{mod}\ #1)}
\newcommand{\ceil}[1]{\left\lceil#1\right\rceil}
\newcommand{\floor}[1]{\left\lfloor#1\right\rfloor}
\newcommand{\abs}[1]{\left\lvert#1\right\rvert}
\renewcommand{\ge}{\geqslant}
\renewcommand{\geq}{\geqslant}
\renewcommand{\le}{\leqslant}
\DeclareMathOperator*{\argmin}{arg\,min}
\DeclareMathOperator*{\kl}{kl}
\DeclareMathOperator*{\Oracle}{Oracle}
\DeclareMathOperator*{\ERT}{\mathbb{E}[\text{\normalfont Reg}(T)]}
\title{Multi-Player Multi-Armed Bandits with Finite Shareable Resources Arms: Learning Algorithms \& Applications\footnote{
  To appear at the 31st International Joint Conference on Artificial Intelligence (IJCAI), 2022.
}}
\author{
Xuchuang Wang$^1$
\and
Hong Xie$^2$\and
John C.S. Lui$^{1}$
\affiliations
$^1$Department of Computer Science \& Engineering, 
The Chinese University of Hong Kong\\
$^2$College of Computer Science, Chongqing University, China
\emails
\(^1\)\{xcwang, cslui\}@cse.cuhk.edu.hk,
\(^2\)xiehong2018@foxmail.com
}
\begin{document}

\maketitle

\begin{abstract}
  \update{Multi-player multi-armed bandits (MMAB) study how decentralized players cooperatively play
    the same multi-armed bandit so as to maximize their total cumulative rewards.
    Existing MMAB models mostly assume when more than one player pulls the same arm,
    they either have a collision and obtain zero rewards,
    or have no collision and gain independent rewards,
    both of which are
    usually too restrictive in practical scenarios.}
  In this paper, we propose an MMAB with \emph{shareable resources} as
  an \update{extension} to
  the collision and non-collision settings.
  Each shareable arm has finite shareable resources and a ``per-load'' reward random variable,
  both of which are unknown to players.
  The reward from a shareable arm is equal to the ``per-load'' reward multiplied by the minimum between the number of players pulling the arm and the arm's maximal shareable resources.
  We consider two types of feedback: sharing demand information (SDI) and sharing demand awareness (SDA),
  each of which provides different signals of resource sharing.
  We design the \texttt{DPE-SDI} and \texttt{SIC-SDA} algorithms to address the shareable arm problem under these two cases of feedback respectively
  and prove that both algorithms have logarithmic regrets
  \update{that are tight in the number of rounds}.
  We conduct simulations to validate both algorithms' performance
  and show their utilities in wireless networking and edge computing.
\end{abstract}


\section{Introduction}\label{sec:intro}

Multi-armed bandits (MAB)~\cite{lai_asymptotically_1985} is a canonical sequential decision
making model for studying the
\textit{exploration-exploitation} trade-off.
In a stochastic MAB, a player pulls one arm among \(K\in\N_+\) arms per time slot
and receives this pulled arm's reward
generated by a random variable.
To maximize the total reward
\update{(i.e., minimize \textit{regret} which is the cumulative reward differences between
    the optimal arm and the chosen arms),}
the player needs to
balance between choosing arms with high uncertainty in
their reward means (exploration) and
choosing the empirical good arms so far (exploitation).
Anantharam \textit{et al.}~\shortcite{anantharam_asymptotically_1987} considered the multi-play MAB (MP-MAB) model in which the player can pull \(M\in\{2,\ldots,K-1\}\) arms from \(K\) arms per time slot.

Recently, a decentralized \emph{multi-player} multi-armed bandits (MMAB) model was proposed~\cite{liu_decentralized_2010,anandkumar_distributed_2011}.
Instead of one player choosing \(M\) arms as in MP-MAB,
there are \(M\) players in MMAB, and each player \textit{independently} pulls one arm from \(K\) arms per time slot.
One particular feature of MMAB is how to define the reward when several players choose the same arm at the same time slot.
One typical setting is that all of them get zero reward, i.e., these players experience a \emph{collision},
and this model was motivated by the cognitive radio network application~\cite{jouini_multi-armed_2009}.
Another setting is that each player gets an independent reward from the arm without influencing each other, i.e., the \emph{non-collision} setting,
which models applications in a distributed system~\cite{landgren_distributed_2016}.

However, to model many real world applications,
we need to consider that arms may have \emph{finite shareable resources} (or capacities), e.g.,
every channel (or arm) in a cognitive radio network can support a finite traffic demand, instead of the restrictive collision assumption.
Similarly, an edge computing node (or arm) in a distributed system can only serve a finite number of tasks, and the over-simplified non-collision assumption cannot be applied.
More concretely, consider a cognitive radio network consisting of \(K\) channels (arms) and \(M\) users (players).
Each user chooses one channel to transmit information per time slot.
If the chosen channel is available,
the user can transmit his information.
Some channels with high (low) bandwidth can support
larger (smaller) number of users per time slot.
To maximize the total transmission throughput, \emph{some players can share these high-bandwidth channels}.
But if too many users choose the same channel (i.e., channel competition occurs), then the channel can only transmit information at its maximum capacity.
Another example is, in mobile edge computing,
each edge computing node (arm) may have multiple yet finite computing units (resources), and thus can also be shared by multiple users (players).
Similar scenarios also appear in many distributed systems, online advertisement placements, and many other applications.


To realistically model many of these applications,
we propose the \textit{\underline{M}ulti-Player \underline{M}ulti-\underline{A}rmed \underline{B}andits with Finite \underline{S}hareable Resources \underline{A}rms} (\texttt{MMAB-SA}) model.
It introduces the \emph{shareable} arms setting into MMAB as an \update{extension} to the collision and non-collision formulations.
This setting not only allows several players to share an arm, but their rewards are dependent and limited by the arm's total shareable resources (quasi-collision).
Formally, each arm is associated with both a ``per-load'' reward random variable \(X_k\) and a finite resources capacity \(m_k\in\N_+\).
When \(a_k\) players choose arm \(k\), they acquire a total reward which is equal to \(\min\{a_k, m_k\}X_k\):
if the number of players pulling the arm does not exceed the arm's shareable resource capacity, i.e., \(a_k \le m_k\), then only \(a_k\) out of these \(m_k\) resources are utilized, e.g., one unit of resources will be given to each player; or otherwise, all \(m_k\) resources would be utilized.
The reward \(X_k\) can model whether a channel is available or not, or the instantaneous computing power of an edge node, etc.
Note that both the ``per-load'' reward mean \(\mu_k\coloneqq \E[X_k]\) and the resources \(m_k\) are unknown to players.

When sharing an arm, players not only observe their reward attained from the arm,
but also some addition feedback about the degree of competition.
In some applications, the arm can interact with players.
For example, in a mobile edge computing system, an arm models a computing node.
When serving players, the computing node can give players feedback about its loading condition.
One type of feedback is the number of players who are sharing the arm in the current time slot, i.e., \(a_k\).
We call this kind of feedback as \emph{``\underline{S}haring \underline{D}emand \underline{I}nformation''} (SDI).
For some other applications, an arm can only provide some limited forms of feedback.
Take a cognitive radio network as an example.
Here, an arm corresponding to a wireless channel cannot send its loading information to users.
Users can only sense whether there exists any sharing of the channel
by other players or not, i.e., \(\1{a_k>1}\).
We name this type of feedback as \emph{``\underline{S}haring \underline{D}emand \underline{A}wareness''} (SDA).
Note that the SDI feedback is more informative than SDA, since knowing \(a_k\) determines \(\1{a_k>1}\).
In SDA, on the other hand, each player only needs to procure \(1\)-bit feedback information,
which is easier to realize in practice.


Recently, several algorithms for MMAB were proposed, e.g., in~\cite{rosenski_multi-player_2016,besson_multi-player_2018,boursier_sic-mmab_2019,wang_optimal_2020}.
However, none of them
can address \texttt{MMAB-SA}.
The reason is that the unknown resource capacity of each arm
and the sharing mechanism complicate the decentralized learning problem:
\textbf{(1) learn resources capacities while sharing arms:}
instead of avoiding collisions for maximizing total reward as in MMAB,
players in our setting \emph{often have to} play the same arm not only to gain higher rewards,
but also to infer the appropriate number of players to pull the arm
since it depends on the arm's resources \(m_k\);
\textbf{(2) information extraction while sharing arms:} instead of taking infrequent collisions only as signals in MMAB (e.g.,~\cite{wang_optimal_2020}), players in our setting needs to \emph{extract information} from
the collision events of arm sharing.
None of the known MMAB algorithms can address these two challenges.
For the first challenge, we propose decentralized algorithms that can explore (learn) arm's maximum resources capacities and exploit (share) good arms while addressing the classic exploration-exploitation trade-off.
For the second challenge, we design two types of communications (and their corresponding algorithms)
to extract information from the degree of resource sharing under the SDI and SDA feedback respectively.
Our contributions are as follows:
\begin{itemize}[leftmargin=10pt,topsep=0pt,itemsep=0.5pt,parsep=0pt,partopsep=0pt]
    \item We propose the \texttt{MMAB-SA} model which significantly expands the application scope of MMAB. It allows several players to share an arm with finite resources.
          We propose the SDI and SDA feedback mechanisms, both of which can be mapped nicely to many real world applications.

    \item We design the \emph{\underline{D}ecentralized \underline{P}arsimonious \underline{E}xploration for \underline{SDI}} (\texttt{DPE-SDI}) algorithm and the \emph{\underline{S}ynchronization \underline{I}nvolved \underline{C}ommunication for \underline{SDA}} (\texttt{SIC-SDA}) algorithm to address \texttt{MMAB-SA} under the SDI and SDA feedback respectively.
          In \texttt{MMAB-SA}, apart from estimating arm's reward means \(\mu_k\), players also need to estimate arms' resources capacity \(m_k\).
          This new estimation task and the shareable arm setting requires more information exchange between players than MMAB.
          To address the issue, we devise two different communication protocols for \texttt{DPE-SDI} and \texttt{SIC-SDA} algorithms.
          We also rigorously prove that \textit{both algorithms have logarithmic regrets} that are tight in term of time horizon.


    \item We conduct simulations to validate and compare the performance of \texttt{DPE-SDI} and \texttt{SIC-SDA}, and apply them in edge computing and wireless networking scenarios respectively.
\end{itemize}


\section{Related Work}\label{sec:related_works}

Our work falls into the research line of multi-player multi-armed bandits (MMAB) first studied by~\cite{liu_decentralized_2010,anandkumar_distributed_2011}.
Recently, Rosenski \textit{et al.}~\shortcite{rosenski_multi-player_2016} proposed the musical chair algorithm which can
distribute players to different arms.
Utilizing collisions, Boursier and Perchet~\shortcite{boursier_sic-mmab_2019} introduced a communication protocol between players.
DPE1~\cite{wang_optimal_2020} was the first algorithm achieving the optimal regret bound.
However,
\update{applying above algorithms to \texttt{MMAB-SA} would result in poor performance (i.e., linear regrets)}
because of the two challenges stated in Section~\ref{sec:intro}'s last paragraph (before contributions).
Besides the collision setting,
the non-collision MMAB model was also studied, e.g., in~\cite{landgren_distributed_2016,martinez-rubio_decentralized_2019}.
\update{However, their
    algorithms relied on the independent reward assumption and
    thus were not applicable in our dependent rewards of shareable arm setting.}

There were several variants of MMAB.
One variant considered the heterogenous reward setting in which players face different reward environments in pulling arms, e.g.,~\cite{kalathil_decentralized_2014,bistritz_distributed_2018,boursier_practical_2020,shi2021heterogeneous}.
Another variant of MMAB models took game theoretical results, e.g., Nash equilibrium, into consideration.
For example, Boursier and Perchet~\shortcite{boursier_selsh_2020} considered selfish players who might
deceive other players and race to occupy the best arms for their own good.
Liu \textit{et al.}~\shortcite{liu_competing_2020} and Sankararaman \textit{et al.}~\shortcite{sankararaman2021dominate} studied the two-sided markets in MMAB,
i.e., both arms and players have preference for choosing each other.
\update{Lastly, Magesh and Veeravalli~\shortcite{magesh2021decentralized} considered a variant that
    when the number of players selecting an arm exceeds a threshold, all player receive zero reward,
    while our model has no such threshold.
    This is
    an important difference because this threshold can be utilized to
    communicate and coordinate.}
{Different from above variants, our model introduces finite shareable resources arms into MMAB.}


\section{Model Formulation}\label{sec:model}

\subsection{The Decision Model}\label{subsec:model_1}

Consider $K\!\in\!\N_+$ arms and $M\!\in\!\N_+\,(M\!<\!K)$ players\footnote{The condition \(M<K\) can be relaxed to \(M<\sum_{k=1}^K m_k\) in \texttt{DPE-SDI} algorithm. We discuss this in Appendix~\ref{appsub:dpe_sdi_init}.}.
The arm \(k\!\in\! [K]\!\coloneqq \!\{1,2,\ldots, K\}\) is associated with $(m_k, X_k)$,
where $m_k \!\in \!\N_+$ \((m_k \!\le\! M)\) denotes its maximal shareable resources capacity, while
$X_k$ is a random variable with support in \([0,1]\) to model the arm's ``per-load'' stochastic reward.
Denote the reward mean as $\mu_k \!\coloneqq\! \mathbb{E}[X_k]$.
Without loss of generality, assume these reward means have a descending order \(\mu_1\! >\! \mu_2 \!>\! \ldots \!>\! \mu_K\).
Reward means $\mu_k$ (including their order) and resource capacities $m_k$ are all unknown to players.

Consider a finite time horizon with length $T\in\N_+$.
In each time slot $t\in \{1,2,\ldots, T\}$,
each player $i \in \{1,2,\ldots, M\}$ selects one arm to pull.
Let \(\bm{a}_t \coloneqq ( a_{1,t}, a_{2,t},\ldots, a_{K,t} ) \) denote the assignment profile of players,
where \(a_{k,t}\) denotes the number of players pulling arm $k$ at time slot $t$.
The assignment profile satisfies $\sum_{k=1}^K a_{k,t} = M$, capturing that
no players are idle in any given time slots.
%
When $a_{k,t}$ players share the arm $k$ at time slots $t$, the total reward they obtain from arm \(k\) is
\[R_{k,t}\coloneqq \min\{a_{k,t}, m_k\} X_{k,t},\]
where the scaler factor $\min\{a_{k,t}, m_k\}$ describes the amount of resources of arm \(k\) utilized by $a_{k,t}$ players.
In other words, when \(a_{k,t} \le m_k\), they utilize \(a_{k,t}\) resources, e.g.,
each player can enjoy one unit of resources;
while if \(a_{k,t} > m_k\), all \(m_k\) resources of the arm \(k\) are shared by \(a_{k,t}\) players.
We focus on maximizing all players' total reward and how the reward \(R_{k,t}\) distributed among \(a_{k,t}\) players is not of interest in this work.

The expected reward of an assignment profile $\bm{a}_t$ is the summation of each arms' reward, which can be expressed as
\[
    \begin{split}
        f(\bm{a}_t) = \E\Big[ \sum\nolimits_{k=1}^K R_{k,t} \Big]
        = \sum\nolimits_{k=1}^K \min\{a_{k,t}, m_k\}\mu_k.
    \end{split}
\]
Hence, \emph{the optimal assignment profile} \(\bm{a}^*\) for maximizing per time slot's reward would be exactly $m_1$ number of players choosing arm $1$, $m_2$ players choosing arm \(2\), and so on, until all players are assigned. This profile can be expressed as
\[
    \bm{a}^*\coloneqq \Big(m_1, m_2, \ldots, m_{L-1}, M - \sum\nolimits_{k=1}^{L-1}m_k, 0, \ldots, 0\Big),
\]
where $L \coloneqq \min\{l: \sum_{k=1}^l m_k \ge M\}$ denotes \emph{the least favored arm index} among the optimal assignment profile.
We call selected arms in the optimal profile \(\bm{a}^*\) as \textit{optimal arms}, and the remaining as \emph{sub-optimal arms}.
Note that $\bm{a}^*$ is unknown to players
because $\mu_k$ and $m_k$ are unknown.

\subsection{Online Learning Problem}\label{subsec:feedback}

When pulling arm $k$, a player not only observes arm \(k\)'s reward, but also some additional feedback
on the degree of competition on this arm.
We consider two types of feedback:
\textbf{Sharing demand information (SDI):} at time slot \(t\), players who pull arm $k$
can also observe the number of players \(a_{k,t}\) that selects the arm $k$;
\textbf{Sharing demand awareness (SDA):} at time slot \(t\), players who pull arm $k$ only know
whether the arm is shared by others or not, i.e., \(\1{a_{k,t} > 1}\).
We note that the SDI feedback is more \update{informative} than SDA because knowing \(a_{k,t}\) directly implies \(\1{a_{k,t} > 1}\).
But SDA is easier to implement
since the player only needs to procure 1-bit information per time.
Note that players cannot freely communicate with each other
and they can only infer the environment from
feedback.

We aim to design decentralized algorithms for players to select arms
and our objective is to maximize the total reward of all players.
In each time slot, the algorithm prescribes an arm for each player given the player's feedback up to time slot $t$, which together forms an assignment profile denoted by \(\bm{a}_t\).
We define regret to quantify the performance of an algorithm when comparing with the optimal profile \(\bm{a}^*\),
\[
    \ERT = \sum\nolimits_{t=1}^T \left(f(\bm{a}^*) - f(\bm{a}_t)\right).
\]
A smaller regret implies that an algorithm achieves a larger reward,
or a reward which is closer to the optimal profile's.


\section{Decentralized Parsimonious Exploration for the SDI Feedback Algorithm}

We begin with the high-level idea of
the \texttt{DPE-SDI} algorithm,
and then present the detailed design of its several phases.
Finally, we show \texttt{DPE-SDI} has a $O(\log(T))$ sub-linear regret.

\subsection{Overview of the Design}

\texttt{DPE-SDI} employs a leader-follower structure:
one player is the leader, and the rest \(M-1\) players are followers.
The leader takes the responsibility of collecting observations (both rewards and SDI feedback) itself
and updates its information (e.g., empirical optimal arms) to followers.
Followers do not need to communicate anything to the leader,
i.e., they only receive information.
\texttt{DPE-SDI} consists of three phases:
\textit{initialization phase},
\textit{exploration-exploitation phase}
and \textit{communication phase}.
The \textit{initialization phase} (line \ref{algo:SDI-init})
detects the number of players
\(M\) and assigns ranks to players.
The player with rank \(1\) becomes the leader.
After the \textit{initialization phase},
\texttt{DPE-SDI} runs the \textit{exploration-exploitation phase} repeatedly
and, when necessary,
runs the \textit{communication phase} in which the leader
updates followers' information.
The \textit{exploration-exploitation phase} (line \ref{algo:SDI-exp})
conducts \emph{exploitation} such that the followers exploit empirical optimal arms,
and three types of explorations:
(1) \emph{parsimonious exploration}, where the leader parsimoniously explores empirical sub-optimal arms;
(2) \emph{individual exploration}, where the leader individually explores empirical optimal arms; and
(3) \emph{united exploration}, where all players pull the same arms
for estimating their maximum resources capacities \(m_k\).
When it ends, the leader updates its estimated parameters
(line \ref{algo:SDI-update} in Algorithm~\ref{alg:dpe_sdi}).
In \textit{communication phase},
the leader
sends his updated parameters to followers (line \ref{algo:SDI-CommSend})
and followers receive them (line \ref{algo:SDI-CommRecv}).
After communication, the algorithm goes back to the \textit{exploration-exploitation phase} (line~\ref{algo:SDI-exp}).
Algorithm \ref{alg:dpe_sdi} outlines the \texttt{DPE-SDI} algorithm.

\noindent\textbf{Notations.}
We use \(i\) to denote each player's rank.
Denote the empirical optimal arm set as \(\mathcal{S}_t\),
the empirical least favored arm index as \(\mathcal{L}_t\),
and the parsimonious exploration arm set as \(\mathcal{E}_t\) (define later).
We also denote the lower and upper confidence bounds of arms' shareable resources as \(\bm{m}_t^l \coloneqq (m_{1,t}^l,\ldots, m_{K, t}^l)\) and \(\bm{m}_t^u \coloneqq (m_{1,t}^u,\ldots, m_{K, t}^u)\).
We summarize the information that leader sends to followers as
\(\bm{\Upsilon}_t \coloneqq  (\mathcal{S}_t, \mathcal{L}_t, \bm{m}^l_t, \bm{m}^u_t) \).
Arms' useful statistics (e.g., empirical means) are aggregated as \(\bm{\Lambda}_t\) (define later).

\begin{algorithm}[htp]
    \caption{\texttt{DPE-SDI}}
    \label{alg:dpe_sdi}
    \begin{algorithmic}[1]
        \State Initialization: \(
        \mathcal{E}_t \gets [K],
        \bm{\Upsilon}_t = (\mathcal{S}_t, \mathcal{L}_t, \bm{m}^l_t, \bm{m}^u_t) \gets
        ([K], 1, (1,\ldots,1)^T, (K,\ldots,K)^T)\), and \(\bm{\Lambda}_t\gets \emptyset\).
        \Statex \LeftComment{Initialization phase}
        \State \((i,M)\gets\texttt{DPE-SDI.Init}()\)
        \label{algo:SDI-init}
        \While{\(t\le T\)}
        \Statex \LeftComment{Exploration-exploitation phase}
        \State
        \(\bm{\Lambda}_t \gets\texttt{DPE-SDI.Explo}(i,M, \mathcal{E}_t, \bm{\Upsilon}_t, \bm{\Lambda}_t)\)
        \label{algo:SDI-exp}
        \State \(\bm{\Upsilon}_{\text{pre}} \gets \bm{\Upsilon}_t\) \Comment{\textit{Record previous info.}}
        \Statex \LeftComment{Communication phase}
        \Statex \LeftComment{Leader performs update and sends info. to followers.}
        \If{\(i=1\)}
        \State \((\mathcal{E}_t, \bm{\Upsilon}_t) \gets\texttt{DPE-SDI.Update}(\bm{\Lambda}_t, \bm{\Upsilon}_t)\)
        \label{algo:SDI-update}
        \If{\(\bm{\Upsilon}_t \neq \bm{\Upsilon}_{\text{pre}}\)}
        \State \texttt{DPE-SDI.CommSend}\((\bm{\Upsilon}_t, \bm{\Upsilon}_{\text{pre}})\)
        \label{algo:SDI-CommSend}
        \EndIf
        \Statex \LeftComment{Followers receive leader's update info.}
        \ElsIf{\(i\neq 1\) and leader informs}
        \State \(\bm{\Upsilon}_t \gets\texttt{DPE-SDI.CommRece}(\bm{\Upsilon}_{\text{pre}})\)
        \label{algo:SDI-CommRecv}
        \EndIf
        \EndWhile
    \end{algorithmic}
\end{algorithm}

\subsection{Initialization Phase}
\label{sec:SDI-iniPhase}

Our initialization phase, consisting of two steps: \textit{rally} and \textit{orthogonalization},
is simpler and more effective than the previous ones in MMAB, e.g.,~\cite{wang_optimal_2020},
due to the advantage of the SDI feedback.
The feedback of the rally step --- all players pull arm \(1\) at time slot \(1\) ---
is equal to the number of players \(M\).
Knowing \(M\), we then orthogonalize \(M\) players to \(M\) arms (i.e., each player individual chooses an arm) and the player's orthogonalized arm index is set as its rank.
We provide the phase's details in Appendix~\ref{appsub:dpe_sdi_init}.

\subsection{Exploration-Exploitation Phase}
\label{sec:SDI-Exploration}

The exploration-exploitation phase
consists of exploitation (followers), individual exploration (leader), parsimonious exploration (leader), and united exploration (all players).
Exploitation and parsimonious/individual exploration are done in parallel,
in which followers exploit the empirical optimal arms while the leader at times deviates to explore empirical sub-optimal ones.
Denote \(\hatbm{a}^*_t \coloneqq (\hat{a}_{1,t}^*, \ldots, \hat{a}_{K,t}^*)\) as the empirical optimal assignment profile at time \(t\).
Given \(\hatbm{a}^*_t\), the empirical optimal arm set \(\mathcal{S}_t\) is \(\{k: \hat{a}_{k,t}^* > 0\}\).
Note that the leader does not need to transmit the profile \(\hatbm{a}_{t}^*\) to followers,
because followers can \textbf{recover \(\hatbm{a}^*_t\) from \(\mathcal{S}_t, \mathcal{L}_t, \bm{m}^l_t\)} (Algorithm~\ref{alg:dpe_sdi_explo}'s line~\ref{alg:dpe_sdi_recover}):
\(\hat{a}_{k,t}^*\) is equal to \(m_{k,t}^l\) when \(k\) is in \(\mathcal{S}_t\) but not equal to \(\mathcal{L}_t\),
and the \(\hat{a}_{\mathcal{L}_t,t}^*\)
is equal to \(M - \sum_{k\in\mathcal{S}_t, k\neq \mathcal{L}_t} m_{k,t}^l\).
Next, we illustrate the phase's four components respectively.

\noindent
\textbf{Exploitation.} Players exploit arms according to the empirical optimal profile \(\hatbm{a}^*_t\), and they play the empirical optimal arms in turn.
To illustrate, we take \((i+t)\Mod{M}\) as their circulating ranks for choosing arms in time slot \(t\).
Note that several players may share the same arm, e.g., when \(\hat{a}^*_{k,t} > 1\) for some \(k\).
To realize this, we apply \textbf{a rotation rule} (Algorithm~\ref{alg:dpe_sdi_explo}'s line~\ref{alg:dpe_sdi_rotate}):
at time slot \(t\), player with rank \(i\) plays arm \(j\) such that \(\sum_{n=1}^{j} \hat{a}_{n,t}^* \ge (i+t)\Mod{M} > \sum_{n=1}^{j-1} \hat{a}_{n,t}^*\). That means players with a circulating rank greater than \(\sum_{n=1}^{j-1} \hat{a}_{n,t}^*\) and no greater than \(\sum_{n=1}^{j} \hat{a}_{n,t}^*\) choose arm \(j\).

\noindent
\textbf{Individual exploration.}
Individual exploration (IE) happens when the explored arm's resource capacity is not exceeded by the number of players pulling the arm, i.e, \(a_{k,t} \le m_k\).
The simplest case of IE is that an arm is played by one player,
but it also includes cases that several players share an arm.
The leader's actions during the exploitation and parsimonious exploration can all be regarded as IEs because \({a}_{k,t} \le m_{k,t}^l\).
Divided by \(a_{k,t}\), IE's rewards can be used to estimate the reward mean \(\hat{\mu}_{k,t} \coloneqq {S}_{k,t}^{\text{IE}}/\tau_{k,t}\), where \({S}_{k,t}^{\text{IE}}\) is the total IE's reward feedback (divided by \(a_{k,t}\)) up to time slot \(t\), and \(\tau_{k,t}\) is the total times of IE for arm \(k\) up to time slot \(t\).

\noindent
\textbf{Parsimonious exploration.}
Denote $u_{k,t}$ as the KL-UCB index of arm \(k\) at time slot \(t\)~\cite{cappe_kullback-leibler_2013}:
\(
u_{k,t} = \sup\{q\ge 0: \tau_{k,t}\kl(\hat{\mu}_{k,t}, q)\le f(t)\},
\)
where $f(t) = \log (t) + 4 \log\log (t)$.
The parsimonious exploration arm set \(\mathcal{E}_t\) consists of empirical sub-optimal arms whose KL-UCB indexes are larger than the least favored arm \(\mathcal{L}_t\)'s empirical mean, i.e., \(\mathcal{E}_t\coloneqq \{k: u_{k,t} \ge \hat{\mu}_{\mathcal{L}_t,t}, \hat{a}^*_{k,t} = 0\}.\)
When the leader is supposed to play the least favored arm \(\mathcal{L}_t\) for exploitation (also IE), with a probability of \(0.5\),
he will uniformly choose an arm from \(\mathcal{E}_t\) to explore; or otherwise, the leader exploits the arm \(\mathcal{L}_t\).
This parsimonious exploration idea was first made in~\cite{combes_learning_2015} for learning-to-rank algorithms and was further utilized by~\cite{wang_optimal_2020} in MMAB.

\noindent
\textbf{United exploration.}
United exploration (UE) refers to all \(M\) players rallying on an arm so as to acquire a sample of \(m_kX_k\), i.e., the reward when the arm's resources are fully utilized.
Collecting this kind of observation is crucial in estimating an arm's shareable resources capacity \(m_k\)  (see Appendix~\ref{appsub:dpe_sdi_leader_update}).
Let \(\mathcal{S}'_t \coloneqq \{k\in \mathcal{S}_t:m_{k,t}^{l} \neq m_{k,t}^{u}\}\) denote the set of empirical optimal arms whose resource capacities have not been learnt.
In one round of exploration-exploitation phase, all \(M\) players unitedly explore each arm in \(\mathcal{S}'_t\) once.
Denote \(S_{k,t}^{\text{UE}}\) as the total reward of arm \(k\) in UE up to time \(t\), \(\iota_{k,t}\) as the total times of UE for arm \(k\) up to time \(t\), and thus the empirical mean estimate of \(\E[m_kX_k]\) is \(\hat{\nu}_{k,t}\coloneqq S_{k,t}^{\text{UE}} / \iota_{k,t}\).
The output statistics \(\bm{\Lambda}_t\) consists of
each arm's IE's total reward
\(\bm{S}^{\text{IE}}_t \coloneqq (S^{\text{IE}}_{1,t},\dots,S^{\text{IE}}_{K,t})\),
IE times \(\bm{\tau}_t\coloneqq(\tau_{1,t},\dots, \tau_{K,t})\),
and their UE counterparts: \(\bm{S}^{\text{UE}}_t \coloneqq (S^{\text{UE}}_{1,t},\dots,S^{\text{UE}}_{K,t}), \bm{\iota}_t\coloneqq(\iota_{1,t},\dots, \iota_{K,t})\).

\begin{algorithm}[htp]
    \caption{\texttt{DPE-SDI.Explo}\((i,M, \mathcal{E}_t, \bm{\Upsilon}_t, \bm{\Lambda}_t)\)}
    \label{alg:dpe_sdi_explo}
    \begin{algorithmic}[1]
        \State \algorithmicensure \(\bm{\Lambda}_t \coloneqq (\bm{S}^{\text{IE}}_t, \bm{S}^{\text{UE}}_t, \bm{\tau}_t, \bm{\iota}_t)\)
        \State \(\hatbm{a}_t^* \gets \text{Recover}(\mathcal{S}_t, \mathcal{L}_t, \bm{m}_t^l)\) \label{alg:dpe_sdi_recover}
        \For{\(M\) times}
        \State \(j_t\gets \text{Rotate}(i,M,\hatbm{a}_t^*,t)\) \label{alg:dpe_sdi_rotate}
        \Statex \LeftComment{Parsimonious exploration by leader}
        \If{\(i=1\)}
        \If{\(\mathcal{E}_t \neq \emptyset\) and \(j_t = \mathcal{L}_t\)}
        \State{w.p. $1/2$,  Play arm \(l\sim\mathcal{E}_t\) uniformly}
        \State{w.p. $1/2$,  Play arm \(j_t\)}
        \Statex \LeftComment{Individual exploration by leader}
        \EndIf
        \ElseLine{Play arm \(j_t\)}
        \State \(S^{\text{IE}}_{k,t} \gets S^{\text{IE}}_{k,t}  + {r_{k,t}}/{\hat{a}^*_{k,t}}, \tau_{k,t}\gets \tau_{k,t} + 1\)
        \Statex \LeftComment{Exploitation by followers}
        \EndIf
        \ElseIfThen{\(i\neq 1\)}{Play arm \(j_t\)}

        \State \(t\gets t+1\).
        \EndFor
        \Statex \LeftComment{United exploration by all players}
        \State \(\mathcal{S}'_t\gets \{k\in\mathcal{S}_t: m_{k,t}^{l} \neq m_{k,t}^{u}\}.\)
        \For{\(k\in\mathcal{S}'_t\)}
        \State All players pull arm \(k.\)
        \IfThen{\(i=1\)}{\(S^{\text{UE}}_{k,t} \gets S^{\text{UE}}_{k,t}  + r_{k,t}\); \(\iota_{k,t} \gets \iota_{k,t} + 1\)}
        \State \(t\gets t+1\)
        \EndFor
    \end{algorithmic}
\end{algorithm}

Algorithm~\ref{alg:dpe_sdi_explo} outlines the exploration-exploitation phase.
After the phase, the leader updates \(\bm{\Upsilon}_t\) via \texttt{DPE-SDI.Update}.
Its detailed procedure is deferred to Appendix~\ref{appsub:dpe_sdi_leader_update}.

\subsection{Communication Phase}\label{subsec:dpe_sdi_comm}

After updating \(\bm{\Upsilon}_t=(\mathcal{S}_t, \mathcal{L}_t,\bm{m}^{l}_t, \bm{m}^{u}_t)\),
if there are parameter changes,
leader will initialize a round of communication to update followers' \(\bm{\Upsilon}_t\).
For simplicity, we illustrate the communication as six steps.
Some potential improvements of the phase
are discussed in Appendix~\ref{appsub:dpe_sdi_comm}.
\begin{itemize}[leftmargin=15pt,topsep=0pt,itemsep=0.5pt,parsep=0pt,partopsep=0pt]
    \item Initial step: leader signals followers to communicate.
    \item 2nd step: notify followers arms to be removed from \(\mathcal{S}_t\).
    \item 3rd step: notify followers arms to be added in \(\mathcal{S}_t\).
    \item 4th step: notify followers the least favored arm \(\mathcal{L}_t\).
    \item 5th step: update followers' resources' lower bounds \(\bm{m}^l_t\).
    \item 6th step: update followers' resources' upper bounds \(\bm{m}^u_t\).
\end{itemize}
\noindent
\textbf{Initial step.}
When the leader needs to update followers' information,
he stops at the arm \(\hat{k}^*_t\) with the highest empirical mean in subsequent \(M\) time slots.
Meanwhile, followers take turns to pull arms of \(\mathcal{S}_t\) (exploitation).
Because the leader doesn't follow the rotation rule and keeps pulling arm \(\hat{k}^*_t\).
When followers playing arm \(\hat{k}^*_t\),
they would observe \(a_{\hat{k}^*_t,t}\! >\! \hat{a}^*_{\hat{k}^*_t,t}\),
which violates the rule,
and realize a communication starts.

\noindent
\textbf{2nd-6th steps.}
Each of the remaining 5 steps consists of exactly \(K\) time slots.
In each step,
followers synchronize to play the same arms and rotate over all \(K\) arms once.
When these followers select an arm \(k\), the number of players sharing arm \(k\) (SDI feedback) is \(M-1\).
When the leader wants to update information related to arm \(k\),
he will choose arm \(k\) as followers play the arm.
Then all followers take the SDI feedback becoming \(M\) as a signal for updating arm \(k\)'s related information.
The detailed updates of these steps (including conditions and pseudo-codes) are presented in Appendix~\ref{appsub:dpe_sdi_comm}.

\subsection{Regret Bound of \texttt{DPE-SDI}}

In the following theorem,
we state a regret upper bound for \texttt{DPE-SDI} algorithm.
Its proof is presented in Appendix~\ref{app:dpe_sdi_regret}.
\vspace{-3pt}
\begin{theorem}
\label{thm:dpe_sdi_regret}
For any given set of parameters \(K, M, \bm{\mu}, \bm{m}\) and \(0<\delta<\min_{1\le k \le K-1}(\mu_k - \mu_{k+1})/2\), the regret of \emph{\texttt{DPE-SDI}} is upper bounded as follows:
\begingroup
\allowdisplaybreaks
\begin{align}
    \ERT \le & \!\sum\nolimits_{k=L+1}^K\! \frac{(\mu_L -\mu_k)(\log T + 4 \log(\log T))}{\kl(\mu_k+\delta,\mu_L-\delta)} \nonumber \\
             & + \sum\nolimits_{k=1}^M {49w_km_k^2}{\mu_k^{-2}}\log T    \label{eq:regret:SDI}                                      \\
             & +\! 588KMm_M^2 \mu_M^{-2}\!\log T \!+ \!156M^3K^3(4\!+\!\delta^2) \nonumber
\end{align}
\endgroup
where \(w_k\coloneqq f(\bm{a}^*) + \mu_1 - (m_k+1)\mu_k\) is the highest cost of one round of IE and UE.
\end{theorem}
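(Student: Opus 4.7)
My plan is to decompose the regret by the three phases of \texttt{DPE-SDI} and then, within the exploration-exploitation phase, by the type of action: exploitation by followers, individual exploration by the leader, parsimonious exploration by the leader, and united exploration by all players. The initialization phase occupies only a constant number of rounds, and its cost will be absorbed into the additive \(156 M^3 K^3 (4+\delta^2)\) term together with the probability-of-failure contribution described next. I will introduce a ``good event'' \(\mathcal{G}\) on which (i) the KL-UCB confidence intervals for every \(\mu_k\) hold simultaneously for all \(t\le T\), and (ii) the empirical estimates \(\hat{\nu}_{k,t}\) of \(\E[m_k X_k]\) concentrate around their means for every optimal arm \(k\). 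A union-bound/Hoeffding argument will bound \(\P(\mathcal{G}^c)\) so that its contribution to \(\ERT\) is \(O(M^3 K^3)\), filling in the last additive term.

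On \(\mathcal{G}\), the first summand of the bound will follow the classical KL-UCB analysis of Capp\'e \emph{et al.}\ adapted to the leader-follower structure, as in Wang \emph{et al.}~(2020). The key observation is that exploitation by followers, together with the leader's individual exploration of empirical optimal arms, incurs zero instantaneous regret whenever \(\mathcal{S}_t\) coincides with the true optimal set, so the only source of sub-optimal-arm regret is the leader's parsimonious deviation to some \(k\notin\mathcal{S}_t\). Each such deviation costs at most \(\mu_L-\mu_k\), and the standard KL-UCB argument upper bounds the expected number of deviations on arm \(k\) by \((\log T + 4\log\log T)/\kl(\mu_k+\delta,\mu_L-\delta)\) up to lower-order terms. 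Summing over \(k>L\) yields the first term.

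The second summand will come from united exploration. Since a UE round supplies a sample of \(m_k X_k\) rather than \(X_k\), Hoeffding gives a confidence width of order \(m_k\sqrt{\log T/\iota_{k,t}}\) for \(\hat{\nu}_{k,t}\); distinguishing the integer \(m_k\) from \(m_k\pm 1\) (a gap of \(\mu_k\) in mean) therefore requires \(\iota_{k,T}=\Theta(m_k^2 \mu_k^{-2}\log T)\) rounds per arm. Each UE round pins all \(M\) players to one arm \(k\) and incurs instantaneous regret at most \(w_k=f(\bm{a}^*)+\mu_1-(m_k+1)\mu_k\), so summing \(w_k\iota_{k,T}\) over the at most \(M\) arms that ever enter \(\mathcal{S}'_t\) produces \(\sum_{k=1}^{M} 49 w_k m_k^2 \mu_k^{-2}\log T\) once constants in the Chernoff step are tracked.

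The third summand absorbs the communication cost. I will argue that \(\bm{\Upsilon}_t\) changes at most \(O(M m_M^2 \mu_M^{-2}\log T)\) times in expectation on \(\mathcal{G}\): each change is either a resource-capacity refinement (bounded by the UE count above) or a swap in \(\mathcal{S}_t\) (bounded by the KL-UCB count for the affected arm), and the tightest rate is set by the weakest optimal arm \(\mathcal{L}\). Each change triggers a six-step communication phase of at most \(6K\) rounds, each with per-round regret bounded by the worst deviation cost in the same family, producing \(588 K M m_M^2 \mu_M^{-2}\log T\) after loosening constants. The main technical obstacle I anticipate is coordinating these pieces while preventing a mis-ordering of optimal arms from causing followers to exploit an incorrect assignment profile for an arbitrarily long stretch before the next communication is triggered; addressing this forces the concentration arguments for both \(\hat{\mu}_{k,t}\) and \(\hat{\nu}_{k,t}\) to hold uniformly over \(t\) rather than pointwise, and is where the bulk of the proof's care will go.
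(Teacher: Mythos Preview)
Your decomposition by phases and your attribution of the four terms in the bound (parsimonious exploration $\to$ first sum, united exploration $\to$ second sum, communication $\to$ third term, initialization plus residual bad events $\to$ additive constant) match the paper exactly. Where you diverge is in the mechanism for controlling the ``bad period'' during which followers exploit an incorrect profile $\hat{\bm a}^*_{n(t)}\neq\bm a^*$.

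The paper does \emph{not} condition on a single good event. Following Wang et al.\ (2020) and Combes--Proutiere (2015), it defines round sets $\mathcal{A}=\{t:\hat{\bm a}^*_{n(t)}\neq\bm a^*\}$, $\mathcal{B}$ (some arm in the empirical optimal set has $|\hat\mu_{k,n(t)}-\mu_k|\ge\delta$), $\mathcal{C}$ (some optimal arm's KL-UCB index drops below its mean), $\mathcal{D}$ (a true optimal arm excluded from $\mathcal{S}_t$ has bad empirical mean), shows $\mathcal{A}\cup\mathcal{B}\subseteq\mathcal{B}\cup\mathcal{C}\cup\mathcal{D}$, and then bounds $\E[|\mathcal{B}|],\E[|\mathcal{C}|],\E[|\mathcal{D}|]$ \emph{round by round} via the empirical-mean lemma: if arm $k$ is pulled with probability at least $c$ whenever $t$ lies in a predictable set $H$, then $\sum_t\P[t\in H,\,|\hat\mu_{k,t}-\mu_k|\ge\delta]\le 2c^{-1}(2c^{-1}+\delta^{-2})$. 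This yields a \emph{$T$-independent} bound $\E[|\mathcal{A}\cup\mathcal{B}|]\le 6K^2M^2(4+\delta^{-2})$, which after multiplying by the per-round cost $M$ and feeding the same $|\mathcal{A}|$ into the communication count produces the additive $156M^3K^3(4+\delta^{-2})$ term.

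Your good-event route does not obviously reproduce this. Uniform-in-$t$ concentration of the form $|\hat\mu_{k,t}-\mu_k|\le c\sqrt{\log T/\tau_{k,t}}$ only tells you that $\mathcal{S}_t$ is correct once every relevant arm has $\Theta(\delta^{-2}\log T)$ samples; since the leader parsimoniously explores an excluded arm at rate $\Theta(1/(MK))$, the bad period you would derive on $\mathcal{G}$ is $O(MK\delta^{-2}\log T)$ rounds, not $O(1)$. You would still get a logarithmic regret, but the additive constant acquires an extra $\log T$ factor and no longer matches the stated theorem. The empirical-mean lemma sidesteps this by bounding $\sum_t\P[\text{bad at }t]$ directly rather than passing through a sample-complexity argument; that device is the specific ingredient your plan is missing. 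Your instinct that this is ``where the bulk of the proof's care will go'' is right, but the care required is a different technique than uniform concentration.

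A smaller point: ``zero instantaneous regret when $\mathcal{S}_t$ coincides with the true optimal set'' also requires $\bm m^l_t=\bm m$ on $\mathcal{S}_t$, since $\hat{\bm a}^*_t$ is built from $(\mathcal{S}_t,\mathcal{L}_t,\bm m^l_t)$. The paper handles this cleanly by first proving everything with $m_k$ known (so UE is absent and $\mathcal{S}_t=\mathcal{S}^*$ implies $\hat{\bm a}^*_t=\bm a^*$), and only afterwards adding the UE cost $\sum_{k\le M}49w_km_k^2\mu_k^{-2}\log T$ via the capacity confidence interval; this is simpler than folding $\hat\nu$ concentration into the same good event.
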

\vspace{-3pt}
Eq.(\ref{eq:regret:SDI})'s first two terms
correspond to parsimonious exploration and united exploration of the exploration-exploitation phase respectively.
The last two terms cover the cost of initialization phase, communication phase, exploiting sub-optimal arms, and falsely estimating resources.
\update{Note that \texttt{DPE-SDI}'s \(O(\log T)\) regret is tight in term of \(T\), as it matches
    MMAB's regret lower bound \(\Omega(\log T)\)~\cite{anantharam_asymptotically_1987}.}

\vspace{-6pt}
\section{Synchronization Involved Communication for the SDA Feedback Algorithm} \label{sec:sic_sda}
\vspace{-5pt}





Recall that in \texttt{DPE-SDI.CommSend}'s initial step,
the leader signals followers to start communication
via abnormally deviating the empirical optimal assignment \(\hatbm{a}_t^*\).
That is, leader sticks to an arm
such that the number of players (leader pluses followers) pulling the arm \(a_{k,t}\)
is greater than the expected.
While under the SDI feedback followers can sense the increase of \(a_{k,t}\),
in the SDA's \(\1{a_{k,t} \!>\! 1}= 1\) feedback followers may fail.
Not being able to sense the difference makes the communication protocol devised for \texttt{DPE-SDI}
not applicable to the SDA feedback.
To address this issue, we propose the \texttt{SIC-SDA} algorithm with a phase-based communication protocol.
Because it is phase-based, communications start and end at a pre-defined scheme and
do not need initial steps.

The \texttt{SIC-SDA} algorithm consists of four phases: \emph{initialization phase,} \emph{exploration phase,}
\emph{communication phase,} and
\emph{exploitation phase.}
The objective of \emph{initialization phase} is to probe the total number of players \(M\) and assign ranks to players.
After the \emph{initialization phase}, \texttt{SIC-SDA} runs the \emph{exploration phase} and \emph{communication phase}
iteratively (i.e., in a loop).
In the loop, whenever an optimal arm is identified with high confidence,
\texttt{SIC-SDA} will assign players to exploit the arm's resources.
In other words, these players leave the loop and execute \emph{exploitation phase} so to zoom in on the optimal arm.
As the algorithm runs, players in the two-phase loop will gradually enter the \emph{exploitation phase}.
Due the limited space, we defer the \texttt{SIC-SDA} algorithm's detail to Appendix~\ref{app:sic_sda}.

Lastly, we state that the \texttt{SIC-SDA} algorithm enjoys a \(O(\log T)\) regret in Theorem~\ref{thm:sic_sda_regret}. Its proof is in Appendix~\ref{app:sic_sda_regret}.
\vspace{-10pt}
\begin{theorem}
    \label{thm:sic_sda_regret}
    For any given set of parameters \(K, M, \bm{\mu}, \bm{m}\), the regret of \emph{\texttt{SIC-SDA}} is upper bounded as follows:
    \begin{align}
        \ERT\! \le       & c_1\!\sum\nolimits_{k=L+1}^K \frac{M \log T}{\mu_L-\mu_k} \! +  \!c_2\!\sum\nolimits_{k=1}^M \!\frac{m_k^2}{\mu_k^2}\log T \nonumber          \\
        + c_3 K^2 \log^2 & \!\left(\!\max \!\left\{\!\frac{\log (T)}{(\mu_L - \mu_{L+1})^2}, \frac{m_M^2\log(T)}{\mu_M^2} \!\right\}\!\right), \label{eq:sic_sda_regret}
    \end{align}
    where \(c_1, c_2, c_3\) are universal positive constants.
\end{theorem}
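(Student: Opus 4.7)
The plan is to decompose $\ERT$ into the contributions from the four phases (initialization, exploration, communication, exploitation) of \texttt{SIC-SDA} and bound each on a high-probability ``clean'' event $\mathcal{G}$ on which every confidence interval contains the true parameter. Concretely, define $\mathcal{G}$ as the intersection, over all rounds and all arms, of events of the form $|\hat{\mu}_{k,t} - \mu_k| \le \beta_{k,t}$ for the per-load mean and $|\hat{\nu}_{k,t} - m_k \mu_k| \le \beta'_{k,t}$ for the united-exploration mean, with confidence radii calibrated so that Hoeffding plus a union bound gives $\P(\mathcal{G}^c) = \O(1/T)$. On $\mathcal{G}^c$ the regret is trivially $O(M)$, which is absorbed into the additive term, so the remainder of the argument conditions on $\mathcal{G}$.

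The initialization phase, which uses the SDA collision bit to probe $M$ and assign ranks in a musical-chair fashion, terminates in $\O(K \log T)$ slots and contributes at most $\O(MK \log T)$ regret, absorbed into the final $K^2 \log^2(\cdot)$ term. For the exploration phase, each suboptimal arm $k \in \{L+1,\ldots,K\}$ must be sampled enough times for its upper confidence bound to fall below the least favored optimal arm's lower bound; a standard analysis gives $\O(\log T / (\mu_L - \mu_k)^2)$ pulls per active player, and because each such pull costs regret proportional to $(\mu_L - \mu_k)$ and there are up to $M$ active players, this yields the first term $c_1 \sum_{k=L+1}^K M \log T / (\mu_L - \mu_k)$. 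The united exploration rounds for learning the integer capacity $m_k$ rely on the estimator obtained by rounding $\hat{\nu}_{k,t} / \hat{\mu}_{k,t}$; by a relative Chernoff bound, this estimator succeeds once the relative error drops below $1/(2 m_k)$, which requires $\O(m_k^2 \log T / \mu_k^2)$ united pulls and produces the second term. The exploitation phase contributes zero regret on $\mathcal{G}$, since every correctly certified optimal arm is matched with exactly $m_k$ players for the remainder of the horizon.

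The delicate component, and the main obstacle, is the communication phase. Because SDA yields only the one-bit signal $\1{a_{k,t}>1}$, a player can transmit an $\ell$-bit word only by inducing or avoiding a collision on $\ell$ pre-agreed ``channel'' arms across $\ell$ slots, and the protocol must also keep the transmitting and receiving players synchronized without a leader-initiated probe (which failed to distinguish $a_{k,t}=2$ from $a_{k,t}>2$ under SDA). Using the phase-doubling schedule, the precision needed at epoch $p$ scales as $2^{-p}$, so each epoch needs $\O(p)$ communication slots per arm per player pair; summing over the $\O(\log \log T)$ epochs until the last suboptimal arm or last capacity is resolved, and over $K$ arms and $M$ players, produces the $K^2 \log^2(\cdot)$ term with the argument $\max\{\log(T)/(\mu_L - \mu_{L+1})^2,\ m_M^2 \log(T)/\mu_M^2\}$ being exactly the largest number of samples any arm requires before it can exit the loop. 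The hard part of the argument will be verifying that this communication scheme loses no information on $\mathcal{G}$ despite the 1-bit feedback, that players entering the exploitation phase at different epochs do not desynchronize those still in the loop, and that the total number of collisions charged to communication matches the $\log^2$ bound rather than blowing up to $\log^2(T)$ unconditionally; this is where the maximum in the third term of \eqref{eq:sic_sda_regret} arises naturally.
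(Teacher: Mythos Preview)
Your proposal is correct and follows essentially the same approach as the paper: decompose the regret by phase, condition on a high-probability clean event, bound the exploration contribution via the elimination sample complexity (adapted from Boursier--Perchet) plus the $O(m_k^2\mu_k^{-2}\log T)$ united-exploration cost for learning capacities, and bound communication by charging $O(p)$ bits per epoch summed over $N=O(\log(\max\{\cdot\}))$ epochs to obtain the $K^2\log^2$ term. One small discrepancy worth fixing: the paper's \texttt{SIC-SDA} initialization is Wang et al.'s orthogonalization with expected length $O\!\left(\frac{K^2 M}{K-M}\right)$, not a musical-chair routine of length $O(K\log T)$; either is lower order and absorbed, but your stated bound does not match the actual subroutine.
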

\vspace{-5pt}




\section{Simulations \& Applications}

\subsection{Synthetic Data Simulations}

We consider an environment with \(9\) arms and \(6\) players, which is the common simulation setting in MMAB literatures~\cite{boursier_sic-mmab_2019,wang_optimal_2020}.
The ``per-load'' reward random variables follow the Bernoulli distribution.
Their reward means are a random permutation of a decreasing array.
The array starts from \(0.9\) and each successor is smaller than its predecessor by \(\Delta\),
e.g., when \(\Delta=0.025\), it is \([0.90, 0.875, 0.85, 0.825, 0.80, 0.775, 0.75. 0.725, 0.70]\).
These \(9\) arms' resource capacities are \([3, 2, 4, 2, 1, 5, 2, 1, 3]\).
For each experiment, we calculate average and standard variance (as error bar) over \(50\) simulations.
Except \texttt{DPE-SDI} and \texttt{SIC-SDA}, we also apply the \texttt{SIC-SDA} algorithm to SDI feedback setting and call this as \texttt{SIC-SDI}.

In Figure~\ref{fig:regret}, we compare these three algorithms' performance for different \(\Delta\) values.
First, one can see that the \texttt{DPE-SDI} algorithm
has a much smaller regret than the other two.
The intuition is that \texttt{DPE-SDI} is based on the KL-UCB algorithm which is theoretically better than the elimination algorithms utilized by \texttt{SIC-SDA} and \texttt{SIC-SDI}~\cite{cappe_kullback-leibler_2013}.
Second, we compare \texttt{DPE-SDI} and \texttt{SIC-SDI} under the same SDI feedback.
When \(\Delta\) becomes smaller, the gap between \texttt{DPE-SDI}'s regret and \texttt{SIC-SDI}'s also becomes smaller.
This observation implies that when optimal arms are difficult to identify (e.g., \(\Delta=0.001\)),
the efficacy of both algorithms is similar.
Lastly, we note that the \texttt{SIC-SDI} algorithm outperforms \texttt{SIC-SDA},
which is consistent with the fact that the SDI feedback is more informative than SDA.

\begin{figure}
    \centering
    \subfloat[$\Delta=0.001$]{\includegraphics[width=0.25\textwidth]{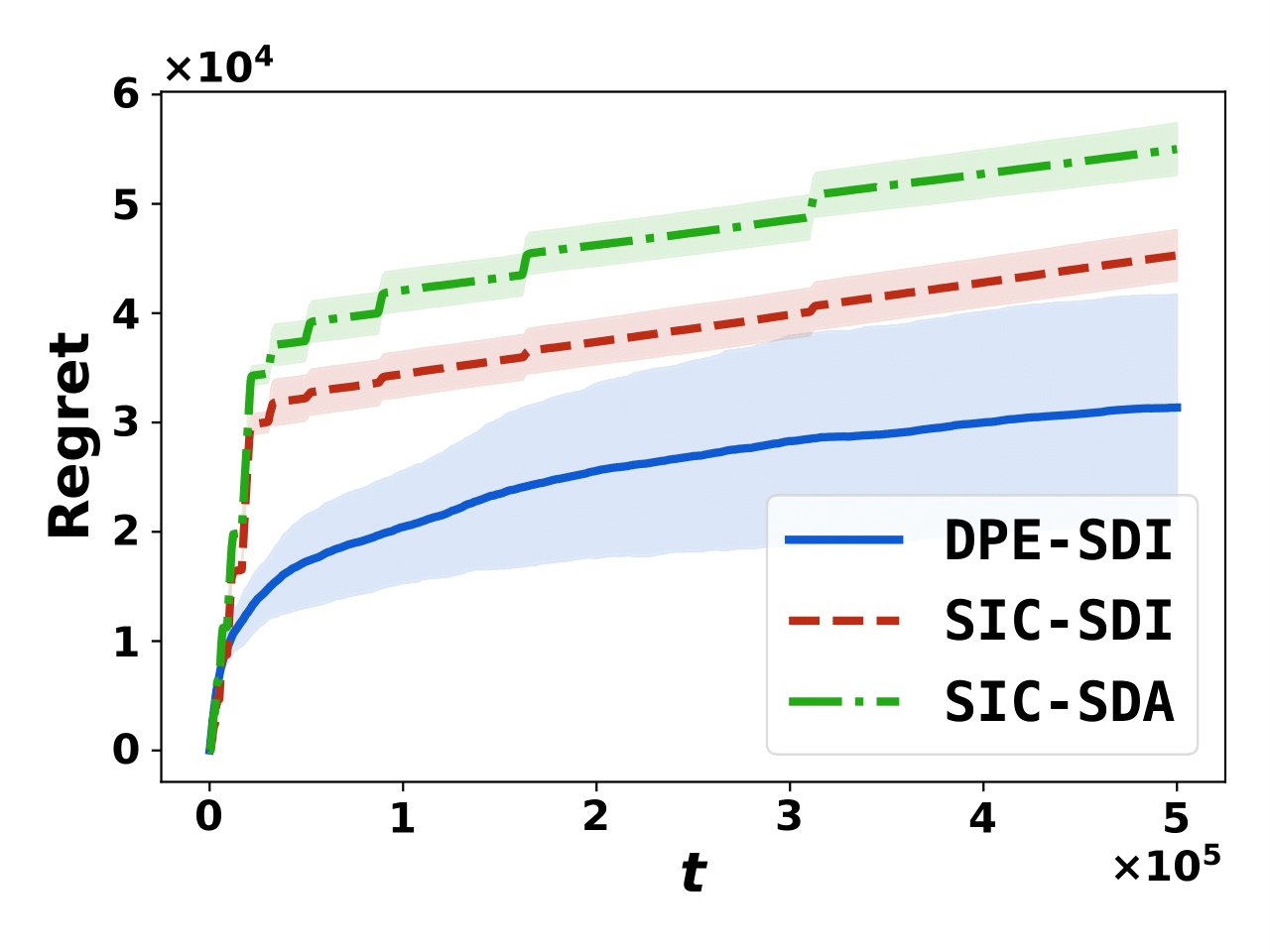}}
    \subfloat[$\Delta=0.012$]{\includegraphics[width=0.25\textwidth]{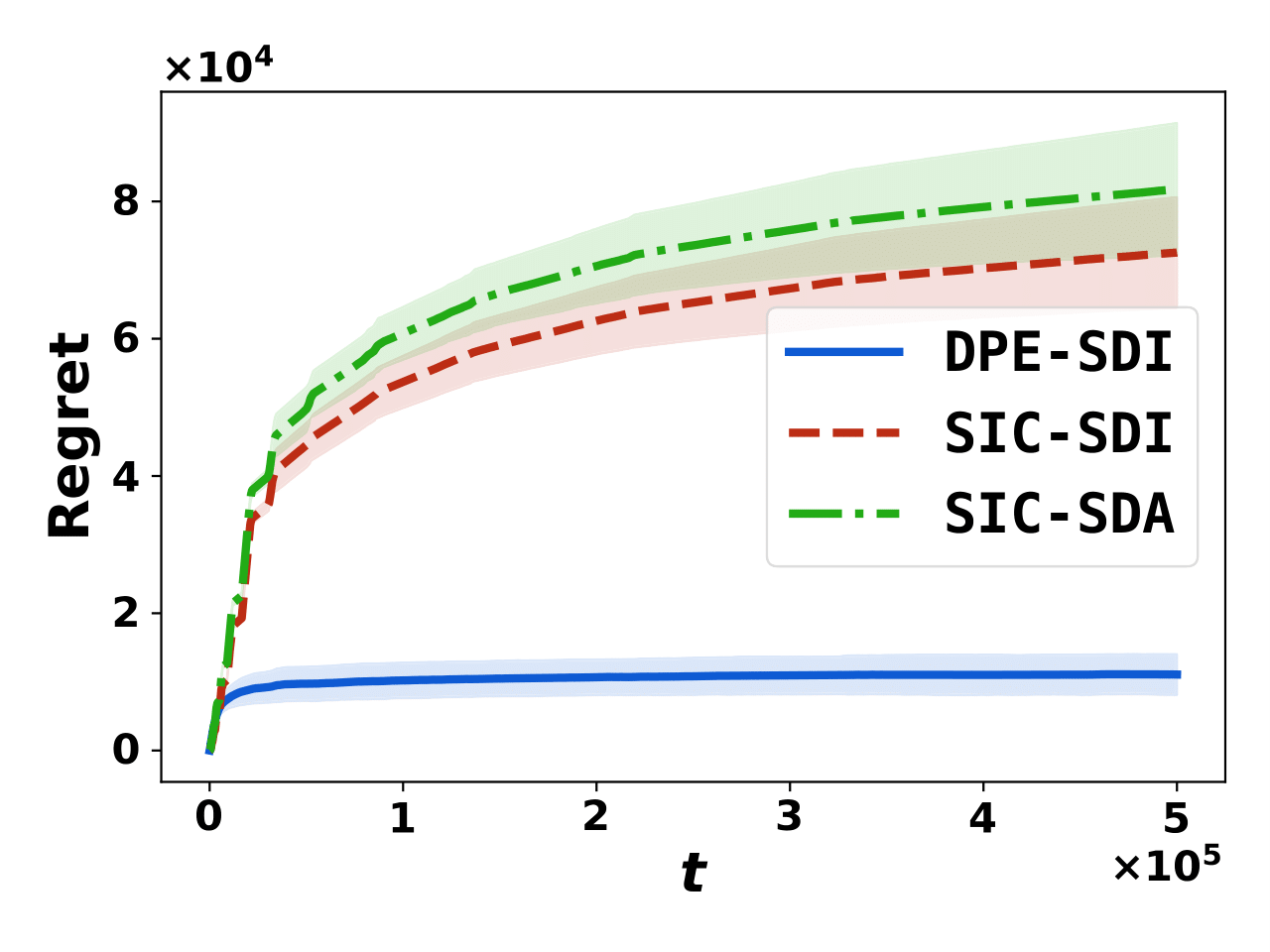}}
    \\[-2ex]
    \subfloat[$\Delta=0.025$]{\includegraphics[width=0.25\textwidth]{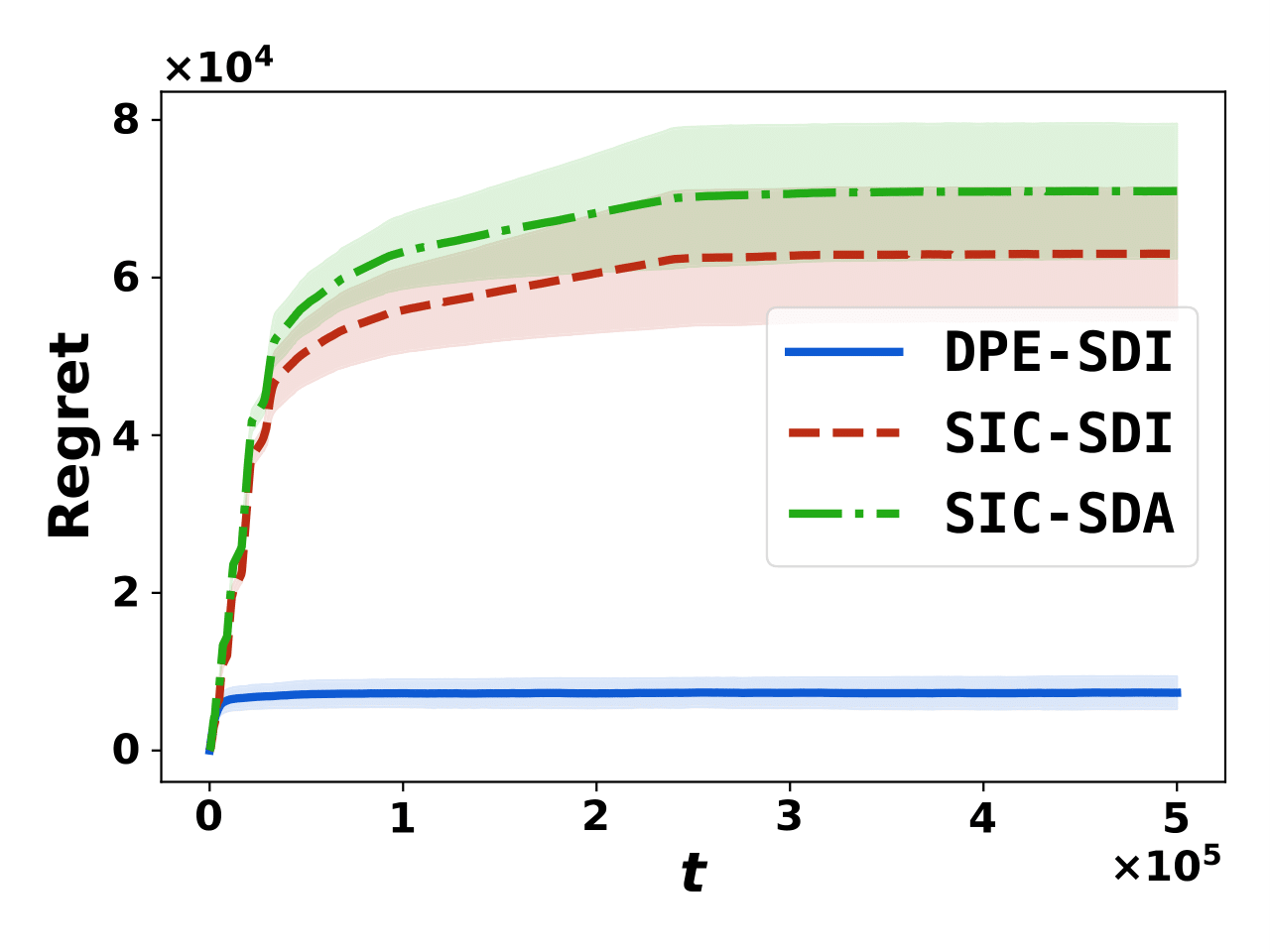}}
    \subfloat[$\Delta=0.037$]{\includegraphics[width=0.25\textwidth]{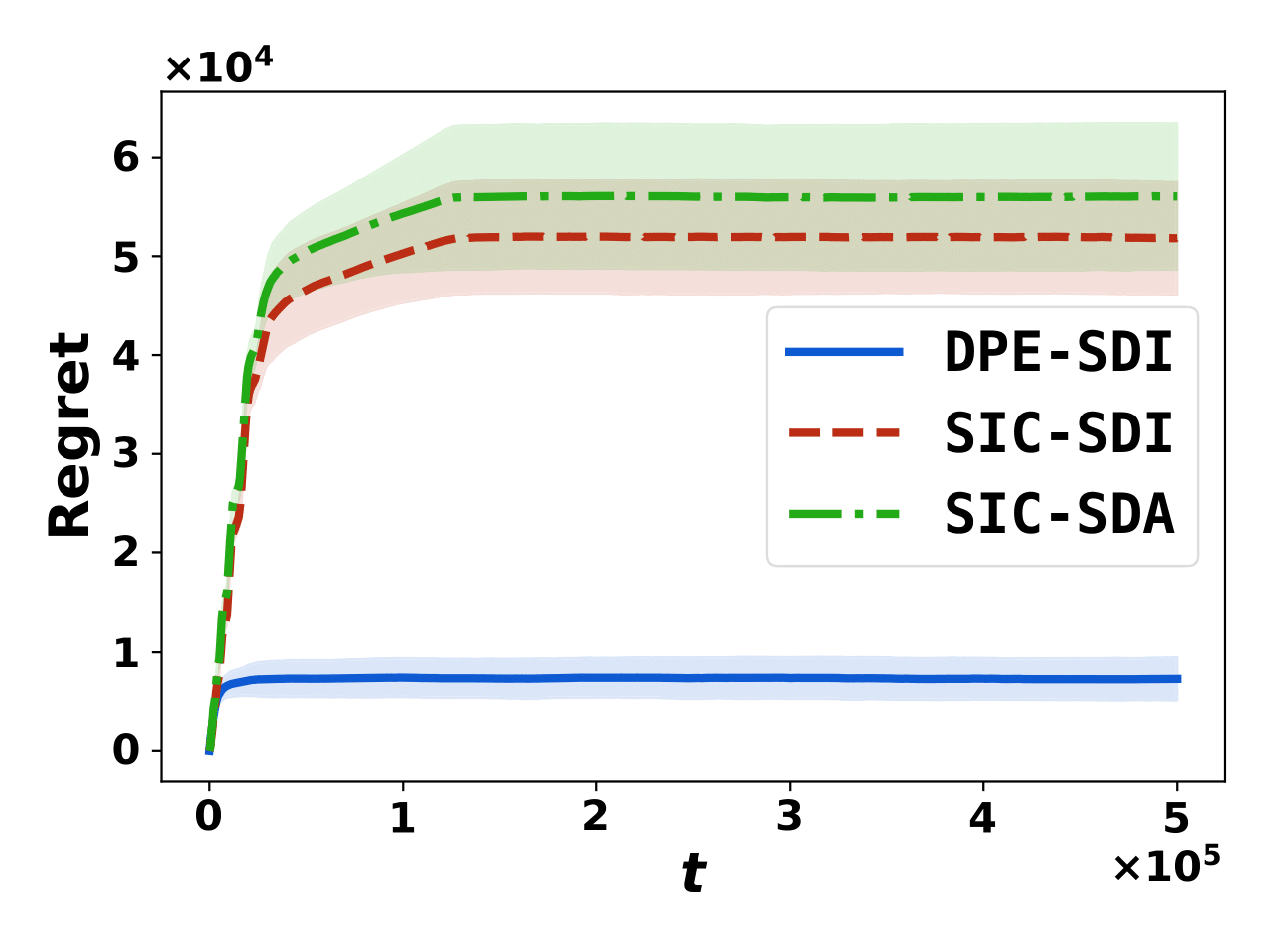}}
    \caption{Synthetic data simulations}
    \label{fig:regret}
\end{figure}

\subsection{Real World Applications}

We consider two real world scenarios:
(1) an edge computing system for the SDA feedback and (2) a 5G/4G wireless networking system for the SDI feedback.

\noindent
\textbf{Edge computing. }
The edge computing system Precog~\cite{drolia_precog_2017} was designed for image processing.
Its edge nodes
contain mobile devices and personal desktop computers, which fits the SDA setting.
According to their specifications, we consider \(7\) edge computing nodes (arms):
\centerline{\tabcolsep=0.1cm
    \begin{tabular}{|c||ccccccc|}
        \hline
        CPU Speed (GHz) & 1.5 & 2.1 & 1.2 & 2.5 & 2.0 & 1.3 & 2.6 \\
        \hline
        \# of CPU Cores & 3   & 2   & 4   & 2   & 1   & 2   & 3   \\ \hline
    \end{tabular}
}
To scale the CPU speed as ``pre-load'' reward mean in \([0,1]\), we divide them by \(3\).
The number of CPU cores corresponds to arms' shareable resource capacities \(m_k\).
We assume there are \(6\) parallel tasks (players) in the system.

\noindent
\textbf{5G \& 4G network. }
5G started to serve consumers from 2019 and will coexist with 4G for a long time.
When a smartphone uses wireless service, it needs to choose between 5G and 4G networks.
The smartphone only has the SDI feedback as it doesn't know the exact number of users
using a particular base station.
Narayanan \textit{et al.}~\shortcite{narayanan_first_2020} measured the 5G performance on smartphone and compared it with 4G.
We pick the \(8\) parallel TCP connections as our typical setting, in which
the 5G's throughput (THR) is around \(8\) times higher than 4G's,
and 4G's round-trip time (RTT) latency is around \(4\) times slower than 5G's.
From~\cite{narayanan_first_2020}'s experimental results, we consider an environment consisting of two 5G base stations (underlined) and eighteen 4G base stations (\(20\) arms) and \(18\) smartphones (\(18\) players):
\centerline{\tabcolsep=0.075cm
    \begin{tabular}{|c||cccccccccc|}
        \hline
        RTT (100ms)   & \underline{{1.2}} & \underline{{1.1}} & 4.2 & 4.0 & 4.5 & 3.5 & 5.0 & 4.2 & 5.5 & 3.9 \\ \hline
        THR (100Mbps) & \underline{{9.2}} & \underline{{8.1}} & 1.2 & 1.2 & 1.4 & 1.1 & 1.3 & 1.2 & 1.1 & 1.4 \\
        \hline
        RTT (100ms)   & 4.8               & 5.5               & 3.7 & 4.7 & 3.2 & 5.1 & 4.4 & 5.3 & 4.9 & 4.1 \\ \hline
        THR (100Mbps) & 1.0               & 1.1               & 1.2 & 1.0 & 1.3 & 1.2 & 1.0 & 1.1 & 1.3 & 1.2 \\
        \hline
    \end{tabular}
}
We use their RTT latencies' reciprocals as arms' ``per-load'' reward means and their THR's integer rounding as
arms' maximal resource capacities.

We apply the \texttt{DPE-SDI} and \texttt{SIC-SDA} algorithms to solve these two scenarios respectively.
We also implement two heuristic policies according to each player's own observations:
the \texttt{Highest-Reward} policy
in which players choose the arm with highest empirical reward mean,
and
the \texttt{Idlest-Arm} policy in which players select the arm that is most infrequently shared with others.
Figure~\ref{fig:real_world}a and \ref{fig:real_world}b show that in the edge computing scenario,
\texttt{DPE-SDI} outperforms other two heuristic policies.
In Figure~\ref{fig:real_world}c and \ref{fig:real_world}d for the 5G/4G wireless networking scenario,
when time slot is small, the \texttt{Highest-Reward} policy is better than \texttt{SIC-SDA}.
Because the policy can detect two powerful 5G base stations immediately and utilize them from the very beginning.
But in the long run, the \texttt{SIC-SDA} algorithm will find the optimal profile and finally can outperform both heuristic policies.

\begin{figure}
    \centering
    \subfloat[Edge computing (SDI): Regret]{\includegraphics[width=0.25\textwidth]{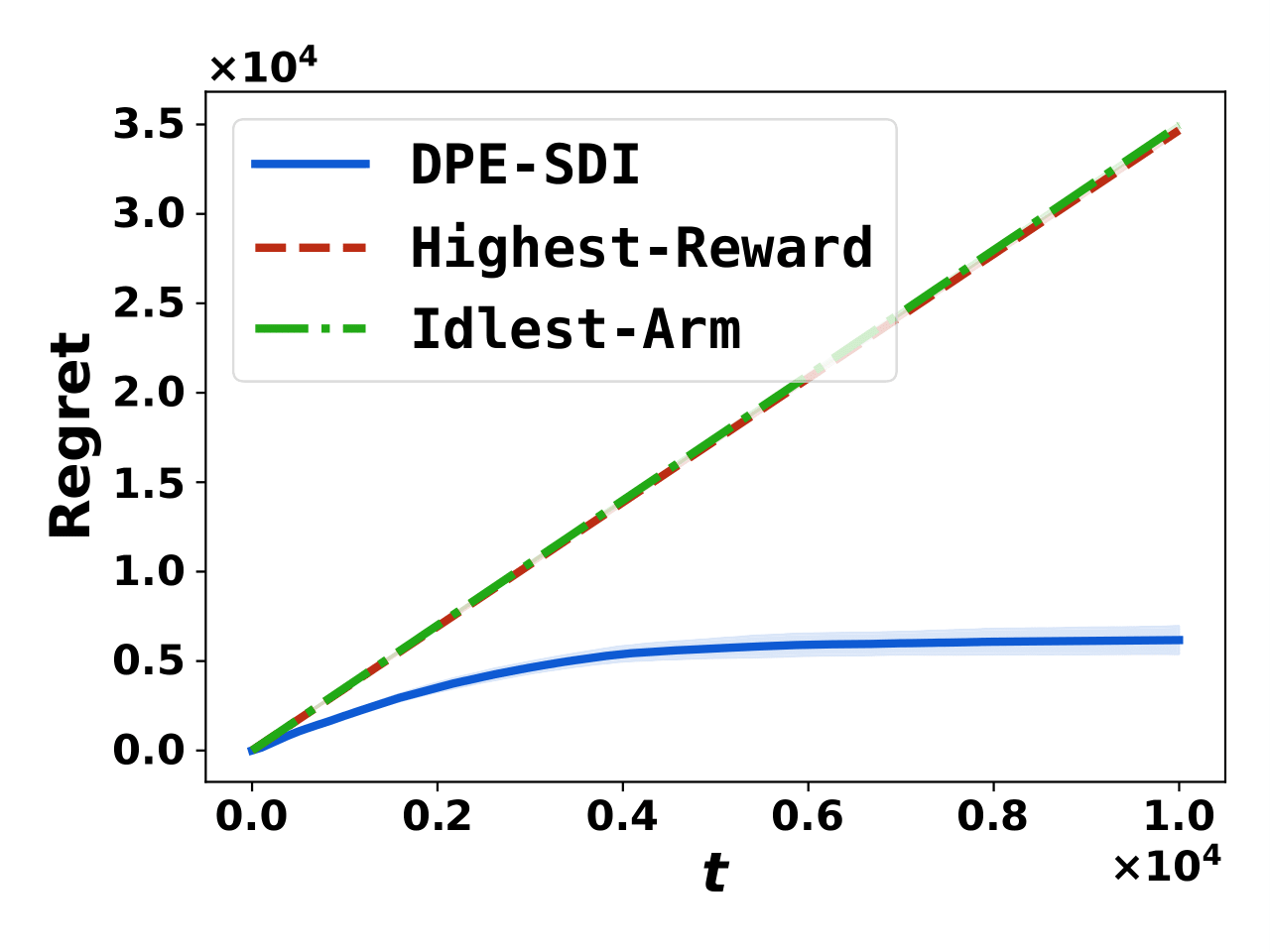}}
    \subfloat[Cumulative CPU usage]{\includegraphics[width=0.25\textwidth]{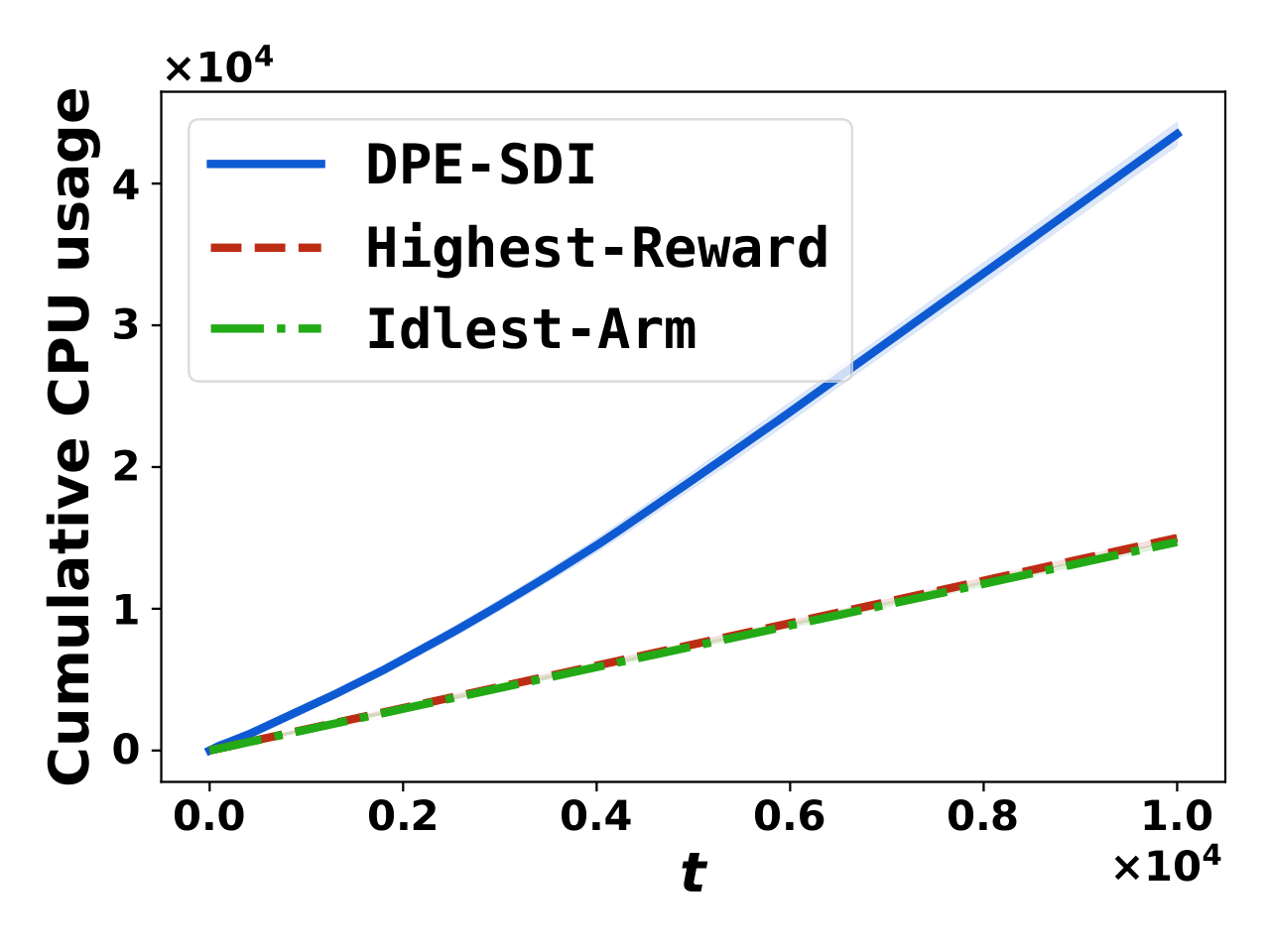}}\\[-2ex]
    \subfloat[5G/4G network (SDA): Regret]{\includegraphics[width=0.25\textwidth]{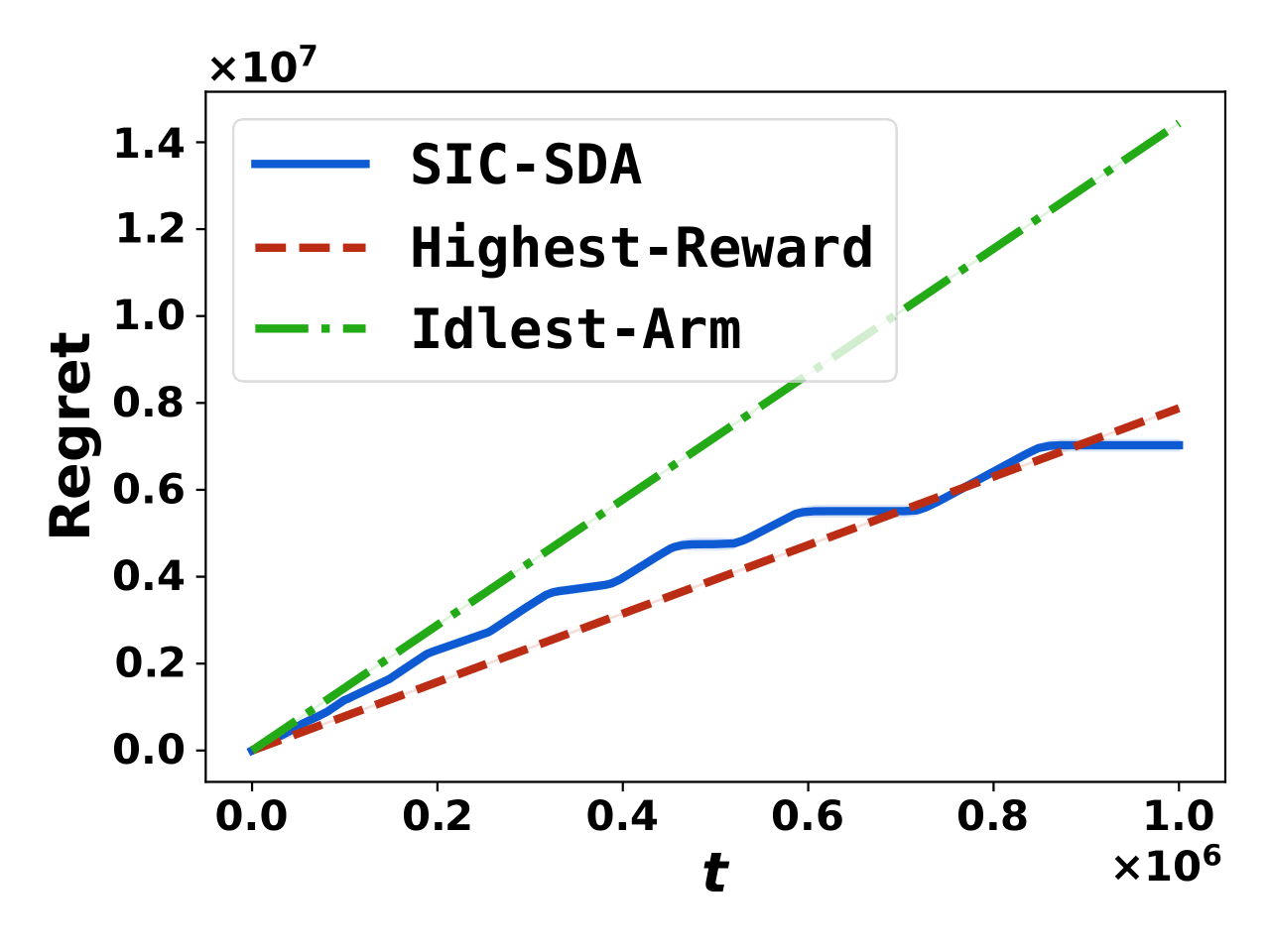}}
    \subfloat[Total throughput]{\includegraphics[width=0.25\textwidth]{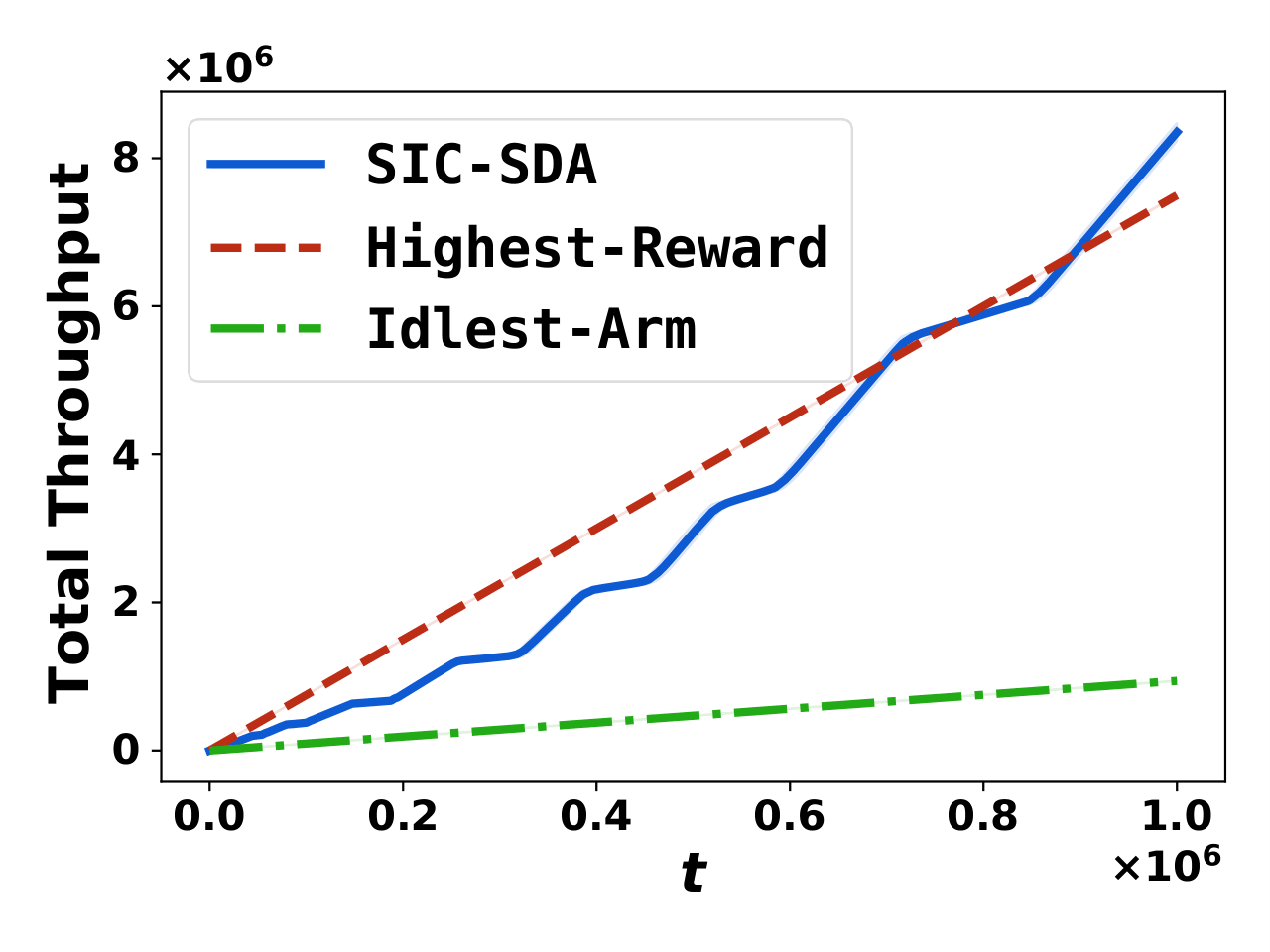}}
    \caption{Real world data simulations}
    \label{fig:real_world}
\end{figure}




\section{Conclusion}

We propose a novel multi-player multi-armed bandits model in which several players can share an arm, and their total reward is the arm's ``per-load'' reward multiplied by the minimum between the number of players selecting the arm and the arm's maximum shareable resources.
We consider two types of feedback: sharing demand information and sharing demand awareness.
Under both types of feedback respectively,
we design the \texttt{DPE-SDI} and \texttt{SIC-SDA} algorithms coordinating decentralized players
to explore arms' reward means and resource capacities,
exploit the empirical optimal allocation profile,
and communicate with each other.
Especially, \texttt{DPE-SDI} has a simpler initialization phase than the previous algorithms' in MMAB due to the SDI feedback.
We prove that both algorithms achieve the logarithmic regrets that are tight in term of time horizon.
We also compare these two algorithms' performance
using synthetic and real world data
and show their utilities in wireless networking and edge computing.

\section*{Acknowledgement}

The work of John C.S. Lui was supported in part by SRFS2122-4S02.
The work of Hong Xie was supported by Chongqing Talents: Exceptional Young Talents Project
(cstc2021ycjh-bgzxm0195).
Hong Xie is the corresponding author.

\bibliographystyle{named.bst}
\bibliography{bibliography}

\clearpage
\appendix

\section{Additional Related Works: Comparison with Stochastic Non-Linear Fractional Equality Knapsack Problem} \label{app:related_works}


The stochastic non-linear fractional equality knapsack (NFEK), e.g.,~\cite{yazidi2018solving}
studied a resources sharing problem similar to this work.
However, our \texttt{MMAB-SA} model is different from theirs twofold:
(1) \texttt{MMAB-SA} is in the decentralized setting where players cannot easily communicate with each other, while NFEK is in the centralized setting;
(2) Our algorithms' logarithmic regret bounds not only guarantee convergences to optimal allocations as \(\lim_{T\to\infty}(\log T) / T = 0\), but also that their total cost (regret) is tight in term of \(T\) comparing to the logarithmic regret lower bound in~\cite{anantharam_asymptotically_1987},
while algorithms for stochastic NFEK only guaranteed the convergence to the optimal allocation.




\section{Algorithmic Detail of \texttt{DPE-SDI}}\label{app:dpe_sdi}

\subsection{Initialization Phase}\label{appsub:dpe_sdi_init}

The initialization phase is presented in Algorithm~\ref{alg:dpe_sdi_init}.
After sensing \(M\) from Rally in line~\ref{algline:rally}, it orthogonalizes and assigns ranks to all players.
Each player starts with a rank \(M+1\).
The algorithm runs in rounds and each round contains \(M+1\) time slots.
In the beginning of a round, if the player's rank is \(M+1\), then he uniformly chooses an arm \(i\in \{1,2,\dots,M\}\) to play,
and if his rank \(i\) is less than \(M+1\), he plays arm \(i\).
If player with rank \(M+1\) receives the feedback \(a_{j,t} = 1\), he updates his rank to \(j\);
otherwise, there are other players sharing the arm and his rank is still \(M+1\).
In the round's subsequent \(M\) time slots, each player always chooses his rank's corresponding arm
except that at the \((i+1)\)th time slot he chooses arm \(M+1\).
If each player was orthogonally assigned with a rank in \(\{1,2,\ldots,M\}\), in these subsequent \(M\) time slots, no sharing (i.e., \(a_{k,t} > 1\)) would happen, which will end the orthogonalization sub-phase.
Or otherwise, all players will experience arm sharing at least once and start a new round of orthogonalization.
This orthogonalization approach is similar to \cite{wang_optimal_2020}'s.
We show the expected time slots of this orthogonalization is \(\E[T_0]\le M^4+2M^3+M^2\) in Appendix~\ref{appsub:dpe_sdi_regret_init}.

\begin{algorithm}[htp]
    \caption{\(\texttt{DPE-SDI.Init}()\)}
    \label{alg:dpe_sdi_init}
    \begin{algorithmic}[1]
        \State\algorithmicensure rank \(i\), the number of players \(M\)
        \State \(M \gets\text{Rally}()\) \Comment{determine the number of players}\label{algline:rally}
        \State \(i\gets \text{Orthogonal}(M)\) \Comment{assign rank to players} \label{algline:rankassignment}
    \end{algorithmic}
\end{algorithm}

\update{\noindent\textbf{Relax condition \(M<K\) to \(M<\sum_{k\in[K]}m_k\). }
    With some modifications, the \texttt{DPE-SDI} algorithm can work when the number of players \(M\) is greater than
    the number of arms \(K\) (but not greater than \(\sum_{k\in[K]}m_k\),
    or otherwise, it would become a trivial case).
    The key modification is in the initial phase:
    while probing the total number of players via rallying is still valid,
    the original orthogonalization for assigning ranks needs to be modified.
    Note that one stage of orthogonalization can assign ranks to at most \(K-1\) players.
    So, when \(M>K\), one needs to implement \(\ceil{\frac{M}{K-1}}\) stages
    of orthogonalizations to assign each player a rank.
    We take the first stage of orthogonalization to illustrate the process.
    In this stage, each player starts with a rank of \(K\),
    uniformly choose an arm \(i\in \{1,2,\dots,K-1\}\),
    and follow the same process as the original orthogonalization's first step.
    In this round's subsequent \(K-1\) time slots, players with rank \(i\) smaller than \(K\) pull the arm whose
    index is equal to its rank except that at the \((i+1)\)th time slot, they pull arm \(K\).
    If each player experience either no arm sharing or arm sharing with exactly \(M-K+2\) players in
    these subsequent \(K-1\) time slots,
    then it means the first stage of orthogonalization finishes:
    \(K-1\) out of \(M\) players are assigned with ranks \(\{1,2,\dots, K-1\}\)
    while the remaining players are still with rank \(K\).
    Or otherwise, they start a new round consisting of \(K\) time slots.
    This new condition for terminating one stage of orthogonalization is
    a key modification from the original orthogonalization.
    In the second stage of orthogonalization, players with rank \(i<K\) keep pulling arm \(K\) while other players with rank \(K\)
    follow the same procedure as the first stage.
    At the end of the second stage, players newly assigned with a rank \(i<K\) will be assigned
    with a rank \(i + (K-1)\) so as to differentiate them from ranks assigned in previous stages.
    After \(\ceil{\frac{M}{K-1}}\) stages of orthogonalizations, each player is assigned with a
    rank from \(1\) to \(M\).
    Except this modification in \texttt{initial phase}, the other phases of \texttt{DPE-SDI} works for the \(M < \sum_{k\in [K]} m_k\) case smoothly.
}

\subsection{Leader's Parameter Update}\label{appsub:dpe_sdi_leader_update}

At the end of the exploration and exploitation phase, the leader will update its estimates \(\bm{\Upsilon}_t\) as that outlined in Algorithm~\ref{alg:dpe_sdi_update_stats}.
\(\bm{\Upsilon}_t\) consists of the empirical optimal arm set \(\mathcal{S}_t\), the empirical least favor arm \(\mathcal{L}_t\), and arms' shareable resources' lower and upper confidence bounds \(\bm{m}_t^l\) and \(\bm{m}_t^u\).

The leader updates the shareable resources \(m_k\)'s lower and upper confidence bounds via the observations from IEs and UEs as follows:
\begin{align}
    m_{k,t}^l = &
    \max\!\left\{\! m_{k,t}^l,\! \ceil{\frac{\hat{\nu}_{k,t}}{\hat{\mu}_{k,t} + \phi(\tau_{k,t}, \delta) + \phi(\iota_{k,t}, \delta)}} \! \right\},\label{eq:lower_bound_update} \\
    m_{k,t}^u = &
    \min\!\left\{\! m_{k,t}^u, \!\floor{\frac{\hat{\nu}_{k,t}}{\hat{\mu}_{k,t} - \phi(\tau_{k,t}, \delta) - \phi(\iota_{k,t}, \delta)}}\! \right\},\label{eq:upper_bound_update}
\end{align}
where the function
\(
\phi(x,\delta) \!\coloneqq\! \sqrt{\left(1+\frac{1}{x}\right)\frac{\log(2\sqrt{x+1}/\delta)}{2x}}.
\)
Their derivations are deferred to Appendix~\ref{app:update_ci}.

The leader updates the empirical optimal arm set \(\mathcal{S}_t\) and the empirical least favor arm \(\mathcal{L}_t\) via the empirical optimal profile \(\hatbm{a}_t^*\) (line~\ref{alg:dpe_sdi_update_stats:optimal_arms},~\ref{alg:dpe_sdi_update_stats:least_favored_arms} of Algorithm~\ref{alg:dpe_sdi_update_stats}).
To find the optimal profile \(\bm{a}^*\), we define function \(\Oracle: (\bm{\mu}, \bm{m})\to \bm{a}^*\) according to the rule stated in Section~\ref{subsec:model_1}.

\begin{algorithm}[htp]
    \caption{\texttt{DPE-SDI.Update}\((\bm{\Lambda}_t{=}(\bm{S}^{\text{IE}}_t, \bm{S}^{\text{UE}}_t, \bm{\tau}_t, \bm{\iota}_t), \bm{\Upsilon}_t)\)}
    \label{alg:dpe_sdi_update_stats}
    \begin{algorithmic}[1]
        \State \algorithmicensure \((\mathcal{E}_t, \bm{\Upsilon}_t = (\mathcal{S}_t, \mathcal{L}_t, \bm{m}^l_t, \bm{m}^u_t))\)
        \If{\(i=1\)} \Comment{\textit{Leader}}
        \State \(\hat{\mu}_{k,t}\gets {S}_{k,t}^{\text{IE}}/ {\tau}_{k,t}, \hat{\nu}_t\gets {S}_{k,t}^{\text{UE}} / {\iota}_{k,t},\,\forall k\)
        \State Updates \(\bm{m}^{l}_t, \bm{m}^{u}_t\) via Eq.(\ref{eq:lower_bound_update})(\ref{eq:upper_bound_update})
        \State{\(\hatbm{a}^*_t\gets \Oracle(\hat{\bm{\mu}}_t, \bm{m}^l_t)\)}\Comment{update optimal profile}\label{alg:dpe_sdi_update_stats:oracle_update}
        \State{\(\mathcal{S}_t\gets \{k: \hat{a}_{k,t}^* > 0\}\)\label{alg:dpe_sdi_update_stats:optimal_arms}}\Comment{update optimal arms}
        \State\multiline{\(\mathcal{L}_t\gets \argmin_k\{\hat{\mu}_{k}:a_{k,t} > 0\}\) \label{alg:dpe_sdi_update_stats:least_favored_arms}}
        \State{$\mathcal{E}_t\gets \{k: u_{k,t} \ge \hat{\mu}_{L_t,t}, a_{k,t} = 0\}$}
        \EndIf
    \end{algorithmic}
\end{algorithm}



\subsection{Communication Phase}\label{appsub:dpe_sdi_comm}
The \texttt{DPE-SDI.CommSend} and \texttt{DEP-SDI.CommRece} are presented in Algorithm~\ref{alg:dpe_sdi_commsend} and Algorithm~\ref{alg:dpe_sdi_commrece}. Table~\ref{tab:dep_sdi_comm_detail} provides Algorithm~\ref{alg:dpe_sdi_commsend} \texttt{DPE-SDI.CommSend}'s 2nd-6th steps' condition (line~\ref{alg:dpe_sdi_commsend:line:condition}) and Algorithm~\ref{alg:dpe_sdi_commrece} \texttt{DPE-SDI.CommRece}'s 2nd-6th steps' update (line~\ref{alg:dpe_sdi_commrece:line:update}).
\begin{table}[htp]
    \centering
    \caption{\texttt{DPE-SDI.CommSend} and \texttt{CommRece}'s details}
    \label{tab:dep_sdi_comm_detail}\tabcolsep=0.05cm
    \begin{tabular}{|c||c|c|}
        \hline
        Step & \(j\)-th step condition (leader)                        & \(j\)-th step update (follower)                      \\ \hline
        2nd  & \(k\in\mathcal{S}_{\text{pre}}\setminus\mathcal{S}_t\)  & \(\mathcal{S}_t\gets \mathcal{S}_t \setminus \{k\}\) \\
        3rd  & \(k\in\mathcal{S}_t\setminus \mathcal{S}_{\text{pre}}\) & \(\mathcal{S}_t\gets \mathcal{S}_t \cup \{k\}\)      \\
        4th  & \(k = \mathcal{L}(t)\)                                  & \(\mathcal{L}_t \gets k\)                            \\
        5th  & \(m_{k,t}^l > m_{k,\text{pre}}^l\)                      & \(m_{k,t}^l \gets m_{k,t}^l+1\)                      \\
        6th  & \(m_{k,t}^u < m_{k,\text{pre}}^u\)                      & \(m_{k,t}^u \gets m_{k,t}^u - 1\)                    \\
        \hline
    \end{tabular}
\end{table}

\begin{algorithm}[htp]
    \caption{\(\texttt{DPE-SDI.CommSend}(\bm{\Upsilon}_t, \bm{\Upsilon}_{\text{pre}})\)}
    \label{alg:dpe_sdi_commsend}
    \begin{algorithmic}[1]
        \For{\(M\) times}\Comment{initial step}
        \State{Play arm \(\hat{k}_t^*\) with highest empirical mean}
        \EndFor
        \For{\(j\in\{2,3,4,5,6\}\)}\Comment{2nd-6th step}
        \For{\(k\in [K]\)}
        \If{\(j\)-th step's condition}\Comment{send update signal}\label{alg:dpe_sdi_commsend:line:condition}
        \State Play arm \(k\)
        \Else\Comment{no update signal}
        \State Play arm \((k+1)\Mod{M}\)
        \EndIf
        \EndFor
        \EndFor
    \end{algorithmic}
\end{algorithm}
\begin{algorithm}[htp]
    \caption{\texttt{DPE-SDI.CommRece}\((\bm{\Upsilon}_{\text{pre}})\)}
    \label{alg:dpe_sdi_commrece}
    \begin{algorithmic}[1]
        \If{\(a_{\hat{k}_t^*,t} > \hat{a}^*_{\hat{k}_t^*,t}\)}
        \State Start communication  \Comment{initial step}
        \State Wait till 2nd step
        \For{\(j\in\{2,3,4,5,6\}\)}\Comment{2nd-6th step}
        \For{\(k\in [K]\)}
        \State Play arm \(k\)\Comment{detect update signals}
        \If{\(a_{k,t} =M\)}
        \State \(j\)-th step's update\label{alg:dpe_sdi_commrece:line:update}
        \EndIf
        \EndFor
        \EndFor
        \EndIf
    \end{algorithmic}
\end{algorithm}

The six-step communication of \texttt{DPE-SDI} needs \(M+5K\) time slots.
These time slots can be reduced to \(4M+K\) via two improvements:
\begin{itemize}
    \item The 2nd and 3nd step can be merged as one. Because an arm is either removed or added to the empirical optimal arm set \(\mathcal{S}_t\). So, if a notified arm is already in \(\mathcal{S}_t\), then it should be removed from \(\mathcal{S}_t\); or otherwise, it should be added into \(\mathcal{S}_t\).
    \item The 4th, 5th and 6th steps can be reduced to \(\abs{\mathcal{S}_t} < M\) steps.
          After update \(\mathcal{S}_t\) in the 2nd and the 3rd steps,
          one can choose the least favored arm \(\mathcal{L}_t\) among \(\mathcal{S}_t\) and only update arms in \(\mathcal{S}_t\)'s shareable resource capacities' confidence bounds.
          So, in the 4th, 5th and 6th steps, players only need to rotate over arms in \(\mathcal{S}_t\).
\end{itemize}

\section{Algorithmic Detail of \texttt{SIC-SDA}} \label{app:sic_sda}


\noindent \textbf{Notation.} 
Denote \exparm~as each player's exploitation arm: when \(\exparm=-1\), it means that the player is \textit{active} --- in the loop of \textit{exploration phase} and \textit{communication phase}; 
or otherwise, 
the player is exploiting the \exparm~arm in \textit{exploitation phase}.
At the start, all players' \exparm s are initiated as \(-1\) (i.e., active).
We use \(p\) to denote the phase number while is initiated as \(1\). 
After each loop of \textit{exploration phase} and \textit{communication phase}, the phase number \(p\) is increased by one. 

The \texttt{SIC-SDA} algorithm's abstract framework is presented in Algorithm~\ref{alg:sic_sda}.
Its detailed sub-procedures are illustrated in the following sub-sections. 
Before going into their details, we take the place to further illustrate the \textit{communication phase} and 
define its several key terms and notations.
Recall that arms' statistics are denoted as \(\bm{\Lambda}_t\), and its composition is the same of \texttt{DPE-SDI}'s. 
We use \(\bm{\Lambda}_t(j)\) to indicate the rank-\(j\) player's statistics when it is necessary to distinct 
different players' statistics. 
The information maintained in each player at the \texttt{SIC-SDA} algorithm is separated as \(\bm{\Phi}_t\) and \(\bm{\Psi}_t\),
where \(\bm{\Phi}_t\) is updated by the leader who collects all followers' observations,
and, after leader disseminating the \(\bm{\Phi}_t\) information to each followers,
each player can then update their own \(\bm{\Psi}_t\) independently.
The detail compositions of \(\bm{\Phi}_t\) and \(\bm{\Psi}_t\) will be defined later. 

The \textit{communication phase} consists of two communication sub-procedures: 
\texttt{SIC-SDA.CommBack} (line~\ref{alg:sic_sda:commback_leader},\ref{alg:sic_sda:commback_follower} in Algorithm~\ref{alg:sic_sda}), where followers send back their former phases' statistics \(\bm{\Lambda}_t\) to leader, 
and \texttt{SIC-SDA.CommForth} (line~\ref{alg:sic_sda:commforth_leader},\ref{alg:sic_sda:commforth_follower} in Algorithm~\ref{alg:sic_sda}), where the leader sends its latest parameters \(\bm{\Phi}_t\) to followers;
and two parameters update sub-procedures:
\texttt{SIC-SDA.AccRej} (line~\ref{alg:sic_sda:line:accrej} in Algorithm~\ref{alg:sic_sda}), where the leader --- after acquiring all followers' statistics \(\bm{\Lambda}_t\) --- updates its own parameters \(\bm{\Phi}_t\),
and \texttt{SIC-SDA.Update} (line~\ref{alg:sic_sda:line:update} in Algorithm~\ref{alg:sic_sda}), where all players --- after followers receiving the leader's latest \(\bm{\Phi}_t\) information --- update their own information \(\bm{\Psi}_t\) and \exparm~arms.

\begin{algorithm}[htp]
    \caption{\texttt{SIC-SDA}}
    \label{alg:sic_sda}
    \begin{algorithmic}[1]
        \State Initialization: \(\exparm \gets -1, p\gets 1, \bm{\Psi}_t, \bm{\Phi}_t, \bm{\Lambda}_t\)
        \Statex \LeftComment{Initialization phase}
        \State \((i,M)\gets\texttt{SIC-SDA.Init()}\)
        \While{\(\exparm = -1\)}
            \Statex \LeftComment{Exploration phase}
            \State \(\bm{\Lambda}_t{(i)} \gets \texttt{SIC-SDA.Explore}(i, p, \bm{\Psi}_t)\)
            \Statex \LeftComment{Communication phase}
            \Statex \LeftComment{Leader receives follower's statistics , updates its info., and sends its info. to followers.}
            \If{\(i=1\)}
                \State \(\bm{\Lambda}_t (1)\gets\texttt{SIC-SDA.CommBack}()\)\label{alg:sic_sda:commback_leader}
                \State \(\bm{\Phi}_t
                \gets\texttt{SIC-SDA.AccRej}(\bm{\Lambda}_t{(1)}, \bm{\Psi}_t)\)\label{alg:sic_sda:line:accrej}
                \State \(\texttt{SIC-SDA.CommForth}(\bm{\Phi}_t)\) \label{alg:sic_sda:commforth_leader}
            \Statex \LeftComment{Followers send its statistics and receive leader's update.}
            \ElsIf{\(i\neq 1\)}
               \State \(\texttt{SIC-SDA.CommBack}(i, p, \bm{\Lambda}_t{(i)})\) \label{alg:sic_sda:commback_follower}
               \State \(\bm{\Phi}_t\gets\texttt{SIC-SDA.CommForth}()\) \label{alg:sic_sda:commforth_follower}
            \EndIf
            \State \((\!\exparm,\! \bm{\Psi}_t\!){\gets}\texttt{SIC-SDA\!.\!Update}(\!i,\! \bm{\Phi}_t, \!\bm{\Psi}_t\!)\) \label{alg:sic_sda:line:update}
            \State \(p \gets p+1\)
        \EndWhile
        \Statex \LeftComment{Exploitation phase}
        \State Play the \(\exparm\) to the end.
    \end{algorithmic}
\end{algorithm} 

In the parameters update of \texttt{SIC-SDA.AccRej}, some arms may be rejected as sub-optimal arms, 
while some may be accepted as optimal arms.
Except that these rejected arms will not be pulled anymore and these accepted arms will be exploited,
the remaining arms --- called \emph{active arms} --- will be further explored.
Denote  \(\mathcal{A}_t\) as a subset of arms to be accepted, \(\mathcal{R}_t\) as a subset of arms to be rejected,
\(\mathcal{L}_t\) as the least favored arm (if any), \(\mathcal{K}_t\) as the set of active arms at time slot \(t\),
and \(K_t\coloneqq \abs{\mathcal{K}_t}\) as the number of active arms. 
Apart from players exploiting accepted arms, the remaining players,
called \emph{active players},
continue to explore active arms.
We denote \(M_t\) as the number of active players at time slot \(t\).
We summarize these information that updated by the leader itself in \texttt{SIC-SDA.AccRej} as \(\bm{\Phi}_t \coloneqq (\mathcal{A}_t, \mathcal{R}_t, \mathcal{L}_t, \bm{m}_t^l, \bm{m}_t^u)\),
where \(\bm{m}_t^l,\bm{m}_t^u\) represent the lower and upper confidence intervals for arms' maximum resources capacities.
Denote the other information that is updated by each player itself 
as \(\bm{\Psi}_t \coloneqq (\bm{b}_t, \mathcal{K}_t, M_t)\),
where \(\bm{b}_t\in R^{K}\) is an auxillary vector for IE in the exploration phase,
\(\mathcal{K}_t\) is the active arm set, and \(M_t\) is the number of active players.

\subsection{Initialization Phase}\label{appsub:sic_sda_init}

\texttt{SIC-SDA}'s initialization phase (Algorithm~\ref{alg:sic_sda_init}) 
is to estimate the number of players \(M\) and to assign a rank to each player,
which is similar to~\cite{wang_optimal_2020}'s.
It 
consists of the orthogonalization and the rank assignment. 
The idea of orthogonalization (line~\ref{alg:sic_sda_init:ortho} in Algorithm~\ref{alg:sic_sda_init}) is illustrated in \texttt{DPE-SDI}'s initialization phase (Appendix~\ref{appsub:dpe_sdi_init}), except that \texttt{SIC-SDA} needs to orthogonalize all players to \(K-1\) arms 
due to not knowing the number of players \(M\). 
After the orthogonalization steps, each player has a different rank \(k\in \{1,2,\dots, K-1\}\). 
The rank assignment (line~\ref{alg:sic_sda_init:rank}) aims to 
transform these \(M\) players' ranks from shattering in \(\{1,2,\dots, K-1\}\)
to neatly ordered ranks in \(\{1,2,\dots,M\}\).
It takes \(2K-2\) consecutive time slots.
We number them as \(\{t_1, t_2, \ldots, t_{2K-2}\}\). 
During the rank assignment, players chooses arms according to the rule:
for player with rank \(k\in\{1,2,\dots,K-1\}\),
when \(t_s\in \{t_1,\ldots, t_{2k}\}\cup \{t_{K+k}, \ldots, t_{2K-2}\}\), he plays arm \(k\);
otherwise, when \(t_s \in \{t_{2k+1},\ldots, t_{K+k-1}\}\), he plays arm \(s-k\).
Following this rule, any pair of two players will share an arm exactly once. 
One can observe that the total number of sharing demand awareness in the first \(2k\) time slots plus one, i.e., \(\sum_{t\in\{t_1,\ldots, t_{2k}\}}\1{a_{k,t}>1}+1\), would become the arm's final rank \(i\in\{1,2,\dots,M\}\). 
The total number of sharing awareness in all \(2K-2\) time slots plus one, i.e., \(\sum_{t\in\{t_1,\ldots, t_{2K-2}\}}\1{a_{k,t}>1}+1\), is equal to the number of players \(M\). 

\begin{algorithm}[htp]
    \caption{\(\texttt{SIC-SDA.Init}()\)}
    \label{alg:sic_sda_init}
    \begin{algorithmic}[1]
        \State \algorithmicensure rank \(i\), the number of players \(M\)
        \State \(k \gets \text{Orthogonal}(K-1)\) \label{alg:sic_sda_init:ortho}
        \State \((i,M)\gets \text{RankAssign}(k)\) \label{alg:sic_sda_init:rank}
    \end{algorithmic}
\end{algorithm}

\subsection{Exploration Phase}\label{appsub:sid_sda_explore}


The exploration phase consists of (individual exploration) IE and (united exploration) UE. 
While the UE part is similar to \texttt{DPE-SDI}'s, the IE part is different.
In IE, when the number of active players \(M_t\) is greater than the number of active arms \(K_t\), some players need to share arms, i.e., \(a_{k,t}>1\). 
Recall that updating \(\bm{S}_t^{\text{IE}}\) needs to know \(a_{k,t} (<m_k)\), 
the number of players sharing arms, 
which \(a_{k,t}\) is unobservable under the SDA feedback.
To infer \(a_{k,t}\) in IE, we design the following \textbf{arm selection rule}.
If a player's rank \(i\) is no greater then \(K_t\), he plays the \(((i+t)\Mod{K_t})\)-th active arm; otherwise, he chooses the \(j\)-th active arm such that \(\sum_{n=1}^j (m_{\sigma(n)}^l - 1) \ge i > \sum_{n=1}^{j-1} (m_{\sigma(n)}^l - 1)\), where \(\sigma(n)\) denotes the \(n\)-th active arm among \(\mathcal{K}_t\). 
The inequality maps the rank \(i\) higher than \(K_t\) to the \(j\)-th active arm in \(\mathcal{K}_t\) such that 
the \(a_{k,t}\) --- the number of players pulling arm \(k\) --- does not exceed the arm's shareable resources capacity.

Algorithm~\ref{alg:sic_sda_explore} illustrates the \texttt{SIC-SDA} algorithm's exploration phase. 
Its IE is in line~\ref{alg:sic_sda_explore:individual_explore_start}-\ref{alg:sic_sda_explore:individual_explore_end} and UE is in line~\ref{alg:sic_sda_explore:united_explore_start}-\ref{alg:sic_sda_explore:united_explore_end}.
Vector \(\bm{b}_t\in\N^K\) records how each active arm is shared in IE according to the arm selection rule above.
We note that IE and UE are phase-based: each arm is explored \(2^p\) times where \(p\) is the phase's index. 

\begin{algorithm}[htb]
    \caption{\(\texttt{SIC-SDA.Explore}(i, p, \bm{\Psi}_t = (\bm{b}_t,\mathcal{K}_t, M_t))\)}
    \label{alg:sic_sda_explore}
    \begin{algorithmic}[1]
        \State \algorithmicensure \(\bm{\Lambda}_t = (\bm{S}^{\text{IE}}_t, \bm{S}^{\text{UE}}_t, \bm{\tau}_t, \bm{\iota}_t)\)
        \Statex \LeftComment{Individual exploration}
        \State \(K_t \gets \abs{\mathcal{K}_t}\)
        \For{\(K_t2^p\) times} \label{alg:sic_sda_explore:individual_explore_start}
            \If{\(i \le K_t\)}
                \State \(j\gets (i+t)\Mod{K_t}\)
                \State Play the \(j\)-th active arm.
                \State \(S^{\text{IE}}_{j,t}\gets S^{\text{IE}}_{j,t} + {r_{j,t}}/{b_{k,t}}\); \(\tau_{j,t}\gets \tau_{j,t}+1\) \label{alg:line:update_ie}
            \ElsIf{\(i > K_t\)}
                \State\multiline{Play the \(j\)-th active arm such that \(\sum_{n=1}^j (m_{\sigma(n),t}^l - 1) \ge i > \sum_{n=1}^{j-1} (m_{\sigma(n),t}^l - 1)\) where \(\sigma(n)\) denotes the \(n\)-th active arm.}
            \EndIf
        \EndFor\label{alg:sic_sda_explore:individual_explore_end}
         \Statex \LeftComment{United exploration}
         \State \(\mathcal{K}'_t \gets \{k\in\mathcal{K}_t: m_{k,t}^l \neq m_{k,t}^u\}\)\label{alg:sic_sda_explore:united_explore_start}
        \For{\(2^p\) times} 
            \ForAll{ arm \(k \in \mathcal{K}'_t\)} 
                \State Play arm \(k\)
                \State \(S^{\text{UE}}_{k,t}\gets S^{\text{UE}}_{k,t} + r_{k,t}\); \(\iota_{k,t}\gets \iota_{k,t}+1\)
            \EndFor
        \EndFor\label{alg:sic_sda_explore:united_explore_end}
    \end{algorithmic}
\end{algorithm}

\subsection{Communication Phase}\label{appsub:sic_sda_comm}

\texttt{SIC-SDA}'s communication phase consists of two parts: (1) followers send their IE accumulative reward \(\bm{S}^{\text{IE}}_t\) back to the leader, (2) leader sends his updated information \(\mathcal{A}_t\), \(\mathcal{R}_t\), \(\mathcal{L}_t, \bm{m}^l_t, \bm{m}^u_t\) to followers. 

\noindent
\textbf{\texttt{SIC-SDA.CommBack}:} \emph{Followers send \(\bm{S}^{\text{IE}}_t\) to the leader.}
This part utilizes the \(\1{a_{k,t} \ge 1}\) feedback to transmit bit information.
In particular, leader always play arm \(1\) as the communication arm.
The follower sends bit \(1\) to leader by playing arm \(1\), which generates an arm sharing indication, and sends bit \(0\) by avoiding arm \(1\).

\begin{algorithm}
    \caption{\(\texttt{SIC-SDA.CommBack}(i, p, \bm{S}^{\text{IE}}_t \in \bm{\Lambda}_t)\)}
    \label{alg:sic_sda_commback}
    \begin{algorithmic}[1]
        \For{\((j,k)\in \{2,3, \ldots, \min\{M_t, K_t\}\}\times [K_t]\)}
        \Statex \LeftComment{Leader's part}
            \If{\(i=1\)}
                \State \multiline{Leader receives \(S^{\text{IE}}_{k,t}(j)\) from follower with rank \(j\) in \(p+1\) time slots.} \label{alg:sic_sda_comback:line:leader_receive}
                \State \(S_{k,t}^{\text{IE}}(1) \gets S_{k,t}^{\text{IE}}(1) + S^{\text{IE}}_{k,t}(j)\)
                \State \(\tau_{k,t} \gets \tau_{k,t} + 2^p\)
        \Statex \LeftComment{Follower's part}
            \ElsIf{\(i=j\)} 
                \State \multiline{The follower with rank \(j\) sends \(S^{\text{IE}}_{k,t}(j)\) to leader in \(p+1\) time slots.} \label{alg:sic_sda_comback:line:follower_send}
            \EndIf 
        \EndFor
    \end{algorithmic}
\end{algorithm}

The \(\texttt{SIC-SDA.CommBack}(i, p, \bm{S}^{\text{IE}}_t)\) is presented in Algorithm~\ref{alg:sic_sda_commback}. 
In the \(n\)-th phase, each arm is individually explored \(2^n\) times. 
So, a follower's accumulated reward of any arm till the end of phase \(p\) is at most \(2^{p+1} - 1\), and thus can be transmitted from the follower to leader in \(p+1\) time slots via the \(\1{a_{k,t} \ge 1}\) SDA indicator, i.e., one time slot transmits one bit. 
The communication has a mesh-grid structure: for time slots corresponding to \((j,k)\in [2, \min\{M_t, K_t\}]\times [K_t]\), follower with rank \(i = j\) will send arm \(k\)'s IE reward summation \(S^{\text{IE}}_k\) to leader. 
We use \(S^{\text{IE}}_{k,t}(j)\) to indicate the accumulative IE rewards of the player with rank \(j\) when 
more than one players' \(S^{\text{IE}}_{k,t}\) appear simultaneously. 
In line~\ref{alg:sic_sda_comback:line:leader_receive}'s \(p+1\) time slots, 
the leader always chooses arm \(1\) as its communication arm. 
The received \(S^{\text{IE}}_{k,t}(j)\)'s binary expression's \(n\)-th position is \(1\) if the leader receives an arm sharing signal at the \(n\)-th time slot among the \(p+1\) slots, i.e., \(\1{a_{k,t}>1}=1\); 
and it is \(0\) otherwise. 
In line~\ref{alg:sic_sda_comback:line:follower_send}'s \(p+1\) time slots, 
the follower chooses arm \(1\) at the \(n\)-th time slot among the \(p+1\) slots if his \(S^{\text{IE}}_{k,t}(j)\)'s binary expression's \(n\)-th position is equal to \(1\), and chooses arm \(2\) otherwise.

\noindent
\textbf{\texttt{SIC-SDA.CommForth}:} \emph{Leader sends \(\mathcal{A}_t,\mathcal{R}_t,\mathcal{L}_t,\bm{m}^l_t,\) (i.e., \(\bm{\Phi}_t\)) \( \bm{m}^u_t\) to followers. }
This part is similar to \texttt{DPE-SDI}'s six-step communication. 
As \texttt{SIC-SDA}'s communication is phase-based, it does not need the initial block. 
Each of this part's \(5\) steps updates one of \(\mathcal{A}_t\), \(\mathcal{R}_t\), \(\mathcal{L}_t, \bm{m}^l_t, \bm{m}^u_t\) in \(K_t\) time slots.

The \texttt{SIC-SDA.CommForth}\((\mathcal{A}_t, \mathcal{R}_t, \bm{m}_t^l, \bm{m}_t^u)\) is illustrated in Algorithm~\ref{alg:sic_sda_commforth}. We present each step's condition (line~\ref{alg:sic_sda_commforth:line:condition}) and update (line~\ref{alg:sic_sda_commforth:line:update}) in Table~\ref{tab:sic_sda_comm_detail}. 

\begin{algorithm}[htp]
    \caption{\texttt{SIC-SDA.CommForth} \\ \hspace*{\fill} \((\bm{\Phi}_t= (\mathcal{A}_t, \mathcal{R}_t, \mathcal{L}_t, \bm{m}_t^l, \bm{m}_t^u))\)}
    \label{alg:sic_sda_commforth}
    \begin{algorithmic}[1]
        \For{Step \(s\in \{1,2,3,4,5\}\)} \Comment{1st-5th step}
            \For{\((k, j)\in \mathcal{K}_t \times [M_t]\)}
            \Statex \LeftComment{Leader's part}
                \If{rank \(i=1\) } 
                \If{\(s\)-th step's condition}\Comment{send signal}\label{alg:sic_sda_commforth:line:condition}
                    \State Play arm \(k\)-th active arm
                \Else\Comment{no update signal}
                    \State Play arm \((k+1)\Mod{K_t}\)-th active arm 
                \EndIf
            \Statex \LeftComment{Follower's part}
                \ElsIf{rank \(i = j\)} 
                    \State Play arm \(k\)-th active arm
                    \If{\(\1{a_{k,t} > 1} = 1\)} \Comment{detect signal}
                        \State \(s\)-th step's update \label{alg:sic_sda_commforth:line:update}
                    \EndIf
                \EndIf
            \EndFor
        \EndFor
    \end{algorithmic}
\end{algorithm}

\begin{table}[htp]
    \centering
    \caption{\texttt{SIC-SDA.CommBack}'s details}
    \label{tab:sic_sda_comm_detail}\tabcolsep=0.05cm
    \begin{tabular}{|c||c|c|}
        \hline
        Step & \(s\)-th step condition (leader) & \(s\)-th step update (followers) \\ \hline
        1st & \(k\in\mathcal{R}_t\) & \(\mathcal{K}_t \gets \mathcal{K}_t \setminus \{k\} \) \\
        2nd & \(k\in\mathcal{A}_t\) & \(\mathcal{K}_t \gets \mathcal{K}_t \cup \{k\} \) \\
        3rd & \(k=\mathcal{L}_t\) & \(\mathcal{L}_t\gets \mathcal{L}_t\) \\
        4th & \(m_{k,t}^l > m_{k,\text{pre}}^l\) & \(m_{k,t}^l \gets m_{k,t}^l+1\) \\
        5th & \(m_{k,t}^u < m_{k,\text{pre}}^u\) & \(m_{k,t}^u \gets m_{k,t}^u - 1\) \\
        \hline
    \end{tabular}
\end{table}


\subsection{Successive Accept and Reject by Leader}~\label{appsub:sic_sda_sar}

In this subsection, we illustrate how leader accepts and rejects arms in the shareable arm setting.
The successive accept and reject mechanism of \texttt{SIC-SDA} was made in~\cite{bubeck_multiple_2013} for top \(k\) arms identification and was utilized to design MMAB algorithms in~\cite{boursier_sic-mmab_2019}.
We further extend the idea to our \texttt{MMAB-SA} model where accepting and rejecting arms not only depends on arm's reward means, \textit{but also on theirs shareable resources which are unknown a prior.}

After leader receives followers' IE observations, 
the leader updates his empirical reward mean and check 
whether he can ensure (with high probability) that some optimal (sub-optimal) arms can be accepted (rejected). 
Denote \(g(k,j)\) as a special indicator function for distinguishing ``good'' arm \(k\) from ``bad'' arm \(j\) as follows
\[g(k,j)\coloneqq \1{\hat{\mu}_{k,t}- 3\sqrt{\frac{\log (T)}{2\tau_{k,t}}}\ge \hat{\mu}_{j,t}+  3\sqrt{\frac{\log (T)}{2\tau_{j,t}}}},\] 
where $g(k,j) =1$ implies that the arm \(k\)'s reward mean is higher than arm \(j\)'s with high confidence.

\noindent \textbf{Accept \(\mathcal{A}_t\):}
The leader accepts arm \(k\) if  the arm's resources are well estimated, i.e., \(k\in\mathcal{K}_t\setminus \mathcal{K}'_t\) and \(
    \sum_{j\in\mathcal{K}_t: g(k,j) = 0} m_{j,t}^u \le M_t
\), which means the total resources' upper confidence bounds 
of arms which cannot be distinguished as bad by arm \(k\) (i.e., \(g(k,j) = 0\)) is no greater than the number of active players \(M_t\).
Thus, the accept arm set \(\mathcal{A}_t\) can be set as:
\begin{equation}\label{eq:accept_set}
    \mathcal{A}_t \coloneqq \left\{k\in\mathcal{K}_t\setminus \mathcal{K}'_t:   \sum\nolimits_{j\in\mathcal{K}_t: g(k,j) = 0}  m_{j,t}^u \le M_t\right\}. 
\end{equation}

\noindent \textbf{Reject \(\mathcal{R}_t\):}
The leader rejects arm \(k\), if \(
    \sum_{j\in\mathcal{K}_t: g(j,k) = 1} m_{j,t}^l \ge M_t 
\). 
The inequality means that the total resources' lower confidence bounds of arms whose reward mean is higher than arm \(k\)'s with high confidence (i.e., \(g(j,k) = 1\)) is no less than \(M_t\).
Thus, the reject arm set \(\mathcal{R}_t\) can be expressed as:
\begin{equation}\label{eq:reject_set}
    \mathcal{R}_t \coloneqq \left\{k\in\mathcal{K}_t: \sum\nolimits_{j\in\mathcal{K}_t: g(j,k) = 1} m_{j,t}^l \ge M_t\right\}. 
\end{equation}

\noindent \textbf{Least favored arm \(\mathcal{L}_t\):}
An arm \(k\) is the empirical least favored among optimal arms if its empirical reward mean is higher than all other current active arms' with high confidence (i.e., the only optimal arm among them), and its resources' lower confidence bound is no less than \(M_t\), i.e.,
\begin{equation}\label{eq:marginal_arm}
    \mathcal{L}_t\! \coloneqq \!\left\{k\in\mathcal{K}_t \! : \!\! \sum\nolimits_{j\in\mathcal{K}_t} \! g(k,j)\! = \! K_t\! - \!1, m_{k,t}^l \!\ge\! M_t \right\}.
\end{equation}

The detailed successive accept and reject algorithm is presented in Algorithm~\ref{alg:sic_sda_accept_and_reject}.

\begin{algorithm}[htp]
    \caption{\texttt{SIC-SDA.AccRej}\\ \hspace*{\fill}
    \((\bm{\Lambda}_t= (\bm{S}^{\text{IE}}_t, \bm{S}^{\text{UE}}_t, \bm{\tau}_t, \bm{\iota}_t), (\mathcal{K}_t, M_t)\in\bm{\Psi}_t)\)}
    \label{alg:sic_sda_accept_and_reject}
    \begin{algorithmic}[1]
        \State \algorithmicensure \(\bm{\Phi}_t= (\mathcal{A}_t, \mathcal{R}_t, \mathcal{L}_t,\bm{m}_t^l, \bm{m}_t^u)\)
        \If{\(i=1\)} \Comment{\textit{Leader}}
            \State \(\hat{\mu}_{k,t}\gets {S}_{k,t}^{\text{IE}} / {\tau}_{k,t}, \hat{\nu}_t\gets {S}_{k,t}^{\text{UE}} / {\iota}_{k,t},\,\forall k\)
            \State Update \(\bm{m}^l_t, \bm{m}^u_t\) according to Eq.(\ref{eq:lower_bound_update})(\ref{eq:upper_bound_update})
            \State \multiline{Update \(\mathcal{A}_t\), \(\mathcal{R}_t\), \(\mathcal{L}_t\) by Eq.(\ref{eq:accept_set})(\ref{eq:reject_set})(\ref{eq:marginal_arm})}
        \EndIf
    \end{algorithmic}
\end{algorithm}

\subsection{Update Information for Each Player}~\label{appsub:sic_sda_update}

After the leader sends back his updated information to followers, all players update their status according to Algorithm~\ref{alg:sic_sda_update}.
It consists of three steps: (1) upon finding the least favored arm,
(line~\ref{alg:line:sic_sda_update_least_favored_arm}), 
all active players turn to exploit this arm,
(2) if there are accepted arms,
(line~\ref{alg:line:sic_sda_update_accepted_arm}), 
several players will turn to exploit these arm, and the number of players exploiting an arm is equal to the arm's resources capacity,
and (3) if  \(M_t > K_t\) (line~\ref{alg:line:sic_sda_update_b}), update the vector \(\bm{b}_t\) to record the player allocation in the next IE round. 

\begin{algorithm}[htp]
    \caption{\(\texttt{SIC-SDA.Update}\\ \hspace*{\fill}
    (i, (\mathcal{K}_t, M_t)\in\bm{\Psi}_t, \bm{\Phi}_t= (\mathcal{A}_t, \mathcal{R}_t, \mathcal{L}_t,\bm{m}_t^l, \bm{m}_t^u)\)}
    \label{alg:sic_sda_update}
    \begin{algorithmic}[1]
        \State \algorithmicensure \(\exparm, \bm{\Psi}_t=(\bm{b}_t, \mathcal{K}_t, M_t)\)
        \State \(M_t \gets M_t - \sum_{k\in\mathcal{A}_t}m_{k,t}^l\)
        \State \(\mathcal{K}_t \gets \mathcal{K}_t\setminus (\mathcal{A}_t \cup \mathcal{R}_t \cup \mathcal{L}_t)\)
        \Statex \LeftComment{Exploit the least favored arm}
        \If{\(\mathcal{L}_t \neq \emptyset\)}\label{alg:line:sic_sda_update_least_favored_arm}
            \State  \(\exparm\gets \mathcal{L}_t.\)
        \Statex \LeftComment{Exploit accepted arm}
        \ElsIf{\(i > M_t\)}\label{alg:line:sic_sda_update_accepted_arm}
            \State \multiline{\(\exparm\gets j\)-th accept arm where such that \(\sum_{n=1}^j m_{\sigma'(n),t}^l \ge i-M_t > \sum_{n=1}^{j-1} m_{\sigma'(n),t}^l\) where \(\sigma'(n)\) is the \(n\)-th accept arm.}
        \Else 
        \Statex \LeftComment{Update players' allocation in next IE round}
            \State \(\bm{b}_t\gets \bm{1}.\)  \label{alg:sic_sda_update:b_start}
            \If{\(M_t > K_t\)}\label{alg:line:sic_sda_update_b} 
                \State \(d\gets M_t - \abs{\mathcal{K}_t}.\)
                \ForAll{\(k \in \mathcal{K}_t\)}
                \While{\(d>0\) and \(b_{k,t} < m_{k,t}^l\)}
                    \State \(b_{k,t} \gets b_{k,t} + 1\); \(d\gets d - 1\).
                \EndWhile
                \EndFor
            \EndIf \label{alg:sic_sda_update:b_end}
        \EndIf
    \end{algorithmic}
\end{algorithm}
\section{Regret Analysis of \texttt{DPE-SDI} (Theorem~\ref{thm:dpe_sdi_regret})}\label{app:dpe_sdi_regret}
We separately bound the regret of each phase of \texttt{DPE-SDI}.
Denote \(R^{\text{init}}_{\texttt{DPE}}, R^{\text{explo}}_{\texttt{DPE}}, R^{\text{comm}}_{\texttt{DPE}}\) as the regret upper bounds of initialization phase, exploration-exploitation phase, and communication phase respectively. Then, the total regret \(\ERT\) is upper bounded as follows: 
\[
    \ERT \le R^{\text{init}}_{\texttt{DPE}} + R^{\text{explo}}_{\texttt{DPE}} + R^{\text{comm}}_{\texttt{DPE}},
\]
where \(R^{\text{init}}_{\texttt{DPE}}\)'s upper bound is provided in Eq.(\ref{eq:dpe_sdi_regret_init}), \(R^{\text{explo}}_{\texttt{DPE}}\)'s upper bound is provided in Eq.(\ref{eq:dpe_sdi_regret_explo}), and 
\(R^{\text{comm}}_{\texttt{DPE}}\)'s upper bound is provided in Eq.(\ref{eq:dpe_sdi_regret_comm}).

\subsection{Initialization Phase}\label{appsub:dpe_sdi_regret_init}

\begin{lemma}[{\cite{wang_optimal_2020}'s Lemma 1}]\label{lma:orthogoanlization}
    Let \(s\) denotes the number of rounds required to end the orthogonalization sub-phase. 
    Then \[
        \E[s] \le \frac{M(K-1)(K+1)}{K-M}.  
    \]
\end{lemma}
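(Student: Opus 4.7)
The plan is to analyze the orthogonalization sub-phase as an absorbing Markov chain on the state $N_s$, defined as the number of players still holding the temporary rank $M+1$ after $s$ rounds, with $N_0 = M$ and absorbing state $0$. The target quantity is $\E[\tau]$ where $\tau \coloneqq \min\{s : N_s = 0\}$, which by definition equals $s$.

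First I would derive a per-round progress bound. Conditional on $N_s = n$, the $M - n$ already-ranked players occupy $M - n$ distinct ``claimed'' arms, and each of the $n$ unranked players independently samples one of the $K$ arms uniformly in the first slot of the next round. A given unranked player becomes newly ranked iff his sampled arm is (i) unclaimed and (ii) not chosen by any of the other $n - 1$ unranked players. By independence, the success probability of a fixed unranked player is at least
\[
p_n \;\ge\; \frac{K - (M-n)}{K}\left(\frac{K-1}{K}\right)^{n-1},
\]
which, after multiplying by $n$, lower bounds the expected one-step decrement $\E[N_s - N_{s+1} \mid N_s = n]$.

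Next, I would convert this per-round progress into a bound on $\E[\tau]$ by one of two routes. Route (a): decompose $\E[\tau] = \sum_{n=1}^{M} \E[\text{sojourn time at level } n]$ and upper bound each sojourn by $1/\P(N_{s+1} < n \mid N_s = n)$. Route (b): exhibit a Lyapunov function $\Phi$ with $\Phi(0) = 0$ and $\E[\Phi(N_{s+1}) \mid N_s] \le \Phi(N_s) - 1$, which yields $\E[\tau] \le \Phi(M)$. The algebraic shape $M(K-1)(K+1)/(K-M)$ strongly suggests a linear potential $\Phi(n) = n(K-1)(K+1)/(K-M)$, reducing the task to showing an expected per-round decrement of at least $(K-M)/((K-1)(K+1))$.

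The main obstacle will be producing the clean factor $(K-1)(K+1) = K^2 - 1$ in the denominator from the expression for $p_n$: the term $(1 - 1/K)^{n-1}$ does not telescope and resists naive bounding. I expect the argument of Wang et al.~2020 exploits either a symmetry (the ranked arms remain fixed across rounds and disjoint from one another) or a careful pairwise coupling so that the effective one-step success probability simplifies to a form linear in $(K-M)/(K^2-1)$, independent of $n$. Once such a sharp per-round bound is in hand, plugging it into either route above delivers the claimed inequality $\E[s] \le M(K-1)(K+1)/(K-M)$.
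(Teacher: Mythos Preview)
The paper does not supply its own proof of this lemma; it is quoted verbatim from Wang et al.\ (2020) and used as a black box (see Appendix~\ref{appsub:dpe_sdi_regret_init}, where the bound is merely substituted with $K\to M+1$). There is therefore no in-paper argument to compare against.

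That said, your Markov-chain framework is sound, but the obstacle you flag is self-inflicted. You do not need the factor $(K-1)(K+1)$ to emerge organically from the exact expression for $p_n$; any state-independent lower bound on the per-player success probability that implies the stated inequality suffices, and a crude union bound already delivers one. Conditional on $N_s=n$ and on the full history, a fixed unranked player fails only if her sampled arm is one of the $M-n$ claimed arms or coincides with the choice of one of the other $n-1$ unranked players, so
\[
\P(\text{fail}\mid \mathcal{F}_s,\,N_s=n)\;\le\;\frac{M-n}{K}+\frac{n-1}{K}\;=\;\frac{M-1}{K},
\]
independently of $n$. Iterating gives $\P(\tau_i>s)\le ((M-1)/K)^s$ for each player $i$, and a union bound over players yields $\P(\tau>s)\le M((M-1)/K)^s$. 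Summing over $s\ge 0$,
\[
\E[s]\;\le\;\frac{MK}{K-M+1}\;\le\;\frac{M(K-1)(K+1)}{K-M},
\]
the last inequality being the elementary check $K(K-M)\le (K^2-1)(K-M+1)$ for $K\ge 2$, $M<K$. This is your Route~(b) with a state-free drift of $(K-M+1)/K$ in place of the awkward $n p_n$, and it in fact yields a sharper constant than the lemma claims. The speculation about a special coupling or symmetry in Wang et al.\ is unnecessary: the stated bound is simply loose, not the output of a tight calculation.
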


Note that the \texttt{DPE-SDI} applies the orthogonalization with the number of players \(M\) being known.
So we just need to replace the \(K\) in Lemma~\ref{lma:orthogoanlization} by \(M+1\), which show the total time slots in this sub-phase is at most \(M^3+2M^2+1\), where the additional \(1\) corresponds to ``Rally''. 
In each time slot of initialization phase, the cost is at most \(M\).
So, the total cost of initialization phase is upper bounded as follows. 
\begin{equation}
    \label{eq:dpe_sdi_regret_init}
    R^{\text{init}}_{\texttt{DPE}} \le M(M^3+2M^2+1)= M^4 + 2M^3 + M.    
\end{equation}

\subsection{Exploration-Exploitation Phase}\label{appsub:dpe_sdi_regret_explo}

We first state two useful lemmas as building blocks in this section's proof.  
\begin{lemma}[{\cite{wang_optimal_2020}'s Lemma 3}]
    \label{lma:empirical_mean}
    Let $k\in [K]$, and \(c>0\). Let \(H\) be a random set of rounds such that for all \(t,\, \{t\in H\}\in \mathcal{F}_{t-1}\). 
    Assume that there exists \((C_t)_{t\ge 0}\), a sequence of independent binary random variables such that for any \(t\ge 1\), 
    \(C_t\) is \(\mathcal{F}_t\)-measurable and \(\P[C_t=1]\ge c\). Further assume for any \(t\in H\), \(k\) is selected (\(\rho(t)=k\)) if \(C_t=1\). Then, 
    \[
        \sum_{t\ge 1}\P\left[\{ t\in H, \abs{\hat{\mu}_{k,t} - \mu_k} \ge \delta \}\right] \le 2c^{-1}(2c^{-1} + \delta^{-2}).
    \]
\end{lemma}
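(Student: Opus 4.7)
The approach is to exploit the independent Bernoulli minorant $(C_t)$ to force a positive pull rate of arm $k$ on the rounds in $H$, and to combine this rate lower bound with Hoeffding's inequality for arm $k$'s reward samples. The first step is to couple with an i.i.d.\ sequence: let $(X_i)_{i\ge 1}$ be i.i.d.\ copies of arm $k$'s reward revealed in the order in which the arm is pulled, and set $Y_n \coloneqq \frac{1}{n}\sum_{i=1}^n X_i$, so that $\hat\mu_{k,t}=Y_{\tau_{k,t}}$ whenever $\tau_{k,t}\ge 1$. Hoeffding then gives $\P[|Y_n-\mu_k|\ge\delta]\le 2e^{-2n\delta^2}$ for all $n\ge 1$.

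The second step is to reparametrize the target sum by the value of $\tau_{k,t}$. Let $M_n\coloneqq |\{t\ge 1 : t\in H,\, \tau_{k,t}=n\}|$ be the (random) number of $H$-rounds during which the pre-round pull count of arm $k$ equals $n$. Regrouping yields
\[
    \sum_{t\ge 1}\P\bigl[t\in H,\,|\hat\mu_{k,t}-\mu_k|\ge\delta\bigr] = \sum_{n\ge 0}\E\bigl[M_n\,\mathbbm{1}\{|Y_n-\mu_k|\ge\delta\}\bigr].
\]
The key estimate to establish is $\E[M_n\mid \mathcal{G}_n]\le 1/c$ almost surely, where $\mathcal{G}_n$ is the sigma-algebra generated by the history up to and including the $n$-th pull of arm $k$ (so $Y_n$ is $\mathcal{G}_n$-measurable). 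The argument is that after the $n$-th pull, $\tau_{k,\cdot}$ stays at $n$ until the next pull, and the next pull occurs at or before the first subsequent $H$-round $s$ with $C_s=1$; since $\{s\in H\}\in\mathcal{F}_{s-1}$ and $C_s$ is independent of $\mathcal{F}_{s-1}$ with $\P[C_s=1]\ge c$, the count of $H$-rounds before this first forced pull is dominated by a Geometric$(c)$ variable of mean $1/c$. With the bound in hand,
\[
    \E\bigl[M_n\,\mathbbm{1}\{|Y_n-\mu_k|\ge\delta\}\bigr] = \E\bigl[\mathbbm{1}\{|Y_n-\mu_k|\ge\delta\}\,\E[M_n\mid\mathcal{G}_n]\bigr] \le \tfrac{1}{c}\P[|Y_n-\mu_k|\ge\delta],
\]
and splitting the outer sum at a threshold $n_0$ gives a ``head'' $\sum_{n<n_0}\E[M_n]\le n_0/c$ and a ``tail'' $\frac{2}{c}\sum_{n\ge n_0}e^{-2n\delta^2}\le \frac{2}{c(1-e^{-2\delta^2})}\le \frac{2}{c\delta^2}$ (using $1-e^{-2\delta^2}\ge\delta^2$ for $\delta\in(0,1]$). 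Choosing $n_0$ of order $1/c$ and bounding the constants loosely produces the claimed $2c^{-1}(2c^{-1}+\delta^{-2})$.

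\textbf{Main obstacle.} The delicate point is the almost-sure bound $\E[M_n\mid\mathcal{G}_n]\le 1/c$. Because $H$ and the algorithm's subsequent actions may depend on past rewards, and hence on $Y_n$, one cannot naively factorize the expectation $\E[M_n\mathbbm{1}\{|Y_n-\mu_k|\ge\delta\}]$ into a product of marginals. The rescue is that the $(C_t)$ are independent of the entire history they are not measurable with respect to: conditionally on any realization of $\mathcal{G}_n$, the forced pulls on future $H$-rounds still arrive with minimum rate $c$, which is exactly what makes the geometric waiting-time bound hold pathwise. Getting the constants in the final bound to match the lemma's statement also requires some care in choosing $n_0$ and in handling the $n=0$ corner case, but these are routine once the conditional-expectation bound is in place.
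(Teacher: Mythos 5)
The paper does not prove this lemma itself; it is imported verbatim as Lemma~3 of \cite{wang_optimal_2020}, and your argument reconstructs essentially the same proof: regroup the sum by the pull count \(\tau_{k,t}=n\), bound the expected number of \(H\)-rounds at each fixed count by \(1/c\) via geometric domination using the independence of the \(C_t\), and finish with Hoeffding plus a geometric series. The only blemish is the inequality \(1-e^{-2\delta^2}\ge\delta^2\), which fails for \(\delta^2\) close to \(1\); replacing it with \(\sum_{n\ge n_0}e^{-2n\delta^2}\le 1/(e^{2\delta^2}-1)\le 1/(2\delta^2)\) repairs the tail estimate and still yields the claimed constant with room to spare (indeed \(\tfrac{1}{c}+\tfrac{1}{c\delta^2}\le 2c^{-1}(2c^{-1}+\delta^{-2})\) since \(c\le 1\)).
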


\begin{lemma}\label{lma:kl_ucb}
    Under the \emph{\texttt{DPE-SDI}} algorithm, for any arm \(k\in[K]\), we have 
    \[
        \sum_{t\ge 0} \P[u_{k,t} < \mu_k] \le 30M.
    \]
\end{lemma}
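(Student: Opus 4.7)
The plan is to follow the classical KL-UCB concentration analysis of Cappé et al.~[2013], adapted to the DPE-SDI setting where only the leader's IE pulls increment $\tau_{k,t}$ and refresh $\hat{\mu}_{k,t}$. First I would rewrite the event using the definition of the index: since $\kl(\hat{\mu}_{k,t},\cdot)$ is non-decreasing on $[\hat{\mu}_{k,t},1]$, one has $\{u_{k,t}<\mu_k\}=\{\hat{\mu}_{k,t}<\mu_k\}\cap\{\tau_{k,t}\kl(\hat{\mu}_{k,t},\mu_k)>f(t)\}$, which is the standard gateway to Chernoff-style deviation bounds for the empirical KL divergence.

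Next I would partition the time axis by the value of $\tau_{k,t}$. Letting $I_n\coloneqq\{t:\tau_{k,t}=n\}$,
\[
\sum_{t\ge 0}\P[u_{k,t}<\mu_k]=\sum_{n\ge 1}\E\Big[\sum_{t\in I_n}\mathbbm{1}\{u_{k,t}<\mu_k\}\Big].
\]
On each block $I_n$ the empirical mean is frozen at $\hat{\mu}_{k,n}$ while $f(t)$ only grows, so the event of interest happens on a prefix of $I_n$. The rotation rule inside each $M$-slot round of the exploration-exploitation phase guarantees that consecutive IE pulls of arm $k$ are at most $O(M)$ slots apart: for optimal arms this is immediate from the rotation through $\mathcal{S}_t$, and for sub-optimal arms it follows because parsimonious exploration fires with probability $1/2$ whenever the leader would otherwise play $\mathcal{L}_t$. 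Hence $\E[|I_n|]=O(M)$, and it suffices to bound $\sum_{n\ge 1}\P[\hat{\mu}_{k,n}<\mu_k,\,n\kl(\hat{\mu}_{k,n},\mu_k)>f(t_n)]$, where $t_n$ is the first slot of $I_n$. A standard peeling argument over the range of $\hat{\mu}_{k,n}$ yields a term of order $e\lceil f(t_n)\log t_n\rceil e^{-f(t_n)}$; substituting $f(t)=\log t+4\log\log t$ gives $e^{-f(t)}=1/(t(\log t)^4)$, so the double sum over $n$ and $t$ converges to a universal constant. Multiplying by the $O(M)$ block length produces the stated $30M$.

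The main obstacle will be controlling $|I_n|$ for sub-optimal arms, whose IE samples arrive only through the randomized parsimonious mechanism: there the gap between successive pulls is geometrically distributed and no deterministic $|I_n|\le M$ bound holds. I would handle this by invoking a Lemma-\ref{lma:empirical_mean}-style argument, identifying the parsimonious coin with the event $\{C_t=1\}$ of that lemma (so that $c=\tfrac{1}{2K}$ or similar), which converts the randomized schedule into an effective sample-count bound and lets the KL-UCB peeling proceed as in the single-player case. UE pulls pose no additional difficulty: they update only $\hat{\nu}_{k,t}$ and leave the pair $(\tau_{k,t},\hat{\mu}_{k,t})$—hence the KL-UCB index $u_{k,t}$—unchanged.
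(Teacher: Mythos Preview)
The paper's proof is a two-line reduction that bypasses everything you set up. It does not partition by the IE sample count $\tau_{k,t}$ at all. The observation is that in \texttt{DPE-SDI} the index $u_{k,t}$ is only recomputed once per exploration--exploitation phase, and each such phase occupies at most $2M$ time slots (the $M$ rotation slots plus at most $|\mathcal{S}'_t|\le M$ united-exploration slots). Hence
\[
\sum_{t\ge 0}\P[u_{k,t}<\mu_k]\;\le\;2M\sum_{t'}\P[u_{k,t'}<\mu_k],
\]
where $t'$ ranges over update times. The right-hand sum is then bounded by $15$ via a direct citation of Lemma~6 in Combes--Proutiere~(2015), giving $2M\cdot 15=30M$.

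Your decomposition by $\tau_{k,t}$ runs into exactly the obstacle you flag at the end, but more seriously than you acknowledge. For a sub-optimal arm $k$, consecutive IE pulls are \emph{not} $O(M)$ apart: the leader reaches $\mathcal{L}_t$ once every $M$ slots and then picks arm $k$ only with probability $1/(2|\mathcal{E}_t|)$, so $\E[|I_n|]$ is of order $M|\mathcal{E}_t|$ at best; and once $k$ permanently leaves $\mathcal{E}_t$, the last block $I_n$ has length $\Theta(T)$. Your proposed fix through Lemma~\ref{lma:empirical_mean} with $c=\tfrac{1}{2K}$ would inject an extra factor of $K$ into the bound, not recover the stated $30M$. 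The step you are missing is that the relevant block structure is the algorithm's \emph{update schedule} (phases of length $\le 2M$, independent of which arm is under consideration), not the random times at which $\tau_{k,t}$ happens to increment.
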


\begin{proof}
    In \texttt{DPE-SDI}, we update \(u_{k,t}\) once every \emph{exploration-exploitation phase} which is at most \(2M\) rounds. 
    Then, we have \[
        \sum_{t\ge 0} \P[u_{k,t} < \mu_k]  \le 2M\sum_{t'\ge 0} \P[u_{k,t'} < \mu_k],
    \]
    where \(t'\) represents the time slots when the KL-UCB index \(u_{k,t}\) is updated.
    Utilizing~\cite{combes_learning_2015}'s Lemma 6, we have \(\sum_{t'\ge 0} \P[u_{k,t'} < \mu_k] 
    \le 15\). Hence, we can conclude \(\sum_{t\ge 0} \P[u_{k,t} < \mu_k]  \le 30M.
        \)
\end{proof}
We first assume that the shareable resources capacities \(m_k\) of all arms are known, and bound the regret of \emph{exploration-exploitation phase} under this assumption, 
in which united explorations are omitted due to \(\mathcal{S}'_t=\emptyset\).
After that, we will show how to bound the regret of \emph{exploration-exploitation phase} under the unknown \(m_k\) case. 

Note that in \texttt{DPE-SDI}, all empirical reward means \(\hat{\mu}_{k,t}\) and KL-UCB indexes \(u_{k,t}\) are updated after every \textit{exploration-exploitation phase}. 
To simplify this kind of statistics update, we denote \(n(t)\) as the last update time slot for statistics used in time slot \(t\).
Denote $\hatbm{a}^*_t = \Oracle(\hat{\bm{\mu}}(t), \bm{m})$ as the best allocation based on all arms' empirical ``per-load'' reward means. 
Let $0<\delta< \min_{k=1}^{K-1}\frac{\mu_k - \mu_{k+1}}{2}$. We define several sets as follows,
\[
    \begin{split}
    \mathcal{A} \coloneqq & \{t\ge 1: \hatbm{a}^*_{n(t)} \neq \bm{a}^*\}, \\
    \mathcal{B} \coloneqq & \{t\ge 1: \exists k\in [K]\text{ s.t. } \hat{a}^{*}_{k,n(t)} > 0, \abs{\hat{\mu}_{k,n(t)} -\mu_k}\ge \delta\},\\
    \mathcal{C} \coloneqq & \{t\ge 1: \exists k\in[K]\text{ s.t. } a^*_k > 0, u_{k,n(t)}< \mu_k\},\\
    \mathcal{D} \coloneqq & \{t\ge 1: t \in \mathcal{A}\setminus (\mathcal{B}\cup\mathcal{C}), \exists k\in [K] \text{ s.t. } \\
    & \qquad\qquad\quad  a_k^*>0, \hat{a}_{k,n(t)}^*=0, \abs{\hat{\mu}_{k,n(t)} - \mu_k}\ge \delta\}.
    \end{split}
\]

\begin{lemma}\label{lma:kc_initial_bound_1}
    \(
        \mathcal{A}\cup\mathcal{B} \subseteq \mathcal{B}\cup\mathcal{C}\cup\mathcal{D}
    \) and thus \(
        \E[\abs{\mathcal{A}\cup\mathcal{B}}] \le  \E[\abs{\mathcal{B}}]+\E[\abs{\mathcal{C}}]+\E[\abs{\mathcal{D}}].
    \)
\end{lemma}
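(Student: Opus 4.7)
The plan is to establish the set inclusion $\mathcal{A}\setminus(\mathcal{B}\cup\mathcal{C})\subseteq \mathcal{D}$; combined with the trivial $\mathcal{B}\subseteq\mathcal{B}\cup\mathcal{C}\cup\mathcal{D}$, this yields $\mathcal{A}\cup\mathcal{B}\subseteq\mathcal{B}\cup\mathcal{C}\cup\mathcal{D}$, and taking expectations of the corresponding cardinalities (indicator sums) produces the expectation bound by a union bound. So the entire content of the lemma reduces to: for an arbitrary $t\in\mathcal{A}\setminus(\mathcal{B}\cup\mathcal{C})$, produce some arm $k^*$ satisfying $a^*_{k^*}>0$, $\hat{a}^*_{k^*,n(t)}=0$, and $|\hat{\mu}_{k^*,n(t)}-\mu_{k^*}|\ge\delta$.

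Fix such a $t$ and write $\hat{S}=\{k:\hat{a}^*_{k,n(t)}>0\}$ and $S^*=\{1,\ldots,L\}$. First I would argue $S^*\not\subseteq\hat{S}$. Assuming the contrary, sort $\hat{S}$ by $\hat{\mu}_{\cdot,n(t)}$ decreasingly and compare $S^*$ to the top-$L$ empirical slots. If these two $L$-element sets differ, a symmetric-difference argument yields some $j\in S^*$ ranked after some sub-optimal $i\in\hat{S}\setminus S^*$; since $i,j\in\hat{S}$, the condition $t\notin\mathcal{B}$ pins both empirical means within $\delta$ of their truths, and the empirical inequality $\hat{\mu}_{j,n(t)}\le\hat{\mu}_{i,n(t)}$ forces $\mu_j-\mu_i<2\delta$, contradicting $\mu_L-\mu_{L+1}>2\delta$. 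Hence the top-$L$ empirical slots equal $S^*$; since $\sum_{k\in S^*}m_k\ge M$, the Oracle terminates at $\hat{L}=L$ and $\hat{S}=S^*$; and the same swap argument applied inside $\hat{S}$ forces the empirical order within $S^*$ to match the true order, giving $\hatbm{a}^*_{n(t)}=\bm{a}^*$ and contradicting $t\in\mathcal{A}$.

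With some $k^*\in S^*\setminus\hat{S}$ in hand, it remains to show $|\hat{\mu}_{k^*,n(t)}-\mu_{k^*}|\ge\delta$. Because $k^*\le L$ and $L=\min\{l:\sum_{k=1}^l m_k\ge M\}$, we have $\sum_{k=1}^{k^*-1}m_k<M\le\sum_{k\in\hat{S}}m_k$, which prevents $\hat{S}\subseteq\{1,\ldots,k^*-1\}$ and supplies some $j\in\hat{S}$ with $j>k^*$, so that $\mu_j<\mu_{k^*}-2\delta$. Oracle's greedy rule forces $\hat{\mu}_{k^*,n(t)}\le\hat{\mu}_{j,n(t)}$, and combined with $\hat{\mu}_{j,n(t)}<\mu_j+\delta$ (from $j\in\hat{S}$ and $t\notin\mathcal{B}$) this gives $\hat{\mu}_{k^*,n(t)}<\mu_{k^*}-\delta$. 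Hence $k^*$ is a valid witness and $t\in\mathcal{D}$.

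The main obstacle is the first sub-claim that $\hat{S}=S^*$ whenever $S^*\subseteq\hat{S}$. It couples Oracle's capacity-filling greedy rule (which would terminate at $\hat{L}\le L$ once all of $S^*$ is empirically available) with the mean separation guaranteed by $t\notin\mathcal{B}$ and the gap condition on $\delta$. A consistent tie-breaking convention in the Oracle is implicit but does not interfere, since every strict comparison above carries $2\delta$ slack. I note also that $\mathcal{C}$ never enters the witness construction for $k^*$; it appears only through the qualifying condition $t\notin\mathcal{C}$ built into the definition of $\mathcal{D}$, and its presence on the right-hand side of the inclusion is therefore just absorbed for free.
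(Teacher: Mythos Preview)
Your proof is correct and follows the same route as the paper's: reduce to $\mathcal{A}\setminus(\mathcal{B}\cup\mathcal{C})\subseteq\mathcal{D}$ and exhibit a witness arm $k^*\in S^*\setminus\hat{S}$ whose empirical mean is forced below $\mu_{k^*}-\delta$ by comparison with some $j\in\hat{S}$ whose empirical mean is pinned by $t\notin\mathcal{B}$. Your argument is in fact more careful than the paper's---you explicitly dispose of the case $S^*\subseteq\hat{S}$ via the Oracle/ordering argument and then locate the comparator $j>k^*$ through the capacity inequality $\sum_{k<k^*}m_k<M\le\sum_{k\in\hat{S}}m_k$, whereas the paper simply asserts the simultaneous existence of $j\in\mathcal{S}^*\setminus\mathcal{S}_{n(t)}$ and $k\in\mathcal{S}_{n(t)}\setminus\mathcal{S}^*$ from $\mathcal{S}^*\neq\mathcal{S}_{n(t)}$ alone.
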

\begin{proof}[{Proof of Lemma~\ref{lma:kc_initial_bound_1}}]
    This proof is similar to~\cite{wang_optimal_2020} Lemma 5's. Denote \(t\in \mathcal{A}\setminus (\mathcal{B}\cup \mathcal{C})\). To prove the lemma, we need to show that \(t\in\mathcal{D}\). 
    Since \(t\notin \mathcal{B}\), for all \(k\in [K]\) such that \(\hat{a}_{k,n(t)}^*>0\), we have \begin{equation}\label{eq:kc_initial_bound_1:eq1}
        \abs{\hat{\mu}_{k,n(t)} - \mu_k} < \delta.
    \end{equation}
    Then, for \(t\in \mathcal{A}\setminus\mathcal{B}\), the optimal arm set \(\mathcal{S}^*\coloneqq\{k:a_k^* > 0\}\) is different from the empirical optimal arm set \(\mathcal{S}_{n(t)}\coloneqq\{k:\hat{a}_{k,n(t)}^* > 0\}\).
    Because \(t\notin \mathcal{B}\) implies that the order of empirical optimal arms is the same as the order of these arms' true reward means and thus \(\mathcal{S}^* = \mathcal{S}_{n(t)}\) is equivalent to \(\bm{a}^*=\hatbm{a}^*_{n(t)}\).
    So, there exists \(j\in[K], a^*_j>0, \hat{a}^*_{j,n(t)}=0\) such that
    \begin{equation}
        \label{eq:kc_initial_bound_1:eq2}
        \hat{\mu}_{j,t} < \hat{\mu}_{k,n(t)}\text{ for some }k, \hat{a}^*_{k,n(t)}>0, a^*_k=0.
    \end{equation}
    Combining (\ref{eq:kc_initial_bound_1:eq1}) and (\ref{eq:kc_initial_bound_1:eq2}) leads to \(
        \hat{\mu}_{j,t} < \hat{\mu}_{k,n(t)} \le \mu_k + \delta \le \mu_L - \delta \le \mu_j - \delta.
    \)
    The last two inequalities are due to our assumption that \(j \le L < k\) and \(\delta < \delta_0\). 
    It implies \(\abs{\hat{\mu}_{j,t} - \mu_j}\ge \delta\) and thus, \(t\in\mathcal{D}\). 
    Therefore, \(\mathcal{A}\cup\mathcal{B}\subseteq \mathcal{B}\cup\mathcal{C}\cup\mathcal{D}\).
\end{proof}


\begin{lemma}\label{lma:kc_initial_bound_2}
    \[
        \E[\abs{\mathcal{B}}] +\E[\abs{\mathcal{C}}] + \E[\abs{\mathcal{D}}] \le 6K^2 M^2 (4+\delta^{-2})
    \]
\end{lemma}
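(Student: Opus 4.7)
The plan is to bound each of $\E[|\mathcal{B}|]$, $\E[|\mathcal{C}|]$, $\E[|\mathcal{D}|]$ separately by a union bound over arms, invoking Lemma~\ref{lma:empirical_mean} for $\mathcal{B}$ and $\mathcal{D}$ and Lemma~\ref{lma:kl_ucb} for $\mathcal{C}$, and then combining the three with loose constants to match $6K^2M^2(4+\delta^{-2})$.

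The easy cases are $\mathcal{C}$ and $\mathcal{B}$. For $\mathcal{C}$, a truly optimal arm $k$ can contribute to a round $t$ only if $u_{k,n(t)}<\mu_k$; Lemma~\ref{lma:kl_ucb} bounds such rounds by $30M$ per arm, so summing over at most $K$ truly optimal arms gives $\E[|\mathcal{C}|]\le 30KM$. For $\mathcal{B}$, I fix $k\in[K]$, set $H_k=\{t:\hat{a}^*_{k,n(t)}>0\}$, and let $C_t$ indicate that the leader individually explores arm $k$ at $t$. Because the leader rotates across at most $M$ positions of $\hatbm{a}^*_{n(t)}$ and, even in the slot where $k=\mathcal{L}_{n(t)}$, the parsimonious branch still plays $k$ with probability $1/2$, one can take $c=1/(2M)$ in Lemma~\ref{lma:empirical_mean}, and then sum the resulting per-arm estimate $2c^{-1}(2c^{-1}+\delta^{-2})$ over $k\in[K]$.

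For $\mathcal{D}$ the key structural step is: for every $t\in\mathcal{A}\setminus(\mathcal{B}\cup\mathcal{C})$ and every truly optimal arm $k$ with $\hat{a}^*_{k,n(t)}=0$, the arm $k$ belongs to $\mathcal{E}_{n(t)}$. Since $t\notin\mathcal{C}$ gives $u_{k,n(t)}\ge\mu_k$, it suffices to show $\mu_k>\hat{\mu}_{\mathcal{L}_{n(t)},n(t)}$. Because $\hatbm{a}^*_{n(t)}\neq\bm{a}^*$ while both profiles allocate $M$ units of resources in total, some truly sub-optimal arm $j$ must lie in $\mathcal{S}_{n(t)}$; on $\mathcal{B}^c$ the empirical means of arms in $\mathcal{S}_{n(t)}$ are within $\delta$ of the true means, and since $2\delta<\min_k(\mu_k-\mu_{k+1})$ the empirical ordering on $\mathcal{S}_{n(t)}$ agrees with the true ordering, which forces $j=\mathcal{L}_{n(t)}$ and $\hat{\mu}_{j,n(t)}\le\mu_j+\delta<\mu_L\le\mu_k$. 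With the inclusion $k\in\mathcal{E}_{n(t)}$ in hand, Lemma~\ref{lma:empirical_mean} applies to $H_k=\{t:a_k^*>0,\hat{a}^*_{k,n(t)}=0\}$ with $C_t$ the parsimonious-selection indicator; the probability is at least $1/(2MK)$ (leader at $\mathcal{L}_{n(t)}$ in the rotation, the $1/2$ exploration coin, and a uniform pick from $\mathcal{E}_{n(t)}$ of size at most $K$). Summing the per-arm bound over the at most $K$ optimal arms produces a term of order $K^2M^2+K^2M\delta^{-2}$, which dominates the contributions of $\mathcal{B}$ and $\mathcal{C}$ and drives the final constant.

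The main obstacle will be this structural step for $\mathcal{D}$: disentangling the three pieces of information $t\notin\mathcal{B}$, $t\notin\mathcal{C}$, and $t\in\mathcal{A}$, and showing that together they guarantee that $k$ enters $\mathcal{E}_{n(t)}$ so that parsimonious exploration fires with probability at least $1/(2MK)$. The remaining work is then a routine union bound and a direct invocation of Lemmas~\ref{lma:empirical_mean} and~\ref{lma:kl_ucb}, followed by crude aggregation of constants to arrive at $6K^2M^2(4+\delta^{-2})$.
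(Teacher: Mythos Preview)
Your overall plan matches the paper's: bound $\mathcal{B},\mathcal{C},\mathcal{D}$ separately, use Lemma~\ref{lma:kl_ucb} for $\mathcal{C}$, use Lemma~\ref{lma:empirical_mean} together with the structural inclusion $k\in\mathcal{E}_{n(t)}$ for $\mathcal{D}$, and union-bound. Your structural argument for $\mathcal{D}$ is essentially the paper's.

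The gap is in how you invoke Lemma~\ref{lma:empirical_mean}. You try to absorb the leader's rotation into the probability parameter, taking $c=1/(2M)$ for $\mathcal{B}$ and $c=1/(2MK)$ for $\mathcal{D}$. That is not licensed by the lemma: it requires a sequence of \emph{independent} binary variables $(C_t)$ with $\P[C_t=1]\ge c$ such that, on $H$, $C_t=1$ forces arm $k$ to be selected. The rotation is deterministic given $t$ and $\hatbm{a}^*_{n(t)}$; at any round $t\in H_k$ where the leader's scheduled position is not arm $k$ (resp.\ not $\mathcal{L}_{n(t)}$), the leader plays $k$ with probability exactly $0$, so no independent $C_t$ with $\P[C_t=1]\ge c>0$ can satisfy the required implication. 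The paper fixes this by \emph{restricting} $H$ to the rounds where the leader \emph{is} scheduled at the relevant position, applying Lemma~\ref{lma:empirical_mean} with $c=1/2$ (for $\mathcal{B}$) or $c=1/(2K)$ (for $\mathcal{D}$), and then using the rotation periodicity to bound the remaining rounds by an $(M-1)$-multiplicative factor. This is not merely cosmetic for the constants either: with your $c=1/(2MK)$ the per-arm bound $2c^{-1}(2c^{-1}+\delta^{-2})=4MK(4MK+\delta^{-2})$ already carries a $16K^2M^2$ constant term, and summing over the optimal arms overshoots $6K^2M^2(4+\delta^{-2})$; so the order $K^2M^2+K^2M\delta^{-2}$ you claim for $\mathcal{D}$ does not follow from your stated $c$ even if the lemma applied.
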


\begin{proof}[Proof of Lemma~\ref{lma:kc_initial_bound_2}]
    
\emph{To show $\E[\abs{\mathcal{B}}]\le 4KM(4+\delta^{-2})$. } Let \[
    \mathcal{B}_k\coloneqq \{t\ge 1: \hat{a}_{k,n(t)}^* > 0, \abs{\hat{\mu}_{k,n(t)} - \mu_k}\ge \delta\},
\]
we have $\mathcal{B} = \cup_{1\le k \le K}\mathcal{B}_k$.
Then, for any arm \(k\in [K]\), we decompose \(\mathcal{B}_k\) to two sets: \[
    \begin{split}
        \mathcal{B}_{k,1}=&\{t\in\mathcal{B}_k:l=k,i=1\},\\
        \mathcal{B}_{k,2}=&\mathcal{B}_k\setminus \mathcal{B}_{k,1}.
    \end{split}
\]
The \(\mathcal{B}_{k,1}\) set consists of time slots \(t\) in which it is the leader (\(i=1\))'s turn to play arm \(k\) (\(l =k\)).
In Lemma~\ref{lma:empirical_mean}, we set \(H = \{t\ge 1: l=k,i=1\}, C_t=\1{k_t = k} \) and thus $\P(C_t = 1) \ge \frac{1}{2}$. Then, we have $\E[\abs{\mathcal{B}_{k,1}}]\le 4(4+\delta^{-2})$.

To bound the cardinality of \(\mathcal{B}_{k,2}\), observe that whenever there is a \(t\in\mathcal{B}_{k,1}\), there is at most \(M-1\) other rounds \(t\in\mathcal{B}_k\), in which the leader rotates on other empirical optimal arms. Hence, \(\abs{\mathcal{B}_{k,2}} \le (M-1)\abs{\mathcal{B}_{k,1}}\). 

So, \(\E[\abs{\mathcal{B}}]\le \sum_{k=1}^K M\E [\abs{\mathcal{B}_{k,1}}] \le 4KM(4+\delta^{-2}).\)

\emph{To show $\E[\abs{\mathcal{C}}]\le 30M^2$. } Denote $\mathcal{C}_k \coloneqq \{t\ge 1: u_{k,n(t)} > \mu_k\}$ and we have $\mathcal{C} = \cup_{k=1}^L \mathcal{C}_k$. 
\[ 
        \E[\abs{\mathcal{C}}]\le \sum_{k=1}^L\E[\abs{\mathcal{C}_k}] \le L\cdot 30M \le 30M^2.
\]
where the second inequality holds by Lemma~\ref{lma:kl_ucb}.

\emph{To show $\E[\abs{\mathcal{D}}]\le 4K^2 M^2(4+\delta^{-2})$. } Denote $\mathcal{D}_k \coloneqq \{t\ge 1: t \in \mathcal{A}\setminus (\mathcal{B}\cup\mathcal{C}), a_k^* > 0, \hat{a}_{k,n(t)}^*=0, \abs{\hat{\mu}_{k,n(t)} -\mu_k}\ge \delta\}$. We have $\mathcal{D} =\cup_{k=1}^L\mathcal{D}_k$. 

As $t\notin\mathcal{C}$ we have $u_{k,n(t)} > \mu_k \ge \mu_L$. 
As $t\notin \mathcal{B}$ we know empirical optimal arms corresponding to $\hatbm{a}^*_t$ are ordered correctly and $\mu_{\mathcal{L}_t} + \delta > \\hat{\mu}_{\mathcal{L}_t,t}$. As $t\in \mathcal{A}$, we know the least favored arm's index $\mathcal{L}_t$ is greater than $L$, that is $\mu_L \ge \mu_{\mathcal{L}_t} + \delta$.
Together they lead to $u_{k,n(t)} \ge \\hat{\mu}_{\mathcal{L}_t,t}$, i.e., $k\in\mathcal{E}_t$. As the exploration arm is selected uniformly form $\mathcal{E}_t$, we know $\P(a_{k,n(t)} > 0) \ge \frac{1}{2K}$. 

Next, we decompose \(\mathcal{D}_k\) to sets \[
    \begin{split}
        \mathcal{D}_{k,1}   =& \{t\in\mathcal{D}_k: l = \mathcal{L}_t\},\\
        \mathcal{D}_{k,2} =& \mathcal{D}_k\setminus \mathcal{D}_{k,1}.
    \end{split}
    \]

When \(t\in \mathcal{D}_{k,1}\), i.e., the leader is playing the empirical least favored arm \(\mathcal{L}_t\), 
there is a probability of at least \(1/2K\) to explore the arm \(k\in\mathcal{E}_t\). 
In Lemma~\ref{lma:empirical_mean}, let $H=\{t\in\mathcal{A}\setminus (\mathcal{B}\cup\mathcal{C}), \hat{a}_{k,n(t)}^* = 0\}, C_t = \{a_{k,n(t)} > 0\}$, we have $\E[\abs{\mathcal{D}_{k,1}}] \le 4K(4K + \delta^{-2})\le 4K^2(4+\delta^{-2})$.

For the set \(\mathcal{D}_{k,2}\), as the leader rotates over all empirical optimal arms, \(\abs{\mathcal{D}_{k,2}}\) is no more than \(M-1\) times \(\abs{\mathcal{D}_{k,1}}\). In other words, \(\abs{\mathcal{D}_{k,2}}\le (M-1)\abs{\mathcal{D}_{k,1}}.\)

Summing up the above two cases, we obtain that $\E[\abs{\mathcal{D}}]
\le \E\left[\sum_{k=1}^L \abs{\mathcal{D}_{k}}\right]
\le \E\left[\sum_{k=1}^L (\abs{\mathcal{D}_{k,1}} + \abs{\mathcal{D}_{k,2}})\right] \le 4K^2M^2(4+\delta^{-2})$. 
\end{proof}

\begin{lemma}\label{lma:kc_explore_bound}
    Denote \(\mathcal{G}_k \coloneqq \{t\le T: t\notin \mathcal{A}\cup\mathcal{B}, \hatbm{a}^*_{n(t)} = \bm{a}^*, a_{k,t} > 0\}\) for arm $k$ that is not in the optimal action $\bm{a}^*$, i.e., $a_k^* = 0$. We have 
    \[
        \E[\abs{\mathcal{G}_k}] \le \frac{\log T + 4\log \log T}{\kl(\mu_k+\delta, \mu_L-\delta)} + 2(2+\delta^{-2}).
    \]
\end{lemma}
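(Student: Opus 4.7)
The plan is to split $\mathcal{G}_k$ according to whether the empirical ``per-load'' mean of arm $k$ is correctly concentrated, and bound the two pieces by very different arguments:
\[
  \mathcal{G}_k \;\subseteq\; \mathcal{G}_k^{\text{conc}} \cup \mathcal{G}_k^{\text{dev}},
  \quad\text{where }
  \mathcal{G}_k^{\text{conc}}\!\coloneqq\!\mathcal{G}_k\cap\{\hat{\mu}_{k,n(t)}\le \mu_k+\delta\},\;
  \mathcal{G}_k^{\text{dev}}\!\coloneqq\!\mathcal{G}_k\cap\{\hat{\mu}_{k,n(t)}>\mu_k+\delta\}.
\]
The first piece contributes the KL-type logarithmic term, while the second piece contributes the $O(\delta^{-2})$ constant.

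For $\mathcal{G}_k^{\text{conc}}$, I would first collect the consequences of $t\notin\mathcal{A}\cup\mathcal{B}$ and $\hatbm{a}^*_{n(t)}=\bm{a}^*$: the empirical least favored arm is $\mathcal{L}_t=L$, and, since $L$ is in the empirical optimal set, $t\notin\mathcal{B}$ yields $\hat{\mu}_{L,n(t)}\ge \mu_L-\delta$. Because $a_k^*=0$ and $\hat{a}^*_{k,n(t)}=0$, the only way $a_{k,t}>0$ occurs is through the leader's parsimonious exploration, which requires $k\in\mathcal{E}_t$, that is, $u_{k,n(t)}\ge \hat{\mu}_{L,n(t)}\ge \mu_L-\delta$. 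Combined with $\hat{\mu}_{k,n(t)}\le \mu_k+\delta<\mu_L-\delta$ (using the hypothesis $\delta<\delta_0$) and the monotonicity of $\kl(p,q)$ (decreasing in $p$, increasing in $q$ on $p<q$), the defining inequality of the KL-UCB index yields
\[
  \tau_{k,n(t)}\,\kl(\mu_k+\delta,\mu_L-\delta)\;\le\;\tau_{k,n(t)}\,\kl(\hat{\mu}_{k,n(t)},u_{k,n(t)})\;\le\;f(n(t))\;\le\;f(T).
\]
Since each $t\in\mathcal{G}_k^{\text{conc}}$ is a distinct IE of arm $k$ that increments the counter $\tau_{k,\cdot}$, this deterministic cap translates directly into
\(
  \abs{\mathcal{G}_k^{\text{conc}}}\le \tfrac{f(T)}{\kl(\mu_k+\delta,\mu_L-\delta)}.
\)

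For $\mathcal{G}_k^{\text{dev}}$, I would invoke Lemma~\ref{lma:empirical_mean}. Take $H$ to be the predictable set of rounds where, given $\mathcal{F}_{t-1}$, the leader's scheduled exploitation slot is $\mathcal{L}_t$, $\mathcal{E}_t\neq \emptyset$, $k\in\mathcal{E}_t$, and $\hat{a}^*_{k,n(t)}=0$; let $C_t$ be the independent Bernoulli that governs the parsimonious coin flip and the uniform choice from $\mathcal{E}_t$, so that $C_t=1$ forces arm $k$ to be selected, with $\P[C_t=1]\ge c$ for some absolute $c$ (after bounding $\abs{\mathcal{E}_t}$ by a constant via the structure of $\mathcal{G}_k$). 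Every $t\in \mathcal{G}_k^{\text{dev}}$ lies in this $H$. The lemma then yields a bound of the form $2c^{-1}(2c^{-1}+\delta^{-2})$; a tighter accounting (absorbing the prefactor into the concentration argument, which is possible because once $k$ is chosen the deviation event involves only the freshly drawn $X_k$ samples) gives the clean bound $2(2+\delta^{-2})$ claimed in the statement.

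The main obstacle is the second piece: matching the stated constant $2(2+\delta^{-2})$ requires an effective $c=1$ in the application of Lemma~\ref{lma:empirical_mean}, which means I must isolate the intrinsic sample-path concentration of $\hat{\mu}_{k,n(t)}$ from the algorithmic randomness that triggers the exploration. The bookkeeping between the phase-level update time $n(t)$ and the per-slot counter $\tau_{k,\cdot}$ is the delicate point; everything else is a routine assembly of the monotonicity of $\kl$, the KL-UCB definition, and the two exclusion events $t\notin\mathcal{A}\cup\mathcal{B}$.
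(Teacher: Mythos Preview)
Your decomposition and the KL-UCB argument for $\mathcal{G}_k^{\text{conc}}$ are essentially the paper's proof: the paper defines $\mathcal{G}_{k,1}=\{t\in\mathcal{G}_k:\abs{\hat\mu_{k,n(t)}-\mu_k}\ge\delta\}$ and $\mathcal{G}_{k,2}=\{t\in\mathcal{G}_k:\sum_{\omega\le t}\1{\omega\in\mathcal{G}_k}\le t_0\}$, shows $\mathcal{G}_k\subseteq\mathcal{G}_{k,1}\cup\mathcal{G}_{k,2}$ by the same chain of inequalities you wrote, and bounds $\abs{\mathcal{G}_{k,2}}\le t_0$ trivially.

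The gap is in your treatment of the deviation piece. You try to apply Lemma~\ref{lma:empirical_mean} with $H$ equal to a predictable \emph{superset} of $\mathcal{G}_k^{\text{dev}}$ (the rounds where the leader \emph{could} parsimoniously explore $k$) and $C_t$ the exploration coin, which forces $c\le 1/2$ and actually $c\le 1/(2K)$ once the uniform pick from $\mathcal{E}_t$ is included; no ``tighter accounting'' recovers the constant $2(2+\delta^{-2})$ from that route. The observation you are missing is that the event $a_{k,t}>0$ is already part of the definition of $\mathcal{G}_k$. Hence for every $t\in\mathcal{G}_k$ arm $k$ is selected with certainty, and one may take $H=\mathcal{G}_k$ and $C_t\equiv 1$ in Lemma~\ref{lma:empirical_mean}, giving $c=1$ and the bound $2c^{-1}(2c^{-1}+\delta^{-2})=2(2+\delta^{-2})$ immediately. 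This is exactly what the paper does for $\mathcal{G}_{k,1}$. Your worry about separating algorithmic randomness from sample-path concentration disappears once you stop enlarging $H$: the definition of $\mathcal{G}_k$ has already absorbed the coin flip, so there is nothing left to randomize over.
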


\begin{proof}
    Denote 
    $t_0 \coloneqq \frac{\log T + 4 \log\log T}{\kl(\mu_k+\delta, \mu_L -\delta)}$ and
    \[
        \begin{split}
            \mathcal{G}_{k,1}\coloneqq & \{t\in\mathcal{G}_k: \abs{\hat{\mu}_{k,n(t)} - \mu_k}\ge \delta\},\\
            \mathcal{G}_{k,2}\coloneqq & \left\{t \in \mathcal{G}_k: \sum_{\omega=1}^t \1{\omega\in\mathcal{G}_k} \le t_0  \right\}.
        \end{split}
    \]

    \emph{To show $\mathcal{G}_k \subseteq \mathcal{G}_{k,1}\cup \mathcal{G}_{k,2}$. } Let $t \in \mathcal{G}_k\setminus (\mathcal{G}_{k,1}\cup\mathcal{G}_{k,2})$. 

    As $k\in\mathcal{G}_k$ we have $u_{k,n(t)} \ge \hat{\mu}_{\mathcal{L}_t, t}$. As $t\notin \mathcal{A}\cup\mathcal{B}$ we have $\hat{\mu}_{\mathcal{L}_t,t} = \hat{\mu}_{L,t} \ge \mu_L - \delta$. As arm $k$ is suboptimal, we have $\mu_L - \delta \ge \mu_k + \delta$. As $k\notin \mathcal{G}_{k,1}$, we have $\mu_k+\delta \ge \hat{\mu}_{k,n(t)}$. Together, these lead to $u_{k,n(t)} > \hat{\mu}_{k,n(t)}$. 

    From $k\notin \mathcal{G}_{k,2}$, we have $t_0\le \sum_{\omega=1}^t \1{\omega  \in \mathcal{G}_k}\le \tau_{k,n(t)}$,
    where \(\tau_{k,n(t)}\) is the total number of times of IEs for arm \(k\). 
    \[
        \begin{split}
            t_0 \kl(\hat{\mu}_{k,n(t)}, \mu_L -\delta) \le & \tau_{k,n(t)} \kl(\hat{\mu}_{k,n(t)}, \mu_L - \delta) \\
            \le &\tau_{k,n(t)}\kl(\hat{\mu}_{k,n(t)}, u_{k,n(t)})\\
            \le & \log T + 4 \log\log T,
        \end{split}
    \]
    where the second inequality holds for $y\mapsto\kl(x,y)$ is increasing for $0<x<y<1$, and the last inequality holds for $u_{k,n(t)}$'s definition. 

    Substituting $t_0$ with its definition expression, we obtain $\kl(\hat{\mu}_{k,n(t)}, \mu_L - \delta) \le \kl(\mu_k + \delta, \mu_L - \delta)$. Note that $x\mapsto \kl(x,y)$ is decreasing for $0<x<y<1$, which further leads to $\hat{\mu}_{k,n(t)} \ge \mu_k + \delta$.
    This \emph{contradicts} the assumption that $t\notin\mathcal{G}_{k,1}$. So, $\mathcal{G}_k \subseteq \mathcal{G}_{k,1}\cup\mathcal{G}_{k,2}$.

    \emph{To bound $\E[\abs{\mathcal{G}_{k,1}}]$ and $\E[\abs{\mathcal{G}_{k,2}}]$. }
    In Lemma~\ref{lma:empirical_mean}, let $H=\{t\in \mathcal{G}_k\}, C_t = 1$, we have $\E[\abs{\mathcal{G}_{k,1}}]\le 2(2+\delta^{-2})$. For $\mathcal{G}_{k,2}$, we have \(\E[\abs{\mathcal{G}_{k,2}}]\le t_0.\) 
    Substituting \(\E[\abs{\mathcal{G}_{k,1}}]\) and \(\E[\abs{\mathcal{G}_{k,2}}]\) by their upper bound in the inequality \(\E[\abs{\mathcal{G}_k}] \le \E[\abs{\mathcal{G}_{k,1}}] + \E[\abs{\mathcal{G}_{k,2}}]\), we prove that: 
    \[
        \E[\abs{\mathcal{G}_k}] \le \frac{\log T + 4\log \log T}{\kl(\mu_k+\delta, \mu_L-\delta)} + 2(2+\delta^{-2}).
    \]
\end{proof}

So far, we have upper bounded the regret cost of \emph{exploration-exploitation phase} under the known resources capacity case. 
Next, we will show how to bound the phase's regret cost under the unknown resources capacity case. 
The definitions of sets \(\mathcal{A}, \mathcal{B}, \mathcal{C}, \mathcal{D}, \mathcal{G}_k\) all depend on the optimal profile \(\bm{a}^*=\Oracle(\bm{\mu}, \bm{m}).\)
We replace \(\bm{a}^*\) by \(\bm{a}^*(\bm{m}^l_t) = \Oracle(\bm{\mu}, \bm{m}^l_t)\) in their definitions and re-define these notations 
as \(\mathcal{A}(\bm{m}^l_t), \mathcal{B}(\bm{m}^l_t), \mathcal{C}(\bm{m}^l_t), \mathcal{D}(\bm{m}^l_t), \mathcal{G}_k(\bm{m}^l_t)\).
We note that the capacity lower confidence bound \(\bm{m}^l_t\) is updated after each \emph{exploration-exploitation phase}.
So, these sets are also time varying. 
Conditioned on the lower confidence bound \(\bm{m}^l_t\), the proof of Lemma~\ref{lma:kc_initial_bound_1}
for events \(\mathcal{A}(\bm{m}^l_t), \mathcal{B}(\bm{m}^l_t), \mathcal{C}(\bm{m}^l_t), \mathcal{D}(\bm{m}^l_t)\) still holds.
Lemma~\ref{lma:kc_initial_bound_2}'s result will then be:     \[
    \begin{split}
        &\E[\abs{\mathcal{B}(\bm{m}^l_t)}] +\E[\abs{\mathcal{C}(\bm{m}^l_t)}] + \E[\abs{\mathcal{D}(\bm{m}^l_t)}] \\
        \le & 2\times 6K^2 M^2 (4+\delta^{-2}),        
    \end{split}
\]
where the multiplier $2$ corresponds to two types of explorations: IE and UE.

In the worst case, one needs to learn the top $M$ arms' maximal resource capacities after all arm's reward means are well estimated, i.e., \(t\notin \mathcal{A}(\bm{m}^l_t)\cup\mathcal{B}(\bm{m}^l_t)\).
We adapt the Corollary~\ref{cor:m_sample_complexity} about the necessary sample size of UEs by substituting its \(\delta\) with \(2/T\) as follows.
\begin{corollary}\label{lma:ue_sample_size}
    For any arm \(k\) and \(T\ge \exp(49m_k^2/\mu_k^2)\), the inequality \(\P(\hat{m}_{k,t} = m_k)\ge 1-(2/T)\) holds if \[
        \tau_{k,t},\iota_{k,t} \ge \frac{49m_k^2}{\mu_k^2}\log T.
    \]
\end{corollary}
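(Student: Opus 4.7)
The plan is to obtain the corollary by substituting $\delta=2/T$ into a more general statement: for any $\delta\in(0,1)$, the update rules in Eqs.~(\ref{eq:lower_bound_update})--(\ref{eq:upper_bound_update}) produce $m^l_{k,t}=m^u_{k,t}=m_k$ with probability at least $1-\delta$ whenever $\tau_{k,t}$ and $\iota_{k,t}$ are sufficiently large in terms of $m_k,\mu_k,\delta$. The core work is to establish this general bound via a concentration-plus-algebra argument, then to check that the threshold $(49m_k^2/\mu_k^2)\log T$ suffices once $\delta$ is set to $2/T$ under the stated hypothesis on $T$.

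The first step is to set up two anytime concentration events. IE observations are samples of $X_k\in[0,1]$ and UE observations are samples of $m_k X_k\in[0,m_k]$, so the standard union-bound Hoeffding argument that the function $\phi$ encodes yields
\[
    \P\bigl(\forall t:\,|\hat{\mu}_{k,t}-\mu_k|\le \phi(\tau_{k,t},\delta)\bigr)\ge 1-\delta,
\]
\[
    \P\bigl(\forall t:\,|\hat{\nu}_{k,t}-m_k\mu_k|\le m_k\phi(\iota_{k,t},\delta)\bigr)\ge 1-\delta,
\]
and their intersection, which I call the good event, has probability at least $1-2\delta$. Setting $\delta=1/T$ delivers the failure probability $\le 2/T$ required by the corollary.

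The second step is an algebraic verification that on the good event both updates return $m_k$. Writing $\phi_\tau=\phi(\tau_{k,t},\delta)$ and $\phi_\iota=\phi(\iota_{k,t},\delta)$, the concentration bounds plug into the ratios in Eqs.~(\ref{eq:lower_bound_update})--(\ref{eq:upper_bound_update}), and after cross-multiplying both the $\lceil\cdot\rceil$ and $\lfloor\cdot\rfloor$ updates return $m_k$ provided $(2m_k+2)\phi_\tau+(2m_k+1)\phi_\iota<\mu_k$; in particular $\phi_\tau,\phi_\iota\le c\mu_k/m_k$ for a small universal constant $c$ (roughly $1/7$) suffices. The third step inverts this width condition to obtain a sample-size requirement: squaring the definition of $\phi(x,\delta)$ and specializing to $\delta=2/T$ yields the self-referential inequality
\[
    x \ge \frac{49m_k^2}{\mu_k^2}\cdot\frac{(1+1/x)\log(2\sqrt{x+1}\,T)}{2},
\]
whose solution under $T\ge\exp(49m_k^2/\mu_k^2)$ is satisfied by $x=(49m_k^2/\mu_k^2)\log T$, giving exactly the sample threshold in the statement.

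The principal obstacle is bookkeeping in this last step: the $\phi$ function carries an anytime $(1+1/x)$ factor and a $\log(2\sqrt{x+1}/\delta)$ numerator, so plugging in the candidate $x$ produces a $\log\log T$ correction that must be absorbed into the constant. The hypothesis $T\ge\exp(49m_k^2/\mu_k^2)$ is calibrated precisely so that the target sample size $(49m_k^2/\mu_k^2)\log T$ is already large enough for the $1/x$ and $\log\sqrt{x+1}$ terms to be dominated by the leading $\log T$ contribution, collapsing the overall constant to $49$ and yielding the corollary.
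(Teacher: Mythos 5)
Your proposal is correct and follows essentially the same route as the paper: the paper obtains this corollary by substituting $\delta = 2/T$ into Corollary~\ref{cor:m_sample_complexity}, whose proof rests on exactly the chain you reconstruct (the anytime confidence intervals of Lemma~\ref{lma:confidence_bound}, the width condition $(2m_k+2)\phi_\tau+(2m_k+1)\phi_\iota\le\mu_k$, the sufficient requirement $\phi\le\mu_k/(7m_k)$, and the inversion to a sample-size threshold). Your per-event confidence level $\delta=1/T$ versus the paper's $\delta/2$ split is only a relabeling, and your observation that the hypothesis $T\ge\exp(49m_k^2/\mu_k^2)$ lets the $(1+1/x)$ and $\log\sqrt{x+1}$ corrections be absorbed matches the slack implicit in the paper's constant.
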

That implies when \(T\) is large, the \(\frac{49m_k^2}{\mu_k^2}\log T\) number of IEs and UEs of arm \(k\) can assure \texttt{DPE-SDI} learns the correct \(m_k\) with high confidence. 
So, in the worst case,
the cost of learning these resources capacities is upper bounded by $\sum_{k=1}^M\frac{49w_km_k^2\log (T)}{\mu_k^2}+2K^2$,     
where \(w_k\coloneqq f(\bm{a}^*) + \mu_1 - (m_k+1)\mu_k\) is the highest cost of one round of IE and UE. 

So, the regret cost of \emph{exploration-exploitation phase} is at most 
\begin{equation}
    \label{eq:dpe_sdi_regret_explo}
    \begin{split}
    R^{\text{explo}}_{\texttt{DPE}} \le & M\E[\abs{\mathcal{A}\cup \mathcal{B}}] + \sum_{k>L} (\mu_L - \mu_k)\E[\abs{G_k}]\\
    & +\sum_{k=1}^M\frac{49w_km_k^2\log (T)}{\mu_k^2}+2K^2\\
    \le & 6K^2 M^3 (4+\delta^{-2}) +  2K^2  +  \sum_{k=1}^M\frac{49w_km_k^2\log (T)}{\mu_k^2} \\
    & + \sum_{k=L+1}^K \frac{(\mu_L -\mu_k)(\log T + 4 \log(\log T))}{\kl(\mu_k+\delta,\mu_L-\delta)}.    
    \end{split}   
\end{equation}

\subsection{Communication Phase}\label{appsub:dpe_sdi_regret_comm}

Each \emph{communication phase} takes at most \(6K\) time slots. 
We note that in total, there are at most \(2\max\left\{ \abs{\mathcal{A}}, \frac{49m_M^2 \log(T)}{\mu_M^2} \right\}\) rounds of communication, 
where \(\abs{\mathcal{A}}\) corresponds to communication phases caused by the update of ``per-load'' reward mean estimate \(\hat{\mu}_{k,t}\) 
and \(\frac{49m_M^2 \log(T)}{\mu_M^2}\) corresponds to communication phases caused by the update of maximal resource capacity estimates \(\hat{m}_{k,t}\). 
So the regret cost of communication cost is at most 
\begin{equation}
    \label{eq:dpe_sdi_regret_comm}
    \begin{split}
    R^{\text{comm}}_{\texttt{DPE}} \le & 12KM\max\left\{ \abs{\mathcal{A}}, \frac{49m_M^2 \log(T)}{\mu_M^2} \right\} \\
    \le& 144K^3M^3(4+\delta^{-2}) + \frac{588KMm_M^2 \log(T)}{\mu_M^2},        
    \end{split}    
\end{equation}
where the last inequality holds for Lemma~\ref{lma:kc_initial_bound_2}.



\section{Regret Analysis of \texttt{SIC-SDA} (Theorem~\ref{thm:sic_sda_regret})}\label{app:sic_sda_regret}

We decompose the regret of the \texttt{SIC-SDA} algorithm to \(R^{\text{init}}_{\texttt{SIC}}, R^{\text{explo}}_{\texttt{SIC}}, R^{\text{comm}}_{\texttt{SIC}}\).
They correspond to 
the cost of initialization phase, exploration and exploitation phases, and communication phase respectively. 
Similar to Appendix~\ref{app:dpe_sdi_regret}, we have \[
    \ERT \le R^{\text{init}}_{\texttt{SIC}} + R^{\text{explo}}_{\texttt{SIC}} + R^{\text{comm}}_{\texttt{SIC}},
\]
where \(R^{\text{init}}_{\texttt{SIC}}\)'s upper bound is provided in Eq.(\ref{eq:sic_sda_regret_init}), \(R^{\text{explo}}_{\texttt{SIC}}\)'s upper bound is provided in Eq.(\ref{eq:sic_sda_regret_explo}), and 
\(R^{\text{comm}}_{\texttt{SIC}}\)'s upper bound is provided in Eq.(\ref{eq:sic_sda_regret_comm}).

\subsection{Initialization Phase}\label{appsub:sic_sda_regret_init}

\texttt{SIC-SDA}'s initialization phase is the same as~\cite{wang_optimal_2020}'s. 
So, their regret costs in this phase are also the same.
Lemma~\ref{lma:orthogoanlization} shows the phase's cost \(R^{\text{init}}_{\texttt{SIC}}\) is upper bounded as follows 
\begin{equation}
    \label{eq:sic_sda_regret_init}
    R^{\text{init}}_{\texttt{SIC}} \le M \left( \frac{K^2M}{K-M} + 2K \right).
\end{equation}

\subsection{Exploration Phase and Exploitation Phase}\label{appsub:sic_sda_regret_explo}

We first assume the shareable resources of each arm \(m_k\) are known a prior. 
That means \texttt{SIC-SDA} can omit united exploration (UE) since UE is only for estimating \(m_k\).
Denote \(T^{\text{explo}} \coloneqq \#\text{Explo}\) as the total number of time slots of all exploration and exploitation phases, and \(T_{k}^{\text{explo}}\) is the total number of time slots that arm \(k\) is pulled during the exploration or exploitation phase.
Excluding the time slots for communication and initialization and noticing that in the worst case, \(M\) players pulling the same arm may only generate one reward observation, the regret cost of exploration and exploitation phases can be upper bounded as follows~\cite{anantharam_asymptotically_1987}'s Eq.(5.1): 
\begin{equation}\label{eq:decompose_explo_regret}
    \begin{split}
     R^{\text{explo}}_{\texttt{SIC}} \le &  M\bigg(\sum_{k>L}(\mu_L - \mu_k) T_{(k)}^{\text{explo}} \\
     &+ \sum_{k\le L}(\mu_k - \mu_L)(T^{\text{explo}} - T^{\text{explo}}_{k})\bigg).           
    \end{split}
\end{equation}

The successive accept and reject process of \texttt{SIC-SDA} is the same as \texttt{SIC-MMAB} algorithm in~\cite{boursier_sic-mmab_2019}.
We can adapt their two lemmas as follows. 

\begin{lemma}[{Adapted from~\cite{boursier_sic-mmab_2019}'s Proposition 1}]\label{lma:elimination_sample_complexity}
    With probability \(1-O\left( \frac{K\log T}{T} \right)\), every optimal arm \(k\) is accepted after at most \(O\left( \frac{\log T}{(\mu_k -\mu_{L+1})^2} \right)\) pulls during exploration phases, and every sub-optimal arm \(k\) is rejected after at most \(O\left( \frac{\log T}{(\mu_L -\mu_k)^2} \right)\) pulls during exploration phases. 
\end{lemma}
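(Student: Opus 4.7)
The plan is to prove Lemma~\ref{lma:elimination_sample_complexity} by first defining a high-probability ``clean event'' on which all empirical reward means stay inside their Hoeffding confidence windows, and then arguing deterministically on this event, exactly in the spirit of the SIC-MMAB analysis in~\cite{boursier_sic-mmab_2019} but with the additional twist that the accept/reject tests now involve resource confidence intervals $m_{k,t}^l,m_{k,t}^u$. Concretely, let $\mathcal{C}_\mu$ be the event that $|\hat\mu_{k,t}-\mu_k|\le 3\sqrt{\log(T)/(2\tau_{k,t})}$ for every arm $k$ at every communication checkpoint, and let $\mathcal{C}_m$ be the event that the resource intervals from Eq.(\ref{eq:lower_bound_update})(\ref{eq:upper_bound_update}) are valid. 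By Hoeffding's inequality together with the deviation analysis underlying Corollary~\ref{cor:m_sample_complexity}, and a union bound over the $K$ arms and the $O(\log T)$ phases, $\P(\mathcal{C}_\mu\cap \mathcal{C}_m)\ge 1-O(K\log(T)/T)$; all subsequent deterministic claims are made on this clean event.

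Next I would prove a ``separation'' claim: on the clean event, if arms $k$ and $j$ with $\mu_k>\mu_j$ both satisfy $\tau_{k,t},\tau_{j,t}\ge 72\log(T)/(\mu_k-\mu_j)^2$, then $g(k,j)=1$. This follows directly from the inequality $\hat\mu_{k,t}-\hat\mu_{j,t}\ge(\mu_k-\mu_j)-6\sqrt{\log(T)/(2\tau)}$, which exceeds the required test gap $6\sqrt{\log(T)/(2\tau)}$ once $\tau$ reaches the stated threshold. Because the algorithm runs in phases where every active arm is pulled exactly $2^p$ times per IE round, only $O(\log T/(\mu_k-\mu_j)^2)$ cumulative individual pulls of each of $k$ and $j$ are needed before the separation fires.

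With separation in hand, the accept and reject clauses reduce to set arithmetic. For a sub-optimal arm $k>L$, after $O(\log(T)/(\mu_L-\mu_k)^2)$ IE pulls, separation yields $g(j,k)=1$ for every optimal $j\in\{1,\dots,L\}$ still active; on $\mathcal{C}_m$ the lower confidence bounds reach $m_{j,t}^l=m_j$ after a lower-order number of additional UE pulls, so $\sum_{j:g(j,k)=1} m_{j,t}^l\ge\sum_{j=1}^L m_j\ge M\ge M_t$ and Eq.(\ref{eq:reject_set}) fires. Symmetrically, for an optimal arm $k\le L$, after $O(\log(T)/(\mu_k-\mu_{L+1})^2)$ pulls we obtain $g(k,j)=1$ for every still-active sub-optimal $j$, so $\{j\in\mathcal{K}_t: g(k,j)=0\}$ contains only optimal arms whose upper bounds $m_{j,t}^u$ have tightened to $m_j$ on $\mathcal{C}_m$; summing gives $\sum_{j} m_{j,t}^u\le M_t$ and Eq.(\ref{eq:accept_set}) fires, provided $k\in\mathcal{K}_t\setminus \mathcal{K}'_t$, which is also guaranteed on $\mathcal{C}_m$.

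The main technical obstacle is the entanglement between reward-gap closing and resource-interval closing: one cannot invoke separation alone without also guaranteeing that the $m^l$ and $m^u$ bounds have converged, and these are driven by the parallel UE schedule rather than IE. I would handle this by choosing the leading constant in the $O(\log T/\Delta^2)$ sample bound (with $\Delta=\mu_L-\mu_k$ or $\mu_k-\mu_{L+1}$) large enough to absorb the $49 m_k^2\log(T)/\mu_k^2$ UE threshold of Corollary~\ref{cor:m_sample_complexity} as a lower-order additive term, and by folding the resource-failure event into $\mathcal{C}_m$ so that the final high-probability guarantee remains $1-O(K\log(T)/T)$. A brief bookkeeping check that the $O(\log T)$ union-bound losses over phases fit inside this slack then closes the proof.
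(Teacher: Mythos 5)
Your overall architecture (clean event via Hoeffding, a separation claim for the test \(g(k,j)\), then set arithmetic on the accept/reject conditions) is the standard SIC-MMAB elimination argument, and the reward-gap part of it is sound: the constant \(72\log T/(\mu_k-\mu_j)^2\) is consistent with your chosen confidence radius, and the union bound over \(K\) arms and \(O(\log T)\) phases gives the stated \(1-O(K\log T/T)\). Note, however, that the paper does not actually prove this lemma: it imports it from \cite{boursier_sic-mmab_2019}'s Proposition~1, and --- crucially --- it does so \emph{in the regime where the capacities \(m_k\) are assumed known}, so that united exploration is omitted, \(\mathcal{K}'_t=\emptyset\), and the accept/reject tests reduce to pure reward-mean comparisons. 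The extra cost of learning the \(m_k\) is then accounted for \emph{outside} this lemma, as a separate additive term \(O(\sum_k m_k^2\log T/\mu_k^2)\) in the regret decomposition of Appendix~\ref{appsub:sic_sda_regret_explo}.

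The genuine gap is in your last paragraph. You correctly identify that acceptance via Eq.~(\ref{eq:accept_set}) requires \(k\in\mathcal{K}_t\setminus\mathcal{K}'_t\), i.e., \(m^l_{k,t}=m^u_{k,t}\), so the acceptance time is driven by \emph{both} the IE separation threshold and the UE threshold of Corollary~\ref{cor:m_sample_complexity}. But your proposed fix --- choosing the leading constant in \(O(\log T/(\mu_k-\mu_{L+1})^2)\) ``large enough to absorb'' the \(49m_k^2\log T/\mu_k^2\) term --- does not work: \(m_k^2/\mu_k^2\) and \((\mu_k-\mu_{L+1})^{-2}\) are incomparable problem parameters (e.g., \(\mu_k-\mu_{L+1}\) can be of order one while \(\mu_k\) is tiny, making the UE term dominate by an arbitrary factor), so neither can be hidden inside the other's constant. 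The honest conclusion of your argument is that an optimal arm is accepted after \(O\bigl(\max\{\log T/(\mu_k-\mu_{L+1})^2,\; m_k^2\log T/\mu_k^2\}\bigr)\) pulls, which is strictly weaker than the lemma as stated; the lemma holds as written only under the known-capacity reading the paper adopts. Two smaller issues: the least favored optimal arm \(L\) is not accepted through Eq.~(\ref{eq:accept_set}) at all but through the separate rule Eq.~(\ref{eq:marginal_arm}), which your accept argument does not cover; and in the reject step the sum of \(m^l_{j,t}\) over \emph{still-active} optimal arms is not \(\ge\sum_{j=1}^L m_j\) as you wrote --- you need to pair the removal of accepted arms from \(\mathcal{K}_t\) with the corresponding decrement of \(M_t\) to recover the inequality \(\sum_{j\le L,\,j\in\mathcal{K}_t}m_j\ge M_t\).
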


\begin{lemma}[{Adapted from~\cite{boursier_sic-mmab_2019}'s Lemma 5}]
    With probability \(1-O\left( \frac{K\log T}{T} \right)\), the following holds simultaneously: 
    \begin{enumerate}
        \item For a sub-optimal arm \(k\), \((\mu_L - \mu_k)T_k^{\text{explo}}=O\left( \min \left\{ \frac{\log(T)}{\mu_L - \mu_k}, \sqrt{T\log (T)} \right\} \right)\).
        \item For an optimal arm \(k\), \(\sum_{k\le M} (\mu_k - \mu_L)(T^{\text{explo}} - T^{\text{explo}}_k) = O\left(\sum_{k>L}\min\left\{\frac{\log (T)}{\mu_L - \mu_k}, \sqrt{T\log(T)}\right\}\right)\).
    \end{enumerate}
\end{lemma}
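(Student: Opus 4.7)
The plan is to reduce both bounds to the accept/reject deadlines supplied by Lemma~\ref{lma:elimination_sample_complexity}. Let $E$ denote the high-probability event of that lemma, so $\P(E^c) = O(K\log(T)/T)$; all subsequent estimates will be proved pointwise on $E$. Under $E$, every optimal arm $k$ is accepted within $\alpha\log(T)/(\mu_k-\mu_{L+1})^2$ exploration-phase pulls and every sub-optimal arm $k$ is rejected within $\alpha\log(T)/(\mu_L-\mu_k)^2$ pulls, for a universal constant $\alpha$. Since these two statements of Lemma~\ref{lma:elimination_sample_complexity} already hold simultaneously on $E$, so will both items of the present lemma, preserving the $1-O(K\log(T)/T)$ probability guarantee.

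For item 1 (sub-optimal $k$): once $k$ has been rejected it is never pulled again, so on $E$ one has the deterministic upper bound $T_k^{\text{explo}} \le \alpha\log(T)/(\mu_L-\mu_k)^2$. The first branch $O(\log(T)/(\mu_L-\mu_k))$ follows by multiplying by $(\mu_L-\mu_k)$. For the $\sqrt{T\log T}$ branch I use the standard min-of-two-bounds trick: writing $(\mu_L-\mu_k)T_k^{\text{explo}} = \sqrt{(\mu_L-\mu_k)^2 T_k^{\text{explo}}}\cdot\sqrt{T_k^{\text{explo}}}$, the first factor is $O(\sqrt{\log T})$ by the deadline and the second is at most $\sqrt{T}$ trivially, so the product is $O(\sqrt{T\log T})$. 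Taking the minimum of the two estimates gives item 1.

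For item 2 (optimal $k$): the quantity $T^{\text{explo}} - T_k^{\text{explo}}$ measures the number of exploration time slots in which $k$ is no longer being explored, i.e., phases after $k$ has been accepted (or identified as the least favored arm). Because \texttt{SIC-SDA}'s exploration scheme is phase-based and doubles per-arm pulls from phase $p$ to phase $p+1$, the total length of the post-acceptance exploration period is of the same order as the number of pulls performed on the slowest surviving arm to be eliminated. On $E$ this slowest deadline is bounded by $\max_j O(\log(T)/\Delta_j^2)$, where $\Delta_j$ is either the rejection gap $\mu_L-\mu_j$ for a sub-optimal $j>L$ or the acceptance gap $\mu_j-\mu_{L+1}$ for a still-active optimal $j$. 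Rearranging $\sum_{k\le M}(\mu_k-\mu_L)(T^{\text{explo}} - T_k^{\text{explo}})$ by pairing each optimal $k$'s delay with the sub-optimal $j$ whose slow elimination drives it, and using $\mu_k - \mu_L \le \mu_L - \mu_j$ for the relevant $j$, yields a sum indexed by sub-optimal arms of terms $O(\log(T)/(\mu_L-\mu_j))$; the same min-with-$\sqrt{T\log T}$ device from item 1 then supplies the second branch uniformly.

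The main obstacle is the bookkeeping in item 2, specifically the claim that $T^{\text{explo}} - T_k^{\text{explo}}$ is governed by the \emph{worst} remaining gap rather than an additive sum of per-arm budgets; this relies decisively on the geometric doubling of phase lengths, which ensures that the final active phase dominates all earlier ones. A second subtlety is the re-pairing of the outer sum over optimal $k$ with the sub-optimal indices on the right-hand side: one must check that acceptance-gap terms $(\mu_j-\mu_{L+1})^{-2}$ can be absorbed into rejection-gap terms $(\mu_L-\mu_j)^{-2}$, which uses that $\mu_L - \mu_{L+1}$ controls both the smallest acceptance gap among optimal arms and the largest rejection gap among sub-optimal arms. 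Once these two steps are handled, the remaining estimates are routine applications of the min-of-two-bounds trick already used in item 1.
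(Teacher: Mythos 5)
Item~1 of your sketch is fine, and the min-of-two-bounds trick is exactly the standard device. But note first that the paper itself does not prove this lemma at all --- it imports it wholesale from Boursier and Perchet's Lemma~5 for \texttt{SIC-MMAB}, so your attempt has to stand on its own, and item~2 does not. The setup is backwards: \(T^{\text{explo}} - T_k^{\text{explo}}\) counts exploration/exploitation slots in which arm \(k\) is \emph{not pulled by anyone}, and once \(k\) is accepted it is assigned \(m_k\) exploiting players and is pulled in every subsequent slot, so those slots contribute to \(T_k^{\text{explo}}\), not to the deficit. The deficit accrues \emph{before} acceptance, in the round-robin exploration slots where the \(M_t\) active players cover only \(M_t < K_t\) of the active arms and \(k\) is skipped; per phase this is at most \((K_t - M_t)2^p\) slots, which is controlled by the number of still-active \emph{sub-optimal} arms. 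Your subsequent claim that the deficit is ``governed by the worst remaining gap'' via phase doubling would produce a single max term, which is not what the target bound is --- the right-hand side is an additive sum over sub-optimal arms, and that additivity comes precisely from charging each missed slot to a co-active sub-optimal arm \(j\), not from a last-phase-dominates argument.

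The second, decisive gap is the conversion from the \(\Delta^{-2}\) elimination budgets to the \(\Delta^{-1}\) terms on the right-hand side. Charging the deficit of optimal arm \(k\) to sub-optimal arms naively gives \(\sum_{j>L}O\!\left(\log T/(\mu_L-\mu_j)^2\right)\), which is not the claimed bound. The step that recovers one power of the gap is that a sub-optimal arm \(j\) and an optimal arm \(k\) can be \emph{simultaneously active} for only \(O\!\left(\log T/(\mu_k-\mu_j)^2\right)\) pulls (after that they are statistically separated and one of the two accept/reject criteria fires), and then one uses \(\mu_k-\mu_L \le \mu_k-\mu_j\) together with \(\mu_k - \mu_j \ge \mu_L - \mu_j\) to land on \(O\!\left(\log T/(\mu_L-\mu_j)\right)\). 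Your stated pairing inequality \(\mu_k - \mu_L \le \mu_L - \mu_j\) is false in general (take \(k=1\) with \(\mu_1\) far above \(\mu_L\) and \(j=L+1\) with \(\mu_{L+1}\) just below \(\mu_L\)), and without the co-activity bound the cancellation never happens. You flag both issues as ``obstacles'' and ``subtleties,'' which is honest, but they are the entire content of the lemma; as written the proof of item~2 does not go through. In this paper's setting you would additionally need to check that the accept/reject criteria, which here also involve the capacity confidence bounds \(\bm{m}^l_t,\bm{m}^u_t\), still fire within the stated pull budgets --- the paper sidesteps this by first arguing under known capacities and adding the capacity-learning cost separately.
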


    

The above two lemmas and Eq.(\ref{eq:decompose_explo_regret}) implies, when shareable resources \(m_k\) are known, we have \[
    R^\text{explo} \le O\left( \sum_{k>L} \min\left\{ \frac{M\log(T)}{\mu_L - \mu_k}, \sqrt{T\log (T)} \right\} \right)
\]
holds with probability \(1-O\left( \frac{K\log T}{T} \right)\).

We next consider the unknown shareable resources case.


To study the impact of unknown shareable resources in the successive accept and reject process, we consider the worst case:
except that the arms with indexes greater than \(M\) can be eliminated according to Lemma~\ref{lma:elimination_sample_complexity}, all other arms' shareable resources capacities are well estimated according to Corollary~\ref{lma:ue_sample_size}. 
So, the worst case's addition regret cost is at most\[
    \sum_{k=1}^M\frac{49w_km_k^2\log (T)}{\mu_k^2}+2K^2 = O\left( \sum_{k=1}^M\frac{m_k^2\log (T)}{\mu_k^2} \right),
\]
where \(w_k\coloneqq f(\bm{a}^*) + \mu_1 - (m_k+1)\mu_k\) is the highest cost of one round of IE and UE. 
Summing up with the known \(m_k\) case's result, we have 
\begin{equation}
    \label{eq:sic_sda_regret_explo}
    \begin{split}
        R^\text{explo} \le& O\left( \sum_{k>L} \min\left\{ \frac{M\log(T)}{\mu_L - \mu_k}, \sqrt{T\log (T)} \right\} \right)\\
        &+  O\left( \sum_{k=1}^M\frac{m_k^2\log (T)}{\mu_k^2} \right).
    \end{split}    
\end{equation}

\subsection{Communication Phase}\label{appsub:sic_sda_regret_comm}

The \texttt{SIC-SDA.CommBack} costs at most \(K^2 (p+1)\) per time, and the \texttt{SIC.SDA.CommForth} costs at most \(5K\) per time. 
Summing them up, we have \(\sum_{p=1}^N (K^2(p+1) + 5K) = O(K^2N^2)\), where \(N\) is the number of communication phases that is needed in \texttt{SIC-SDA}. 
By Lemma~\ref{lma:elimination_sample_complexity} for IE and Lemma~\ref{lma:ue_sample_size} for UE, \(N\) is at most \(   O\left( \log \left(\max \left\{\frac{\log (T)}{(\mu_L - \mu_{L+1})^2}, \frac{m_M^2\log(T)}{\mu_M^2} \right\}\right)\right)\).

So, the total cost of communication is upper bound as follows:
\begin{align}
    \label{eq:sic_sda_regret_comm}
        &R^\text{comm} \le \\
        &O\left( K^2 \log^2 \left(\max \left\{\frac{\log (T)}{(\mu_L - \mu_{L+1})^2}, \frac{m_M^2\log(T)}{\mu_M^2} \right\}\right)\right).\nonumber    
\end{align}


\section{Resource Capacity \(m_k\)'s Uniform Confidence Interval}~\label{app:update_ci}

\begin{lemma}[Confidence interval for shareable resources $m_k$]\label{lma:confidence_bound}
    Denote the function
    \(
        \phi(x,\delta) \coloneqq \sqrt{\left(1+\frac{1}{x}\right)\frac{\log(2\sqrt{x+1}/\delta)}{2x}}.
    \)
    When $\phi(\tau_{k,t},\delta) + \phi(\iota_{k,t},\delta) < \hat{\mu}_{k,t}$, the event
    \begin{align}
        &\bigg\{\!\forall \tau_{k,t}, \iota_{k,t} \!\in\!\mathbb{N}_+,
        m_k \!\in\! 
        \bigg[\frac{\hat{\nu}_{k,t}}{\hat{\mu}_{k,t} + \phi(\tau_{k,t},\delta) + \phi(\iota_{k,t},\delta)}, \nonumber \\
        &\qquad\qquad\qquad \frac{\hat{\nu}_{k,t}}{\hat{\mu}_{k,t} - \phi(\tau_{k,t},\delta) - \phi(\iota_{k,t},\delta)}\bigg] \bigg\}\label{eq:confidence_bound}
    \end{align}
    holds with a probability of at least $1-\delta$.
\end{lemma}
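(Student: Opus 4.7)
The plan is to reduce the statement to two independent time-uniform Hoeffding confidence intervals for $\mu_k$---one built from the IE samples and one built (after rescaling by $m_k$) from the UE samples---and then to rearrange them algebraically to isolate $m_k$. Each IE observation of arm $k$ is an i.i.d.\ sample of $X_k\in[0,1]$, while each UE observation is an i.i.d.\ sample of $m_k X_k$; thus $\hat{\mu}_{k,t}$ estimates $\mu_k$ from $\tau_{k,t}$ samples, and $\hat{\nu}_{k,t}/m_k$ estimates the same $\mu_k$ from $\iota_{k,t}$ samples.

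I would apply the classical anytime Hoeffding inequality---obtained by a peeling/stitching argument over geometric scales of the sample size, which is precisely what produces the $(1+1/x)\log(2\sqrt{x+1}/\delta)/(2x)$ boundary---to each empirical mean separately, obtaining
\begin{align*}
\P\bigl(\exists\,\tau_{k,t}\in\mathbb{N}_+:\,|\hat{\mu}_{k,t}-\mu_k|>\phi(\tau_{k,t},\delta)\bigr)&\le \delta/2,\\
\P\bigl(\exists\,\iota_{k,t}\in\mathbb{N}_+:\,|\hat{\nu}_{k,t}/m_k-\mu_k|>\phi(\iota_{k,t},\delta)\bigr)&\le \delta/2.
\end{align*}
Any minor slack between $\delta$ and $\delta/2$ can be absorbed into constants inside $\phi$; I will not dwell on this in the sketch. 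A union bound then gives that both time-uniform inequalities hold simultaneously with probability at least $1-\delta$. On this event I obtain the chains $\hat{\nu}_{k,t}/m_k-\phi(\iota_{k,t},\delta)\le \mu_k\le \hat{\mu}_{k,t}+\phi(\tau_{k,t},\delta)$ and $\hat{\mu}_{k,t}-\phi(\tau_{k,t},\delta)\le \mu_k\le \hat{\nu}_{k,t}/m_k+\phi(\iota_{k,t},\delta)$. Rearranging the first chain gives the lower bound $m_k\ge \hat{\nu}_{k,t}/(\hat{\mu}_{k,t}+\phi(\tau_{k,t},\delta)+\phi(\iota_{k,t},\delta))$; rearranging the second chain, and invoking the hypothesis $\phi(\tau_{k,t},\delta)+\phi(\iota_{k,t},\delta)<\hat{\mu}_{k,t}$ so that the denominator is positive and dividing preserves inequalities, yields the matching upper bound.

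The main obstacle is the time-uniform (anytime) nature of the concentration, since $\tau_{k,t}$ and $\iota_{k,t}$ are adaptively chosen random counters rather than deterministic sample sizes. A fixed-$n$ Hoeffding bound would not suffice; the $(1+1/x)$ and $\sqrt{x+1}$ factors in $\phi$ are exactly the logarithmic overhead incurred by the peeling/stitching construction that upgrades a pointwise bound into a uniform one over all realizable values of the counter. Once this classical device is in place, the remainder of the argument is elementary two-sided confidence-interval algebra, and the positivity hypothesis in the lemma is included precisely to guarantee that the upper bound on $m_k$ is finite and that the inversion step is monotone.
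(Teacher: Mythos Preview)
Your proposal is correct and follows essentially the same route as the paper: invoke a time-uniform (anytime) Hoeffding bound---the paper cites Bourel \emph{et al.}~(2020, Lemma~5) rather than rederiving the peeling argument---at level $\delta/2$ for each of $\hat{\mu}_{k,t}$ and $\hat{\nu}_{k,t}/m_k$, take a union bound, and then rearrange the two confidence intervals to sandwich $m_k$, using the positivity hypothesis to justify the division in the upper bound. Your remark that the $\delta/2$ is absorbed into the constant inside $\phi$ is exactly what the paper does (the factor $2$ in $\log(2\sqrt{x+1}/\delta)$).
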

\begin{proof}[Proof of Lemma~\ref{lma:confidence_bound}]
    We apply the following Lemma~\ref{lma:uci_basic} to measure $\hat{\mu}_{k,t}$ and $\hat{\nu}_{k,t}$'s uncertainty.
    \begin{lemma}[{\cite{bourel_tightening_2020}'s Lemma 5}]\label{lma:uci_basic}
        Let $Y_1, \ldots, Y_t$ be a sequence of $t$ i.i.d. real-valued random variables with mean $\mu$, such that $Y_t- \mu$ is $\sigma$-sub-Gaussian. Let $\mu_t=\frac{1}{t}\sum_{s=1}^t Y_s$ be the empirical mean estimate. Then, for all $\sigma\in(0,1)$, it holds
        \begin{equation*}
            \mathbb{P}\left(\exists t \in \mathbb{N}, \abs{\mu_t - \mu} \ge \sigma \sqrt{\left(1+\frac{1}{t}\right)\frac{2\log(\sqrt{t+1}/\delta)}{t}}\right) \le \delta.
        \end{equation*}
    \end{lemma}

    Note that $X_k\in [0,1]$ is $1/2$-sub-Gaussian. Let $\sigma\gets 1/2$, $t\gets \tau_{k,t}$ and $\delta\gets \delta/2$ in Lemma~\ref{lma:uci_basic} we have
    \begin{equation*}
        \mathbb{P}(\exists \tau_{k,t}\in \mathbb{N}_+, \abs{\hat{\mu}_{k,t} - \mu_k} \ge \phi(\tau_{k,t},\delta) ) \le \delta/2,
    \end{equation*}
    where \begin{equation*}
        \phi(\tau_{k,t}, \delta) =\sqrt{\left(1+\frac{1}{\tau_{k,t}}\right)\frac{\log (2\sqrt{\tau_{k,t}+1}/\delta)}{2\tau_{k,t}}}
    \end{equation*}
    as we defined in the lemma.
    Then, the complementary event's probability is lower bounded as follows
    \begin{equation}\label{eq:mu_uci}
        \mathbb{P}(\forall \tau_{k,t}\in \mathbb{N}_+, \abs{\hat{\mu}_{k,t} - \mu_k} \le \phi(\tau_{k,t},\delta) ) \ge 1- \delta/2.
    \end{equation}
    Similarly, with a $1/m_k$ scaling for $\hat{\nu}_{k,t}$, we have
    \begin{equation}\label{eq:nu_uci}
        \mathbb{P}(\forall \iota_{k,t}\in \mathbb{N}_+, \abs{\hat{\nu}_{k,t} - m_k\mu_k} \le m_k\phi(\iota_{k,t},\delta) ) \ge 1- \delta/2.
    \end{equation}

    The confidence intervals of Eq.(\ref{eq:mu_uci}) and Eq.(\ref{eq:nu_uci}) are as follows
    \begin{align}
        {\mu}_{k} &\in \left[\hat{\mu}_{k,t} - \phi(\tau_{k,t},\delta), \hat{\mu}_{k,t} + \phi(\tau_{k,t},\delta)\right],\label{eq:mu_uci_interval}\\
        m_k\mu_k &\in \left[\hat{\nu}_{k,t} - m_k\phi(\iota_{k,t},\delta), \hat{\nu}_{k,t} + m_k\phi(\iota_{k,t},\delta)\right].  \label{eq:nu_uci_interval}          
    \end{align}
    Then we rearrange term of (\ref{eq:nu_uci_interval})'s right hand side and get
     \[
        m_k \le \frac{\hat{\nu}_{k,t}}{\mu_k - \phi(\iota_{k,t}, \delta)}\le \frac{\hat{\nu}_{k,t}}{\hat{\mu}_{k,t}-\phi(\tau_{k,t},\delta)-\phi(\iota_{k,t},\delta)},
    \]
    where the second inequality is obtained by substituting \(\mu_k\) with its lower bound in (\ref{eq:mu_uci_interval}).
    Similarly, we can also obtain the lower confidence bound of \(m_k\) via the same technique. 
    Both together leads to the following interval 
    \begin{equation*}\begin{split}
        &m_k\in\\
        &\left[\frac{\hat{\nu}_{k,t}}{\hat{\mu}_{k,t}+\phi(\tau_{k,t},\delta) + \phi(\iota_{k,t},\delta)}, 
        \frac{\hat{\nu}_{k,t}}{\hat{\mu}_{k,t}-\phi(\tau_{k,t},\delta)-\phi(\iota_{k,t},\delta)}\right]. 
    \end{split}
    \end{equation*}

    Finally, applying the union bound to Eq.(\ref{eq:mu_uci}) and Eq.(\ref{eq:nu_uci}), we have
    \begin{equation*}
        \begin{split}        
            \mathbb{P}\Bigg(\forall \tau_{k,t}, \iota_{k,t} \in \mathbb{N}_+, m_k\in
            \Bigg[\frac{\hat{\nu}_{k,t}}{\hat{\mu}_{k,t}+\phi(\tau_{k,t},\delta) + \phi(\iota_{k,t},\delta)}, \\
            \frac{\hat{\nu}_{k,t}}{\hat{\mu}_{k,t}-\phi(\tau_{k,t},\delta)-\phi(\iota_{k,t},\delta)}\Bigg]\Bigg) 
            \ge 1-\delta.
        \end{split}
    \end{equation*}
\end{proof}


From Lemma~\ref{lma:confidence_bound}, we can calculate the shareable resources \(m_k\)'s lower and upper confidence bounds as follows:
\begin{align}
    & m_{k,t}^l = \\
    &\max\left\{m_{k,t-1}^l, \ceil{\frac{\hat{\nu}_{k,t}}{\hat{\mu}_{k,t} + \phi(\tau_{k,t}, \delta) + \phi(\iota_{k,t}, \delta)}}\right\},\nonumber\\
    &m_{k,t}^u = \\
    & \min\left\{m_{k,t-1}^u, \floor{\frac{\hat{\nu}_{k,t}}{\hat{\mu}_{k,t} - \phi(\tau_{k,t}, \delta) - \phi(\iota_{k,t}, \delta)}}\right\}.\nonumber
\end{align}

\begin{corollary}[Sample Complexity]\label{cor:m_sample_complexity}
    For any arm $k$ and \(0<\delta \le 2\exp(-49m_k^2 / \mu_k^2)\),
    the inequality
    $
        \mathbb{P}({\hat{m}_{k,t}} = m_k) \ge 1 - \delta
    $ holds if \[
        \tau_{k,t}, \iota_{k,t} \geq
        \frac{49m_k^2\log (2/\delta)}{\mu_k^2}.
    \]
\end{corollary}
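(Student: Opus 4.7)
The plan is to derive the sample complexity as a direct consequence of the uniform confidence interval in Lemma~\ref{lma:confidence_bound}. On the high-probability event of Eq.(\ref{eq:confidence_bound}), $m_k$ is sandwiched between $L \coloneqq \hat{\nu}_{k,t}/(\hat{\mu}_{k,t}+\phi(\tau_{k,t},\delta)+\phi(\iota_{k,t},\delta))$ and $U \coloneqq \hat{\nu}_{k,t}/(\hat{\mu}_{k,t}-\phi(\tau_{k,t},\delta)-\phi(\iota_{k,t},\delta))$. Since the estimate $\hat{m}_{k,t}$ built from Eqs.~(\ref{eq:lower_bound_update})--(\ref{eq:upper_bound_update}) uses $\lceil L\rceil$ and $\lfloor U\rfloor$, my strategy is to show $U-L<1$ under the stated sample-size hypothesis: then the integer $m_k\in[L,U]$ is the unique integer in the interval, and both one-sided estimates must return exactly $m_k$, so $\hat{m}_{k,t}=m_k$ on this event.

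Next I would bound the interval width $U-L=2\hat{\nu}_{k,t}\phi/(\hat{\mu}_{k,t}^2-\phi^2)$, writing $\phi \coloneqq \phi(\tau_{k,t},\delta) + \phi(\iota_{k,t},\delta)$. On the confidence event I can use $\hat{\nu}_{k,t}\le m_k(\mu_k+\phi(\iota_{k,t},\delta))$ and $\hat{\mu}_{k,t}\ge \mu_k - \phi(\tau_{k,t},\delta)$ coming from Eqs.~(\ref{eq:mu_uci_interval})--(\ref{eq:nu_uci_interval}) to replace the empirical quantities by their population counterparts. The problem then reduces to verifying that each $\phi(x,\delta)$ is small relative to $\mu_k/m_k$ when $x\ge 49m_k^2\log(2/\delta)/\mu_k^2$, so that $U-L$ is controlled by a quantity of order $m_k\phi/\mu_k$ that is strictly below $1$.

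The heart of the proof is therefore estimating $\phi(x,\delta)=\sqrt{(1+1/x)\log(2\sqrt{x+1}/\delta)/(2x)}$ under this sample-size lower bound. I plan to split $\log(2\sqrt{x+1}/\delta)=\log(2/\delta)+\tfrac{1}{2}\log(x+1)$ and exploit the side hypothesis $\delta\le 2\exp(-49m_k^2/\mu_k^2)$, which is equivalent to $\log(2/\delta)\ge 49m_k^2/\mu_k^2$. This makes $x$ itself dominate $\log(2/\delta)$, so the $\log(x+1)$ term can be absorbed into a constant multiple of $\log(2/\delta)$; substituting the lower bound on $x$ then yields $\phi(x,\delta)^2\le \mu_k^2/(c\,m_k^2)$ for some constant $c$, and the arithmetic should balance exactly at the threshold $49$ so that $U-L<1$ follows.

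The main obstacle I anticipate is purely bookkeeping: tracing constants cleanly through the chain of inequalities $(1+1/x)\le 2$, $\log(x+1)\le c\log(2/\delta)$, $\hat{\nu}_{k,t}/\hat{\mu}_{k,t}^2\le 2m_k/\mu_k$, and $\hat{\mu}_{k,t}^2-\phi^2\ge \hat{\mu}_{k,t}^2/2$ so that they compose into the stated threshold. In particular, it will be important to verify that the side condition $\delta\le 2\exp(-49m_k^2/\mu_k^2)$ is exactly what is needed to close the loop and that the constant $49$ is tight enough to accommodate the loose factors picked up above; no conceptual difficulty is expected beyond this calibration.
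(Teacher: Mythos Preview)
Your plan is essentially the paper's own argument: invoke Lemma~\ref{lma:confidence_bound}, force the confidence interval width below~$1$, replace $\hat\nu_{k,t}$ and $\hat\mu_{k,t}$ by their population bounds from Eqs.~(\ref{eq:mu_uci_interval})--(\ref{eq:nu_uci_interval}), and reduce to a smallness condition on $\phi(\tau_{k,t},\delta)$ and $\phi(\iota_{k,t},\delta)$. The one place where the paper is sharper is the algebra: instead of your crude estimates $(1+1/x)\le2$, $\hat\mu_{k,t}^2-\phi^2\ge\hat\mu_{k,t}^2/2$, etc., the paper writes the width condition as the quadratic $\phi^2+2\hat\nu_{k,t}\phi-\hat\mu_{k,t}^2\le0$, substitutes the bounds, and \emph{factors} it to obtain the linear sufficient condition $(2m_k+2)\phi(\tau_{k,t},\delta)+(2m_k+1)\phi(\iota_{k,t},\delta)\le\mu_k$; this is what produces the clean threshold $\phi\le\mu_k/(7m_k)$ and hence the constant $49=7^2$. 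With your rougher chain of inequalities the scheme still works, but you would likely land on a constant larger than~$49$, so if you want the statement exactly as written you should adopt the factoring step rather than the loose bounds you list.
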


\begin{proof}[Proof of Corollary~\ref{cor:m_sample_complexity}]
    From Lemma~\ref{lma:confidence_bound} and $m_k\in\mathbb{N}_+$, we learn $m_k$ before the interval width is less than $1$. That is, 
    \begin{equation*}
        \begin{split}
            \frac{\hat{\nu}_{k,t}}{\hat{\mu}_{k,t}-\phi(\tau_{k,t},\delta)-\phi(\iota_{k,t},\delta)}
            -
            \frac{\hat{\nu}_{k,t}}{\hat{\mu}_{k,t}+\phi(\tau_{k,t},\delta) + \phi(\iota_{k,t},\delta)}\\
            \le 1.            
        \end{split}
    \end{equation*}
    It reduces to 
    \[
    \begin{split}
        (\phi(\tau_{k,t},\delta) + \phi(\iota_{k,t},\delta))^2 +2\hat{\nu}_{k,t}(\phi(\tau_{k,t},\delta)+\phi(\iota_{k,t},\delta)) -\hat{\mu}_{k,t}^2 \\
        \le 0.        
    \end{split}
    \]
    Replace $\hat{\nu}_{k,t}$ and $\hat{\mu}_{k,t}$ with their confidence lower and upper bounds respectively, we further have 
    \[
        \begin{split}
            &(\phi(\tau_{k,t},\delta) + \phi(\iota_{k,t},\delta))^2  -(\mu_k + \phi(\tau_{k,t},\delta))^2 \\
            &+2(m_k(\mu_k + \phi(\iota_{k,t},\delta)))(\phi(\tau_{k,t},\delta)+\phi(\iota_{k,t},\delta))\le 0.            
        \end{split}
    \] 

    Rearrange the terms, it becomes 
    \[ 
        \begin{split}
            ((2m_k+2)\phi(\tau_{k,t},\delta) + (2m_k+1)\phi(\iota_{k,t},\delta) -\mu_k )\\
            \times (\mu_k + \phi(\iota_{k,t}))\le 0.        
        \end{split}
    \]

    As the term \((\mu_k + \phi(\iota_{k,t}))\) in LHS is positive, we finally have 
    \[
        (2m_k + 2)\phi(\tau_{k,t},\delta) + (2m_k + 1)\phi(\iota_{k,t},\delta) \le \mu_k.
    \]

    One solution is to require both $\phi(\tau_{k,t},\delta)$ and $\phi(\iota_{k,t},\delta)$ no greater than  $\frac{\mu_k}{7m_k}$. 
    Solving these, we have
    \begin{equation*}
        \tau_{k,t},\iota_{k,t} \ge \frac{49m_k^2\log (2/\delta)}{\mu_k^2},
    \end{equation*}
    where \(\delta \le 2\exp(-49m_k^2 / \mu_k^2)\).
\end{proof}

\end{document}